\definecolor{red}{HTML}{E51400}  
\definecolor{blue}{HTML}{0050EF} 
\definecolor{green}{HTML}{008A00} 
\definecolor{purple}{HTML}{AA00FF} 
\definecolor{dark-red}{rgb}{0.4, 0.15, 0.15}
\definecolor{dark-blue}{rgb}{0.15, 0.15, 0.4}
\definecolor{medium-red}{rgb}{0.5, 0, 0}
\definecolor{medium-blue}{rgb}{0, 0, 0.5}
\definecolor{light-red}{rgb}{0.7, 0, 0}
\definecolor{light-blue}{rgb}{0, 0, 0.7}
\newtheorem{lemma}{\bf Lemma}
\newtheorem{proposition}{\bf Proposition}
\newtheorem{definition}{\bf Definition}
\newtheorem{assumption}{\bf Assumption}
\definecolor{red}{HTML}{E51400} 
\definecolor{blue}{HTML}{0050EF} 
\definecolor{green}{HTML}{008A00} 
\definecolor{purple}{HTML}{AA00FF} 
\definecolor{orange}{HTML}{FF7F00}
\definecolor{gray}{HTML}{848482}
\DeclareMathOperator*{\argmax}{arg\,max}
\newcommand{\E}{\mathbb{E}}
\newcommand{\R}{\mathbb{R}}
\newcommand{\bT}{\mathbb{T}}
\newcommand{\bA}{\boldsymbol{A}}
\newcommand{\bb}{\boldsymbol{b}}
\newcommand{\bB}{\boldsymbol{B}}
\newcommand{\bC}{\boldsymbol{C}}
\newcommand{\bD}{\boldsymbol{D}}
\newcommand{\bH}{\boldsymbol{H}}
\newcommand{\bh}{\boldsymbol{h}}
\newcommand{\bI}{\boldsymbol{I}}
\newcommand{\bM}{\boldsymbol{M}}
\newcommand{\bN}{\boldsymbol{N}}
\newcommand{\bS}{\boldsymbol{S}}
\renewcommand{\bT}{\boldsymbol{T}}
\newcommand{\bu}{\boldsymbol{u}}
\newcommand{\bV}{\boldsymbol{V}}
\newcommand{\bx}{\boldsymbol{x}}
\newcommand{\cB}{\mathcal{B}}
\newcommand{\cC}{\mathcal{C}}
\newcommand{\cD}{\mathcal{D}}
\newcommand{\cE}{\mathcal{E}}
\newcommand{\cG}{\mathcal{G}}
\newcommand{\cH}{\mathcal{H}}
\newcommand{\cN}{\mathcal{N}}
\newcommand{\cQ}{\mathcal{Q}}
\newcommand{\cT}{\mathcal{T}}
\newcommand{\cU}{\mathcal{U}}
\newcommand{\btheta}{\boldsymbol{\theta}}
\newcommand{\norm}[1]{\left\lVert #1\right\rVert}
\newcommand{\prdct}[1]{\left\langle #1\right\rangle}
\newcommand{\compilefullversion}{true}
	\newcommand{\OnlyInFull}[1]{}
	\newcommand{\OnlyInShort}[1]{#1}
	\newcommand{\OnlyInFull}[1]{#1}%
	\newcommand{\OnlyInShort}[1]{}%
\newcommand{\compilehidecomments}{true}
	\newcommand{\wei}[1]{}
	\newcommand{\xutong}[1]{}
	\newcommand{\jinhang}[1]{}
	\newcommand{\tong}[1]{}
\newcommand{\wei}[1]{{\color{blue}{\small{\bf [Wei: #1]}}}}
\newcommand{\xutong}[1]{{\color{green} [#1]}}
\newcommand{\tong}[1]{{\color{blue}{\small{\bf [Tong: #1]}}}}
\title{Federated Online Clustering of Bandits}
\author[1]{\href{mailto:<liuxt@cse.cuhk.edu.hk>}{Xutong Liu}}
\author[2]{\href{mailto:<zhaohaoru@sjtu.edu.cn>}{Haoru Zhao}}
\author[3]{\href{mailto:<tyu@adobe.com>}{Tong Yu}}
\author[2]{\href{mailto:<shuaili8@sjtu.edu.cn>}{Shuai Li\thanks{Correspondence to: Shuai Li <shuaili8@sjtu.edu.cn>}}}
\author[1]{\href{mailto:<cslui@cse.cuhk.edu.hk>}{John C.S. Lui}}
\affil[1]{%
    The Chinese University of Hong Kong\\
    Hong Kong SAR, China
}
\affil[2]{%
    Shanghai Jiao Tong University\\
    Shanghai, China
}
\affil[3]{%
    Adobe Research\\
    San Jose, CA, USA
  }
\begin{document}
\maketitle

\begin{abstract}

Contextual multi-armed bandit (MAB) is an important sequential decision-making problem in recommendation systems. A line of works, called the clustering of bandits (CLUB), utilize the collaborative effect over users and dramatically improve the recommendation quality. Owing to the increasing application scale and public concerns about privacy, there is a growing demand to keep user data decentralized and push bandit learning to the local server side. Existing CLUB algorithms, however, are designed under the centralized setting where data are available at a central server. We focus on studying the federated online clustering of bandit (FCLUB) problem, which aims to minimize the total regret while satisfying privacy and communication considerations. We design a new phase-based scheme for cluster detection and a novel asynchronous communication protocol for cooperative bandit learning for this problem. To protect users' privacy, previous differential privacy (DP) definitions are not very suitable, and we propose a new DP notion that acts on the user cluster level. We provide rigorous proofs to show that our algorithm simultaneously achieves (clustered) DP, sublinear communication complexity and sublinear regret. Finally, experimental evaluations show our superior performance compared with benchmark algorithms.

\end{abstract}

\section{Introduction}
Stochastic multi-armed bandit (MAB) \citep{auer2002finite} is a well-known sequential decision-making problem, where a learner sequentially selects actions so as to maximize the cumulative rewards (or minimize the cumulative regret). 
One fruitful application area of MAB is the online recommendation systems (RecSys)~\citep{chu2011contextual, abbasi2011improved, gentile2014online,li2019improved, zhang2020conversational, li2021unifying},
where MAB algorithms provide
a principled way to handle the challenge of exploration-exploitation trade-off~\citep{lattimore2020bandit}.

To advance the bandit algorithm for large-scale applications, contextual linear bandits add the simple yet effective linear structure assumptions on actions and reward functions~\citep{chu2011contextual,li2010contextual, abbasi2011improved}.
One limitation, however, is that such a model mainly works in a content-dependent manner, ignoring the often used tool of collaborative filtering.
To address this issue, the clustering of bandits (CLUB) are proposed~\citep{gentile2014online, li2016collaborative, li2018online,li2019improved}.
The CLUB algorithms adaptively cluster similar users and utilize the collaborative information given by the cluster structure, which dramatically improves the recommendation quality.

While most existing bandit algorithms are designed under a centralized setting,
in response to the increasing application scale and public concerns about privacy, there is a growing demand to keep user data decentralized and push the learning of bandit models to the client or the local server side \tong{This claim is very strong. Please cite. kairouz2021advances is not a bandit paper. Can we cite papers to support this claim in bandit papers?}. 
This paradigm is now known as federated learning~\citep{kairouz2021advances}.
Owing to its overall applicability, there has been a surge of interest in studying federated MAB~\citep{dubey2020differentially, zhu2021federated,shi2021federated}, which promises cooperative bandit learning with larger amounts of data (across multiple local servers) while keeping the data decentralized.
This motivates us to study the CLUB problem to its federated counterpart, i.e., the federated clustering of bandits (FCLUB).

In FCLUB, each local server can conduct its own local clustering of bandit algorithms.
To enable the collaborative effects of users across different servers, the local server could also collaborate with other local servers under the coordination of a global server, whose communication needs to satisfy specific privacy and communication requirements.
The goal of this work is to design an federated online clustering of bandit framework, so as to minimize the $T$-round regret under the privacy protection requirements and communication cost considerations.

The key challenge of FCLUB is designing collaborative bandit learning procedures and cluster detection strategies to identify the overall cluster structures across different local servers, where each local server only holds part of the users with unknown interests.
Such a problem is more challenging due to the following privacy and communication cost requirements, which are two first-order requirements for any federated applications~\citep{kairouz2021advances}.

\textbf{Privacy protection:} To reduce the privacy leakage of each user, we expect local servers to only share user clusters' data instead of individuals' raw data.
In addition, we still need a mechanism to protect the uploaded (cluster) information against possible adversaries outside the local server, for which we adopt the solution concept of differential privacy (DP).
However, the off-the-shelf DP notion is defined on individual users, hence unsuitable for FCLUB.
It is challenging and unclear what is a suitable notion of privacy over the clustering of users and how to devise algorithms to guarantee the corresponding privacy requirements.

\textbf{Communication:} Communication is critical for collaborative learning, but may also be expensive or time-consuming. For FCLUB, it is desired to minimize the total regret while keeping the communication costs (in terms of communication rounds between the global server and local servers) as low as possible.
Another requirement is to design an asynchronous communication protocol incorporating the randomly arriving users and possibly lagging servers, preventing commonly used synchronous protocols~\citep{dubey2020differentially}.

\tong{Although this paragraph seems to describe the limitations of CLUB, the sentences below are still general and not uniquely related to CLUB. May improve by adding words `cluster' in suitable places of these sentences. E.g., cluster detection procedure is mentioned above once, but never mentioned again in this paragraph, which make all pieces a bit disconnected.}

\subsection{Our Contributions}
To address the aforementioned challenges, this paper makes four contributions.

\textbf{1. Problem Formulation:} We propose the setting of online clustering of bandits to its federated counterpart, which considers the privacy protection and communication requirements.
We also propose a novel cluster differential privacy (CDP) notion tailored for the FCLUB setting.

\textbf{2. Algorithm Design:}
\tong{Cluster Structure Detection seems to be the key challenge, but is missing here. Which parts of our methodologies are motivated by this challenge?}
We propose a private and communication-efficient FCLUB-CDP algorithm. For privacy protection, a tree-based privatizer is designed to guarantee our proposed CDP.
For communication efficiency, we follow the phase-based principle for cluster detection and propose the asynchronous communication protocol for delayed information sharing.
In particular, each local server maintains upload/download buffers and occasionally uploads/downloads the buffered information to/from the global server only if it finds the latest information deviates too far from the last update.

\textbf{3. Theoretical Analysis:} We prove that FCLUB-CDP achieves the $O(dL\sqrt{mT\frac{\log(1/\delta)}{\varepsilon}}\log^{1.5} T)$ regret bounds, $O(dmL\log T)$ communication costs and $(\varepsilon, \delta, L,m)$-CDP privacy guarantee, respectively. 

\textbf{4. Experiments:} We conduct extensive experiments over synthetic and real-world datasets to validate our theoretical analysis. Empirical results show the superior performance of our algorithm over existing algorithms.\footnote{Codes and datasets are available at  \href{https://github.com/ZhaoHaoRu/Federated-Clustering-of-Bandits}{GitHub}.}

\subsection{Related Work}\label{sec: related work}
\textbf{Online Clustering of Bandits.} The online clustering bandits is first proposed by \citet{gentile2014online} and shows its effeteness by accelerating the learning process of contextual bandits. The key idea is to use a graph representing the user similarity and adaptively refine the user clusters for information sharing.
This work has been extended by a series of works considering the collaborative effects on both users and items~\citep{li2016collaborative}, the context-aware settings~\citep{gentile2017context}, the cascading bandit setting~\citep{li2018online} and the users with different user frequency~\citep{li2019improved}.
However, none of these works consider the privacy constraints and communication cost requirements imposed by the FL paradigm like the current work, and therefore cannot give guarantees on these two critical criteria.
\citet{korda2016distributed} considers the peer-to-peer but non-private clustering of bandits, our work studies the private bandit setting under the orchestration of a global server, which requires different algorithms and analysis.


\textbf{Federated and Distributed Bandits.} There has been growing interest in bandit learning with multiple players.
One line of research investigates the competitive agents with collisions~\citep{anandkumar2011distributed,rosenski2016multi,bistritz2018distributed,boursier2019sic}, in which the reward for an arm is zero if it is chosen by more than one agent. The goal of these works is to minimize regret without communication, which is different from ours.
The cooperative distributed bandits are most related to our work, in which multiple agents collaborate to solve a bandit problem over certain communication networks, e.g., peer-to-peer networks~\citep{korda2016distributed} or client-server networks~\citep{dubey2020differentially,li2021asynchronous}.
Our work belongs to the client-server setting, but we differ from both \citet{dubey2020differentially} and  \citet{li2021asynchronous} since neither of them considers the clustering effects of users.

\textbf{Differential Privacy.} Our work leverages on \textit{differential privacy}, a rigorous mathematical framework of privacy first proposed by \citet{dwork2006calibrating}.
We utilize several useful techniques from the standard differential privacy to maintain our cluster differential privacy condition.
Most notably, we use a tree-based algorithm
which is introduced in \citet{chan2011private} to realize differential privacy for the continual release of statistics.
In the single-agent bandit setting, \citet{shariff2018differentially} also utilizes this tree-based algorithm to achieve Joint DP, which is then extended by \citet{dubey2020differentially} to the federated setting.
The closest work to ours is \citet{dubey2020differentially} and they study the simpler case where each local server only holds one user and all users are identical (with the same unknown preference vector), hence gives the different user-level DP definition with different privatizer and analysis.

\textit{To the best of our knowledge, this paper is the first to generalize the CLUB to its federated setting, which simultaneously achieves privacy protection and communication requirements.}

\section{Problem Settings}\label{sec: problem setting}

In this section, we formulate the setting of ``Federated Clustering of Bandits'' (FCLUB). We use $[n]$ to represent set $\{1,...,n\}$. We use boldface lowercase letters and boldface capitalized letters for column vectors and matrices, respectively. For the norms, $\norm{\bx}$ denotes the $\ell_2$ norm of vector $\bx$. For any symmetric positive semi-definite (PSD) matrix $\bM$ (i.e., $\bx^{\top} \bM \bx \ge 0, \forall \bx$), $\norm{\bx}_{\bM}=\sqrt{\bx^{\top} \bM \bx}$ denotes the matrix norm of $\bx$ regarding matrix $\bM$.

At the global level, there are $n$ users, denoted by the set $\cU=\{u_1, ..., u_n\}$.
Each user $i \in \cU$ has an \textit{unknown} preference vector $\btheta_i \in \R^d$ and for simplicity we assume $\norm{\btheta_i}\le 1$.
Since users may have the same/similar preference vector, we assume there exists $m$ (unknown) different preference vectors, i.e., $|\{\btheta_1, ..., \btheta_n\}|=m$.
Users with the same preference vector form an underlying cluster and we denote these $m$ (unrevealed) clusters by $\cC=\{C_1, ..., C_m\}$.
Different from CLUB, users in FCLUB are distributed in $L$ local servers denoted by $\{1,...,L\}$.
At the local level, the local server $\ell$ contains $n^\ell$ users $\cU^\ell=\{u^{\ell}_1,..., u^{\ell}_{n^\ell}\}$ (with $\bigcup_{\ell \in [L]}\cU^\ell=\cU$) and similarly, these $n^\ell$ users form local cluster $\cC^\ell=\{C^\ell_1, ..., C^\ell_{m_{\ell}}\}$ where $m_{\ell} \le m$.

The learning agent interacts with the bandit game as the follows.
At each time $t$, a user $i_t \in [n]$ randomly arrives with probability $1/n$.
Then $K$ items are generated to form a item set $\bD_t$, where the feature of each item $\bx \in \bD_t$ is drawn independently from a fixed but unknown distribution $\rho$ over $\{\bx \in \R^d: \norm{\bx}\le 1\}$.
The learning agent identifies the local server $\ell_t$ that $i_t$ belongs to and the current user cluster $j_t$ (detected by our algorithm) which $i_t$ lies in.
The local server then recommends an item $\bx_{t} \in \bD_t$ to the user based on the aggregated information from cluster $j_t$.
After $i_t$ receives the recommendation, the learning agent receives a random reward $y_t \in [0,1]$.
Let $\cH_t=\{i_1, \bx_1, y_1,..., i_{t-1}, \bx_{t-1}, y_{t-1}, i_t\}$ be the historical information before time $t$.
We assume the expectation of reward $y_t$ is linear in the feature vector $\bx \in \bD_t$ and the unknown preference vector $\btheta_{i_t}$, i.e., $\E_t[y_t|\bx]=\btheta_{i_t}^{\top}\bx$, and $\{y_t-\btheta_{i_t}^{\top}\bx\}_{t=1,2,...}$ have sub-Gaussian tails $\sigma_0^2$. 

Now we give some assumptions on preference vectors and item feature vectors. Note that all the assumptions follow the previous works~\cite{gentile2014online,gentile2017context,li2018online, li2019improved}.
\begin{assumption}[Gap between preference vectors]
For any two different preference vectors $\btheta_{i_1} \neq \btheta_{i_2}$, there is a fixed but unknown gap $\gamma>0$ so that $\norm{\btheta_{i_1}-\btheta_{i_2}} \ge \gamma$.\footnote{As previous works, this assumption can be relaxed by assuming the existence of two thresholds, one for the between-cluster distance $\gamma$, the other for the within-cluster distance $\norm{\btheta_{i_1}-\btheta_{i_2}}\le\eta$. }
\end{assumption}

\begin{assumption}[Item regularity]
For item distribution $\rho$, there exists a known $\lambda_x > 0$ so that $\E_{\bx \sim \rho}[\bx \bx^{\top}]$ is full rank with minimal eigenvalue $\lambda_x$. Meanwhile, for all time $t$, for any fixed unit vector $\btheta \in \R^d$, $(\btheta^{\top} \bx)^2$ has sub-Gaussian tail with variance $\sigma^2 \le \frac{\lambda_x}{8 \log (4K)}$.
\end{assumption}

\textbf{Learning Efficiency.} The goal of the learning agent is to accumulate as much reward as possible.
Let the optimal item for user $i_t$ at time $t$ be $\bx^*_{i_t}=\argmax_{\bx \in \bD_t} \btheta_{i_t}^{\top}\bx$.
The learning performance is measured by the regret, defined as 
\begin{equation}
    R(T)=\E[\sum_{t=1}^T r_t]= \E[\sum_{t=1}^T(\btheta_{i_t}^{\top}\bx^*_{i_t}-\btheta_{i_t}^{\top}\bx_t)],
\end{equation}
where $r_t$ is the regret at time $t$ and the expectation is taken over the randomness of the algorithm and the environment regarding the users $i_1, ..., i_T$ and the item sets $D_1, ..., D_T$.

In addition to the regret, privacy protection and the communication cost are two important criterion in federated learning.
In this work, we aim to ensure that the user data are protected under privacy constraints and the communication complexity is low.

\textbf{Privacy Requirements.}
To protect the user data, we introduce two privacy requirements.
First, we desire the local server only uploads the user clusters' sufficient statistics (or clustered data) for the learning procedure, instead of individual users' raw data.
Second, to protect the clustered data against third-party adversaries outside the local server, we adopt the notion of DP, which encodes the intuition that any observable output changes very little (in probability) when any input datum changes.
Since existing DP notions are defined on \textit{user-level} data~\cite{shariff2018differentially, dubey2020differentially}, we introduce a new differential privacy requirement to protect the \textit{cluster-level} data.

The contextual MAB problem involves two sets of variables against the adversaries outside the local server: the decision sets $\bD_t$ and the observed rewards $y_t$.
Since the users only receive and store observations regarding the chosen action $\bx_t$ and the observed reward $y_t$, it suffices to protect $(\bx_t,y_t)_{t\in [T]}$ to achieve DP requirements.
Let $\tau_{\ell,j}$ be time slots when user in cluster $j$ appears at local server ${\ell}$, we denote two sequences $S_{\ell,j}=(\bx_{t},y_{t})_{t \in \tau_{\ell,j}}$ and $S_{\ell,j}'=(\bx'_{t},y'_t)_{t \in \tau_{\ell,j}}$ as $t$-neighboring if $(\bx_{\tau},y_{\tau})=(\bx'_{\tau},y'_{\tau})$ for $\tau \neq t \in \tau_{\ell,j}$. 

\begin{definition}[Cluster Differential Privacy]\label{def:cdp} In the FCLUB setting with $L$ servers and (at most) $m$ clusters, a federated contextual bandit algorithm $A=(A_{\ell,j})_{\ell\in [L], j \in [m]}$ is $(\varepsilon, \delta, L, m)$-CDP, if for any $(\ell,j), (\ell',j')$ s.t. $(\ell,j)\neq (\ell',j')$, any $t$ and the set of sequences $\bar{S}_{\ell',j'} = \cup_{i \in [L], k \in [m]}S_{i,k}$ and $\bar{S}'_{\ell',j'}=\cup_{i \in [L]\backslash \ell', k \in [m]\backslash j'}S_{i,k} \cup S_{\ell',j'}'$ s.t. $S_{\ell',j'}$ and $S'_{\ell',j'}$ are $t$ neighboring, and for any subset of actions $a_{\ell,j} \subset \prod_{\tau \in \tau_{\ell,j}}\cD_{\tau}$ of actions, it holds that
\begin{equation}
\Pr[A_{\ell,j}(\bar{S}_{\ell',j'})\in a_{\ell,j} ] \le e^{\varepsilon}\Pr[A_{\ell,j}(\bar{S}'_{\ell',j'})\in a_{\ell,j}] + \delta.
\end{equation}
\end{definition}
Note that our CDP notion formalizes the intuition that the action chosen by any local server $\ell$ (at the cluster level $j$) must be sufficiently indistinguishable (in probability) to any single $(x,y)$ pair from any other local cluster $(\ell',j')$. Such a notion does not require each cluster is private to its own observations, i.e., each cluster of users can be trusted with its own data, which is different from the local DP~\citep{zheng2020locally} or Joint DP \citep{shariff2018differentially} that assume even itself cannot be trusted. 

\textbf{Communication Complexity.}
To evaluate the communication complexity, we count one upload operation (or one download operation) between any local server and the global server as one communication round. 
Our communication complexity is the total number of communication rounds $C(T)$ over the time horizon $T$.

\section{Algorithm}\label{sec:algorithm}
\begin{figure}
    \centering
    \includegraphics[width=0.5\textwidth]{./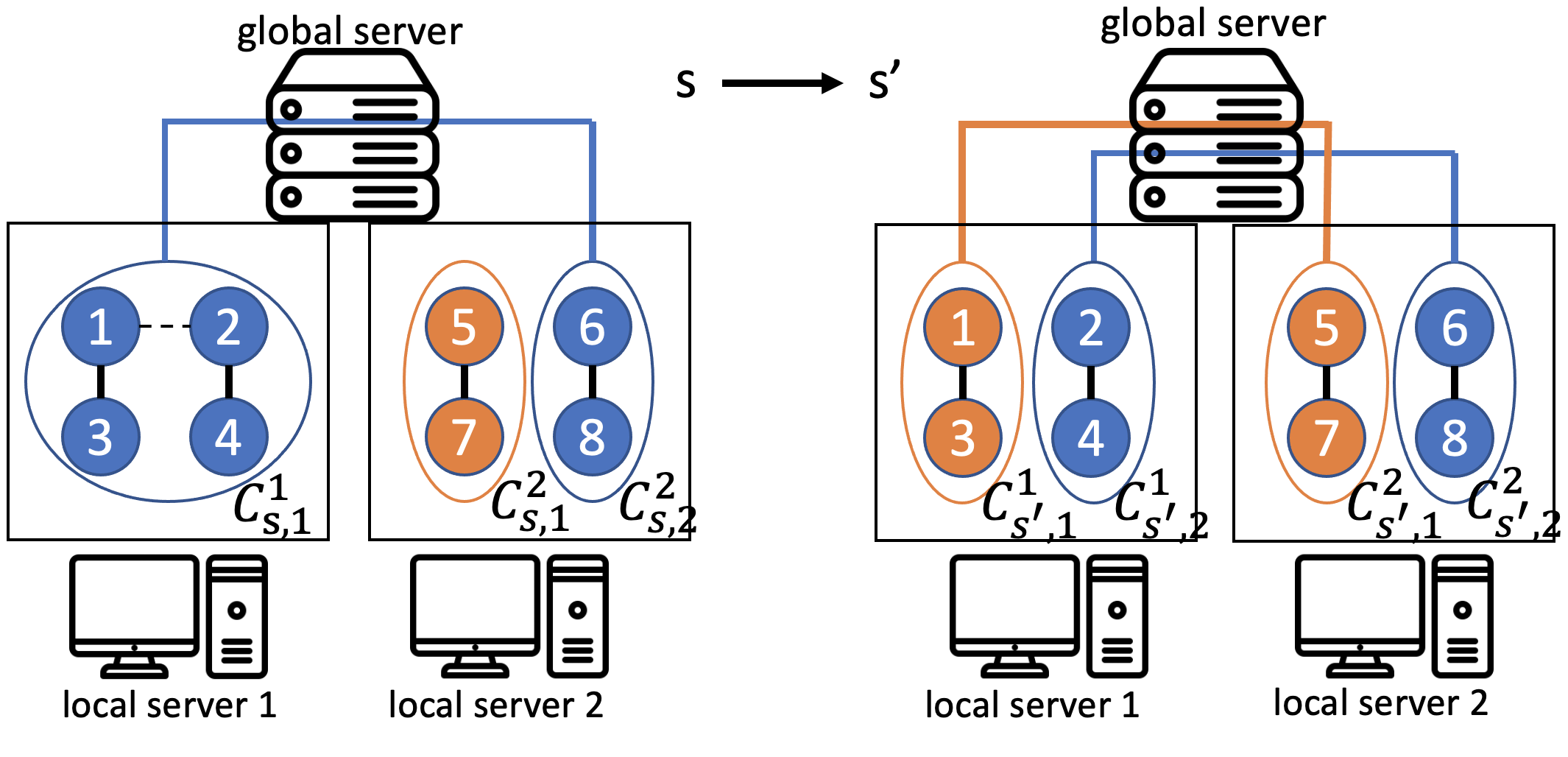}
    \caption{Illustration of how our algorithm detects clusters from phase $s$ to $s'$. Local server $1$ delete edge $(1,2)$ and split $C_{s,1}^{1}$ into $C_{s',1}^1, C_{s',2}^1$. The global server merges local clusters as $\{(C^1_{s',1}, C^2_{s',1}), (C^1_{s',2}, C^2_{s',2})\}$.}
    \label{fig:split_n_merge}
\end{figure}
In this section, we introduce our phase-based federated clustering of bandit algorithms with CDP (FCLUB-CDP).

\textbf{Identify the Underlying Cluster Structure.}
To correctly identify the cluster structure, we design a phase-based clustering detection algorithm in \Cref{alg:cluster_detect}.
The high level idea is to first conduct local-level clustering of bandits on each local server and merge the local clusters on the global server.

At the local level, each server $\ell$ maintains a profile of $(\bV_{t,i}, \bb_{t,i}, T_{t,i})$ for its own local users $i \in \cU^{\ell}$, where $V_{t,i}$ is Gramian matrix, $\bb_{t,i}$ is the moment vector of regressand by the regressors, and $T_{t,i}$ is the number of times that $i$ has appeared up to time $t$.
At the beginning of $s$ phase (or $t=2^s+1$), based on profiles $(\bV_{t,i}, \bb_{t,i}, T_{t,i})_{i \in \cU^{\ell}}$, each server $\ell$ maintains an undirected graph structure $\cG^{\ell}_s$, where nodes represent all local users $\cU^{\ell}$ and a pair of users are connected by an edge if they are similar.
We initialize the graph by a complete graph $\cG^\ell_{0}$ and gradually delete edges at every beginning of each phase $s$.
Specifically, in line~\ref{line:detect_delete}, we delete edge between any user $i_1$ and $i_2$ if the distance between their estimated preference vector are larger than the following threshold.
\begin{equation}\label{eq:edge_deletion}
    \norm{\hat{\btheta}_{t, i_1}-\hat{\btheta}_{t, i_2}} > \alpha_1 (F(T_{t, i_1})+F(T_{t,i_2})), 
\end{equation}
where $\hat{\btheta}_{t, i}=(\lambda \bI + \bV_{t, i})^{-1}\bb_{t,i}$ and $F(x)=\sqrt{\frac{1+\ln (1+x)}{(1+x)}}$.
After the deletion, users in connected components $j \in C(\cG^{\ell}_s)$ are grouped into local cluster $j$.
In line~\ref{line:detect_cluster}, server $\ell$ uploads the clustered information $I_{s,{\ell}}=(C^{\ell}_{s,j}, \tilde{\bV}^{\ell}_{s, j}, \tilde{\bb}^{\ell}_{s, j}, \tilde{T}^{\ell}_{s,j})_{j \in C(\cG^{\ell}_{s})}$ to the global server, which contains clustered information  for each cluster $j \in C(\cG^{\ell}_s)$.
Note that $I_{s,\ell}$ are added with random perturbation to protect the users' data, which will be introduced shortly after.

In line~\ref{line:detect_merge}, when the global server receives the privatized clustered information from all servers, it performs a merge operation to merge clusters from different servers whose estimated clustered preference vectors are close into a global clusters according to the following inequality.
\begin{equation}\label{eq:merge}
    \norm{\hat{\btheta}^{\ell _1}_{s, j_1}-\hat{\btheta}^{\ell_2}_{s, j_2}} < \alpha_2 (F(\tilde{T}^{\ell_1}_{s, j_1})+F(\tilde{T}^{\ell_2}_{s, j_2})),
\end{equation}
where $\hat{\btheta}^{\ell}_{s, j}= (\tilde{\bV}^\ell_{s, j})^{-1}\tilde{\bb}^\ell_{s,j}$ and $F(x)=\sqrt{\frac{1+\ln (1+x)}{(1+x)}}$.
$P_s$ denotes the set of $m_s$ global clusters.
Note that the local clusters in the same global cluster indexed by $k \in [m_s]$ will communicate and share protected clustered information with each other in an asynchronous manner.
At the beginning of phase $s$, if the new global cluster structure $P_s$ is different from $P_{s-1}$ at phase $s-1$, we will renew the shared global information $(\bS^{g}_{t,k}$, $\bu^{g}_{t,k}$, $T^{g}_{t,k})$ for $k \in [m_s]$.
For local servers $(\ell, j)$ in the same global cluster $P_{s,k}$, the local synchronized information $(\bS^{\ell}_{t,j}, \bu^{\ell}_{t,j}, T^{\ell}_{t,j})$, the upload buffers $(\Delta \bS^{\ell}_{t,j}, \Delta \bu^{\ell}_{t,j}, \Delta T^{\ell}_{t,j})$ and download buffers $(\Delta \bS^{-\ell}_{t,j}, \Delta \bu^{-\ell}_{t,j}, \Delta T^{-\ell}_{t,j})$ are renewed.
We also generate a new tree-based privatizer PVT$(\ell, j)$ for each cluster $j$ at server $\ell$, which will be introduced later on.

\begin{algorithm}[t]
	\resizebox{0.95\columnwidth}{!}{
\begin{minipage}{\columnwidth}
\begin{algorithmic}[1]
\caption{Phase-based FCLUB with CDP}\label{alg:main}
\STATE \textbf{Input:} Failure probability $\alpha$, deletion parameter $\alpha_1 > 0$, merge parameter $\alpha_2 > 0$, privacy parameters $\varepsilon, \delta$.
\STATE User initialization: For $i \in [n]$, $ \bV_{0,i}=\boldsymbol{0}_{d \times d}, \bb_{0,i}=\boldsymbol{0}_{d \times 1}, \bT_{0,i}=0$.
\STATE Local server initialization: For $\ell \in [L]$, set graph $\cG^{\ell}_0=(\cU^{\ell}, \cE^{\ell}_0)$, local information, upload buffers, download buffers: $(\bS^{\ell}_{0,1}, \bu^{\ell}_{0,1}, T^{\ell}_{0,1})=(\Delta \bS^{\ell}_{0,1},\Delta \bu^{\ell}_{0,1},\Delta T^{\ell}_{0,1})=(\Delta \bS^{-\ell}_{0,1},\Delta \bu^{-\ell}_{0,1},\Delta T^{-\ell}_{0, 1})=(\boldsymbol{0}_{d \times d},\boldsymbol{0}_{d \times 1},0)$, perturbations $(\bar{\bH}^{\ell}_{t,1},\bar{\bh}^{\ell}_{t,1})=(\bH^{\ell}_{t,1},\bh^{\ell}_{t,1})=(\boldsymbol{0}_{d \times d},\boldsymbol{0}_{d \times 1})$.
\STATE Global server initialization: create one global cluster $P_0=\{\{(1,1),..., (L, 1)\}\}$ and set the global information for $V^g_{0,1}=\boldsymbol{0}_{d \times d}$, the $\bu^g_{0,1}=\boldsymbol{0}_{d \times 1}, T^g_{0,1}=0$.
\FOR{$s=1, 2, ..., $}
\STATE Detect and adjust clusters (\Cref{alg:cluster_detect}).
\FOR{$\tau=1, ..., 2^s$}
\STATE Compute the total time step $t=2^s-2 + \tau$.
\STATE Advance all parameter, e.g., $\bS^{\ell}_{t,j}=\bS^{\ell}_{t-1,j}$.
\STATE User $i_t$ at local server $l_t$ arrives and $l_t$ gets the local cluster $j_t$ that $i_t$ belongs to based on $\cG^\ell_{s}$.
\STATE Compute local $\beta^{l_t}_{t,j_t}$ according to \Cref{lemma:cdp_beta}.

\STATE Local server $l_t$ receives feasible context set $\bD_t$ and recommends item $\bx_t=\argmax_{\bx \in \bD_t} \bx^{\top}(\bS^{l_t}_{t, j_t})^{-1}\bu^{l_t}_{t, j_t} + \beta^{l_t}_{t,j_t} \norm{\bx}_{(\bS^{l_t}_{t, j_t})^{-1}}$.\label{line:recommend}
\STATE User $i_t$ receives feedback $y_t$, update the user $i_t$'s information: $(\bT_{t,i_t}, \bV_{t,i_t}, \bb_{t, i_t}) \mathrel{{+}{=}} (1,\bx_{t} \bx_{t} ^\top, y_t \bx_{t})$
and others unchanged.
\STATE Check upload event (\Cref{alg:check_upload}).
\STATE Check download event (\Cref{alg:check_download}).
\ENDFOR
\ENDFOR
\end{algorithmic}
\end{minipage}}
\end{algorithm}

\textbf{Asynchronous Communication Protocol.}
In this work, we design a novel asynchronous communication protocol to incorporate the randomly arriving users.
To reduce the communication cost, our high-level idea is to use the delayed communication, where the feedback are temporarily stored in buffers and only if the stored information exceeds a threshold, the upload/download events are triggered.
Such a threshold will ensure that the local information will not diverge too far from the global information, which in turn will not diverge too far from the scenario when information are fully synchronized.   
Also note that our communication is conducted in the asynchronous manner at the local cluster level.
In other words, all local clusters indexed by $(\ell, j) \in P_{s,k}$ will establish connection with each other within the global cluster $k$.
For each local cluster $(\ell, j)$, it stores a local copy of the sufficient statistics $(\bS^{\ell}_{t,j}, \bu^{\ell}_{t,j}, T^{\ell}_{t,j})$ and a upload buffer $(\Delta \bS^{\ell}_{t,j}, \Delta \bu^{\ell}_{t,j}, \Delta T^{\ell}_{t,j})$.
For the global server, it prepares for each local cluster $(\ell, j)$ a download buffer $(\Delta \bS^{-\ell}_{t,j}, \Delta \bu^{-\ell}_{t,j}, \Delta T^{-\ell}_{t,j})$, which are used to send other local servers' information to the local cluster.
It also maintains the global statistics $(\bS^{g}_{t,k}, \bu^{g}_{t,k}, T^{g}_{t,k})$ to save the data uploaded from local clusters in global cluster $k$.

Our proposed communication framework consists of two components: the upload protocol (\Cref{alg:check_upload}) and the download protocol (\Cref{alg:check_download}).
For the upload protocol, at each time step $t$, user $i_t$ visits the server $l_t$ and receives recommended item $\bx_t$.
After the user $i_t$ interacts with the environment and observes feedback $(\bx_{t}, y_t)$, the local server updates the upload buffers in line~\ref{line:upload_update_obs} and checks the following condition to decide whether to upload the upload buffer:
\begin{equation}\label{eq:upload_condition}
\text{det}(\bS^{l_t}_{t, j_t}+\Delta \bS^{l_t}_{t, j_t})/{\text{det}(\bS^{l_t}_{t, j_t})}\ge U,
\end{equation}
where $\bH^{l_t}_{t,j_t}$ and $\bar{\bH}^{l_t}_{t,j_t}$ are tentative and current perturbation for privacy protection, respectively. 
If the condition is satisfied, the local server sends $(\Delta \bS^{l_t}_{t, j_t}, \bu^{l_t}_{t, j_t}, T^{l_t}_{t, j_t})$ to the global cluster $k_t$.
The global server then merges the uploaded information into the global information in line~\ref{line:upload_merge} and also sends it to download buffers for other local clusters $(\ell,j) \neq (l_t, j_t) \in P_{s,k_t}$ in line~\ref{line:upload_renew_download}.
For local cluster $(l_t,j_t)$ itself, the local server updates the local statistics and initializes the upload buffer using the newly generated perturbation in lines~\ref{line:upload_update_local} to \ref{line:upload_clear}.
For the download protocol, at each time step $t$, the global server will check the deviation between global statistics and the local statistics via following condition:
\begin{equation}\label{eq:download_condition}
    \text{det}(\bS^g_{t,k_t})/{\text{det}(\bS^{\ell}_{t, j})}\ge D
\end{equation}
independently for local clusters $(\ell, j)$ in global cluster $P_{s,k_t}$.
If any cluster $(\ell,j)$ satisfies such condition, the global server sends the information from other clusters to $(\ell,j)$, which is used to update $(\ell,j)$'s local statistics.
Finally, the global server cleans the download buffer.

\textbf{Tree-Based Privacy Protocol.}
To ensure the uploaded information are privatized, we adopt the tree-based privatizer to generate random perturbations $\bH^{\ell}_{t,j}$ and $\bh^{\ell}_{t,j}$ whenever an upload event happens.
Note that the privatier subroutine is at the local cluster level and a new privatizer is created if the cluster structure changes at the start of any phase $s$.

Let $\bx_1, ..., \bx_T$ be a (matrix-valued) sequence of length $T$, and $s_i = \sum_{t=1}^i \bx_i$ be the partial sum of the first $i$ elements that will be realised privately.
Generally speaking, the tree-based mechanism~\cite{dwork2006calibrating} maintains a binary tree $\cT$ of depth $1+ \lceil \log T \rceil$, where the leaf nodes contain the elements $\bx_i$ and the parent node maintains the sum of its children. For each node with value $n_i$, the tree-base mechanism protects privacy by adding noise $h_i$ to each node and release $n_i + h_i$ if queried.
The key advantage is that such a tree only accesses $\nu=O(\log T)$ nodes to compute and release the partial sum $s_i$, which means the perturbation is at most $O(\nu)$ instead of $O(T)$.

Following this general idea, we implement the tree-based privatizer $(\ell, j)$ that satisfies the requirements of CDP.
Recall that we only need to protect the information uploaded to the global server, it suffices to maintain a tree $\cT^{\ell}_{j}$ of depth $\nu=O(1+\lceil \log t_c \rceil)$ for the upload event, where $t_c$ is the total number of uploads.
To make the partial sums private, we insert a random noise matrix to each node in $\cT^{\ell}_j$, similar to that of \citet{shariff2018differentially} and \cite{dubey2020differentially},.
Specifically, we sample a random matrix $\bar{\bN} \in \R^{(d+1)\times (d+1)}$ where each entry $\bar{N}_{p,q}$ is drawn from i.i.d. Gaussian distribution $\cN(0, \sigma_{\text{noise}})$ and symmetrize it to get $N=(\bar{\bN}^{\top}+\bar{\bN})/\sqrt{2}$.
It follows that in order to ensure the whole tree is $(\varepsilon, \delta)$-DP, each node should preserve $(\varepsilon/\sqrt{8\nu\log (2/\delta}), \delta/2)$-DP.
In other words, it suffices to set the variance $\sigma_{\text{noise}}=64\nu \log (2/\delta)^2/\epsilon^2$ for each tree node.
Note that at each upload round $t$, the total noise added to the partial sum is the summation of at most $\nu$ random matrices with size $(d+1)\times (d+1)$, where the top-left $(d \times d)$-submatrix forms $\bH^{\ell}_{t,j}$ and the first $d$ elements from the right-most $(d+1)\times 1$ vector forms $\bh^{\ell}_{t,j}$.
By concentration of random matrices~\cite{tao2011topics}, we have with probability at least $(1-\frac{\alpha}{mL})$, the operator norm of $\bH^{\ell}_{t,j}$ is
\begin{equation}
    \norm{\bH^{\ell}_{t,j}}_{op} \le \rho \triangleq 8\sqrt{2}\nu \log (4/\delta) (4\sqrt{d} + 2 \log (2mL/\alpha))/\varepsilon.
\end{equation}
for any $\ell \in [L], j \in [m], t \in [T]$.


\textbf{Recommendation Procedure.}
At each time step $t$, the recommended item $\bx_t$ for user $i_t$ is selected as follows.
When the current cluster is correct (which is guaranteed after $O(\log T)$ rounds and to be proved later), the estimated $\hat{\btheta}_{t}=(\bS^{l_t}_{t, j_t})^{-1}\bu^{l_t}_{t, j_t}$ is computed using the local information $\bS^{l_t}_{t, j_t}$ and $\bu^{l_t}_{t, j_t}$.
Since by \Cref{lemma:cdp_beta}, $\norm{\btheta_{i_t}-\hat{\btheta}_{t}}_2 \le  \beta^{l_t}_{t,j_t}$, the confidence radius is $ \beta^{l_t}_{t,j_t}\norm{\bx}_{(\bS^{l_t}_{t, j_t})^{-1}}$, which characterizes the exploration bonus for item $\bx \in \bD_t$.
Then the local server will recommend the item $\bx_t \in \bD_t$ that maximizes the $\bx^{\top}\hat{theta}_t$ plus the above exploration bonus.
Finally, the user will receive feedback $y_t$ and the system updates corresponding statistics for better decision in future rounds.
\begin{restatable}{lemma}{BetaCi}\label{lemma:cdp_beta}
Under the setting of FCLUB and fix a local cluster $j$ located at the server $\ell$ which shares the information with $L'\le L$ clusters (including itself), let the true preference vector be $\btheta^*$ and the true cluster be $j^*$, let $\hat{\btheta}^{\ell}_{t,j}=(\bS^{\ell}_{t,j})^{-1} \bu^{\ell}_{t,j}$.
When all (global) clusters are correctly identified and partitioned, it holds with probability at least $1-2\alpha$,
\begin{equation}
    \norm{\btheta^*-\hat{\btheta}^{\ell}_{t,j}}_{\bS^{\ell}_{t,j}} \le \beta^{\ell}_{t,j},
\end{equation}
where $\beta^{\ell}_{t,j} \triangleq \beta^{\ell}_j(T^{\ell}_{t,j}, L, \alpha/(mL))= \sigma_{0}\sqrt{2\log(\frac{mL}{\alpha})+d\log(\frac{\rho_{\max}}{\rho_{\min}}+\frac{T^{\ell}_{t,j}}{dL'\rho_{\min}})} + \sqrt{L'\rho_{\max}} +\sqrt{L'}\kappa$.
\end{restatable}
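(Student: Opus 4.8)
The plan is to follow the standard self-normalized confidence-bound argument for linear bandits, but to carefully separate the contribution of the reward noise from the two biases introduced by the tree-based privatizer (the matrix perturbation $\bH^{\ell}_{t,j}$ and the vector perturbation $\bh^{\ell}_{t,j}$), while accounting for the federated aggregation over the $L'$ local clusters sharing a global cluster. First I would write the private statistics as the sum of their true and perturbed parts: aggregating over the $L'$ participating clusters, $\bS^{\ell}_{t,j}=\bV+\bM$, where $\bV=\sum_{\tau}\bx_{\tau}\bx_{\tau}^{\top}$ is the noiseless aggregated Gram matrix and $\bM$ is the total (shifted) perturbation matrix whose eigenvalues lie in $[L'\rho_{\min},L'\rho_{\max}]$ by the operator-norm guarantee $\norm{\bH^{\ell}_{t,j}}_{op}\le\rho$ established in the privacy section; similarly $\bu^{\ell}_{t,j}=\bb+\bh$. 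Because the cluster is assumed correctly identified, every observation aggregated into $(\ell,j)$ comes from a user with preference $\btheta^*$, so $\bb=\bV\btheta^*+\boldsymbol{\xi}$ with $\boldsymbol{\xi}=\sum_{\tau}\eta_{\tau}\bx_{\tau}$ the reward-noise sum. Substituting $\bV=\bS^{\ell}_{t,j}-\bM$ and simplifying gives
\begin{equation}
\hat{\btheta}^{\ell}_{t,j}-\btheta^*=(\bS^{\ell}_{t,j})^{-1}\boldsymbol{\xi}-(\bS^{\ell}_{t,j})^{-1}\bM\btheta^*+(\bS^{\ell}_{t,j})^{-1}\bh,
\end{equation}
so that, using $\norm{(\bS^{\ell}_{t,j})^{-1}\bw}_{\bS^{\ell}_{t,j}}=\norm{\bw}_{(\bS^{\ell}_{t,j})^{-1}}$, the target norm splits by the triangle inequality into $\norm{\boldsymbol{\xi}}_{(\bS^{\ell}_{t,j})^{-1}}+\norm{\bM\btheta^*}_{(\bS^{\ell}_{t,j})^{-1}}+\norm{\bh}_{(\bS^{\ell}_{t,j})^{-1}}$.

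Next I would bound each term. For the reward-noise term I would invoke the self-normalized tail bound of Abbasi-Yadkori et al.: conditioning on the privatizer's Gaussian noise, which the algorithm generates independently of the reward noise $\{\eta_{\tau}\}$, the aggregated perturbation $\bM$ acts as a fixed positive-definite regularizer, so with probability at least $1-\alpha/(mL)$ one has $\norm{\boldsymbol{\xi}}_{(\bS^{\ell}_{t,j})^{-1}}\le\sigma_0\sqrt{2\log(mL/\alpha)+\log(\det\bS^{\ell}_{t,j}/\det\bM)}$. I then control the log-determinant ratio by the determinant--trace inequality: since $\mathrm{tr}(\bV)\le T^{\ell}_{t,j}$, $\bM\succeq L'\rho_{\min}\bI$, and $\mathrm{tr}(\bM)\le dL'\rho_{\max}$, the ratio is at most $(\rho_{\max}/\rho_{\min}+T^{\ell}_{t,j}/(dL'\rho_{\min}))^d$, producing exactly the logarithmic factor of $\beta^{\ell}_{t,j}$. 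For the matrix-perturbation term I would use $\bS^{\ell}_{t,j}\succeq\bM$ (the Gram part is PSD), hence $(\bS^{\ell}_{t,j})^{-1}\preceq\bM^{-1}$ and $\norm{\bM\btheta^*}^2_{(\bS^{\ell}_{t,j})^{-1}}\le\btheta^{*\top}\bM\btheta^*\le L'\rho_{\max}\norm{\btheta^*}^2\le L'\rho_{\max}$, giving the $\sqrt{L'\rho_{\max}}$ term. For the vector-perturbation term, $\bS^{\ell}_{t,j}\succeq L'\rho_{\min}\bI$ together with the aggregated noise-vector bound from the privatizer yields the $\sqrt{L'}\kappa$ term. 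Summing the three bounds reproduces $\beta^{\ell}_{t,j}$, and a union bound over the $mL$ privacy events and the single martingale event gives the overall failure probability $2\alpha$.

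The main obstacle I anticipate is the interaction between the random privacy perturbation and the self-normalized concentration. The classical inequality requires the regularizing matrix to be fixed in advance, whereas here $\bM$ is a random, time-varying output of the tree mechanism; the argument must therefore exploit the independence between the privatizer's Gaussian noise and the reward-noise martingale, conditioning on the former before applying the concentration bound and then union-bounding over the high-probability event $\norm{\bH^{\ell}_{t,j}}_{op}\le\rho$. A secondary bookkeeping difficulty is tracking the federated aggregation: the asynchronous, buffered communication means that $\bS^{\ell}_{t,j}$ and $\bu^{\ell}_{t,j}$ collect observations and perturbations from an adaptively-determined subset of the $L'$ sharing clusters, so I must verify that $\boldsymbol{\xi}$ remains a valid martingale over that index set and that the per-cluster perturbation bounds aggregate correctly into the $[L'\rho_{\min},L'\rho_{\max}]$ eigenvalue envelope used above.
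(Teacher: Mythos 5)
Your proposal is correct and follows essentially the same route as the paper's own proof: the identical decomposition of $\btheta^*-\hat{\btheta}^{\ell}_{t,j}$ into the reward-noise, perturbation-matrix, and perturbation-vector terms via the triangle inequality, with each term bounded exactly as in the paper (the Abbasi-Yadkori self-normalized bound for the noise, $\bS^{\ell}_{t,j}\succeq\bM$ for the $\sqrt{L'\rho_{\max}}$ term, and the spectral lower bound for the $\sqrt{L'}\kappa$ term). The only cosmetic difference is in handling the randomness of the privatizer's regularizer: you condition on the Gaussian noise and keep $\det\bM$ inside the self-normalized bound before applying the determinant--trace inequality, whereas the paper sidesteps this by using the event $\cB_0(\alpha)$ to replace $\bM$ with the deterministic bound $L'\rho_{\min}\bI$ and then invoking the standard bound with a fixed regularizer --- both are standard and yield the same $\beta^{\ell}_{t,j}$.
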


\begin{algorithm}[t]
	\resizebox{0.95\columnwidth}{!}{
\begin{minipage}{\columnwidth}
\begin{algorithmic}[1]
\caption{Phase-based Cluster Detection and Adjustment}\label{alg:cluster_detect}
\STATE $t=2^s-1$.
\FOR{$\ell \in [L]$}
\STATE Set $\cG^{\ell}_{s}$ by deleting any edge $(i_1,i_2) \in \cG^{\ell}_{s-1}$ if \Cref{eq:edge_deletion} holds. \label{line:detect_delete}
\STATE For $\ell \in [L], j \in C(\cG^{\ell}_{s}) $, generate new perturbation $\bH^{\ell}_{t,j}, \bh^{\ell}_{t,j}$ using $\text{PVT}({\ell, j})$ in \Cref{alg:private} and set historical $\bar{\bH}^{\ell}_{t,j}=\bH^{\ell}_{t,j}, \bh^{\ell}_{t,j}=\bar{\bh}^{\ell}_{t,j}$. 
\STATE For $\ell \in [L]$, upload the local clustered information $I_{s,{\ell}}=(C^{\ell}_{s,j}, \tilde{\bV}^{\ell}_{s, j}, \tilde{\bb}^{\ell}_{s, j}, \tilde{T}^{\ell}_{s,j})_{j \in C(\cG^{\ell}_{s})}$ to the global server, where $(\tilde{\bV}^{\ell}_{s,j}, \tilde{\bb}^{\ell}_{s,j}, \tilde{T}^{\ell}_{s,j})= (2\rho \bI + \bH^{\ell}_{t,j},\bh^{\ell}_{t,j}, 0) +\sum_{i \in C^{\ell}_{s,j}}(\bV_{t,i},\bb_{t,i},T_{t,i})$.\label{line:detect_cluster}
\ENDFOR
\STATE The global server does global merge based on $I_s$ and get $m_s$ global clusters $P_s=\{P_{s, 1}, ..., P_{s, m_s}\}$, where the two local clusters $C^{\ell_1}_{t,j_1}$, $C^{\ell_2}_{t,j_2}$ (with $\ell_1 \neq \ell_2$) are merged together in $P_{s,k}$ if \Cref{eq:merge} holds.\label{line:detect_merge}
\IF{$s=0$ or $P_s\neq P_{s-1}$}
\STATE //Renew the cluster information.
\FOR{$k \in [m_s]$}
\STATE Set global gram matrix $(\bS^g_{t, k}, \bu^g_{t, k},T^g_{t, k})=\sum_{(\ell,j) \in P_{s,k}}(\tilde{\bV}^\ell_{s,j},\tilde{\bb}^\ell_{s,j},{\tilde{T}^\ell_{s,j}})$.
    \FOR{$(\ell,j) \in P_{s, k}$}
\STATE Set $(\bS^\ell_{t, j},b^\ell_{t, j},T^{\ell}_{t, j})=(\bS^g_{t,k}, \bb^g_{t,k}, T^g_{t,k})$.
\STATE Create new perturbation $\bH^{\ell}_{t,j}, \bh^{\ell}_{t,j}$ using $\text{PVT}({\ell, j})$ in \Cref{alg:private}.
\STATE Set new $(\Delta \bS^\ell_{t,j}, \Delta \bu^{\ell}_{t,j}, \Delta T^{\ell}_{t,j})=(3\rho \bI + \bH^\ell_{t,j}-\bar{\bH}^\ell_{t,j}, \bh^\ell_{t,j}-\bar{\bh}^\ell_{t,j},0)$
\STATE Set new $(\Delta \bS^{-\ell}_{t,j}, \Delta \bu^{-\ell}_{t,j}, \Delta T^{-\ell}_{t,j})=(\boldsymbol{0},\boldsymbol{0},0)$.
    \ENDFOR
\ENDFOR
\ENDIF
\end{algorithmic}
\end{minipage}}
\end{algorithm}

\begin{algorithm}[t]
	\resizebox{0.95\columnwidth}{!}{
\begin{minipage}{\columnwidth}
\begin{algorithmic}[1]
\caption{Check Upload Event}\label{alg:check_upload}
\STATE Update upload buffer $(\Delta \bS^{l_t}_{t,j_t}, \Delta \bu^{l_t}_{t, j_t}, \Delta \bT^{l_t}_{t,j_t}) \mathrel{{+}{=}} (\bx_{t} \bx_{t} ^\top,y_t \bx_{t},1)$.\label{line:upload_update_obs}
\IF{$\text{det}(\bS^{l_t}_{t, j_t}+\Delta \bS^{l_t}_{t, j_t})/\text{det}(\bS^{l_t}_{t, j_t})\ge U$}
\STATE The global cluster finds $k_t$ so that $(l_t, j_t) \in P_{s, k_t}$.
\STATE Update global information $(\bS^g_{t, k_t}, \bu^g_{t, k_t}, T^g_{t, k_t})\mathrel{{+}{=}}(\Delta \bS^{l_t}_{t, j_t}, \Delta \bu^{l_t}_{t, j_t},\Delta T^{l_t}_{t, j_t})$.\label{line:upload_merge}
\FOR{$(\ell,j) \neq (l_t, j_t) \in P_{s, k_t}$}
\STATE Global server updates other servers' download buffer $(\Delta \bS^{-\ell}_{t,j},\Delta \bu^{-\ell}_{t,j}, \Delta T^{-\ell}_{t,j}) \mathrel{{+}{=}}( \Delta \bS^{l_t}_{t, j_t}, \Delta \bu^{l_t}_{t, j_t}, \Delta T^{l_t}_{t, j_t})$.\label{line:upload_renew_download}
\ENDFOR
\STATE Local server $l_t$ updates the local statistics: $(\bS^{l_t}_{t, j_t}, \bu^{l_t}_{t, j_t}, T^{l_t}_{t, j_t}) \mathrel{{+}{=}} (\Delta \bS^{l_t}_{t, j_t}, \Delta \bu^{l_t}_{t, j_t}, \Delta T^{l_t}_{t, j_t})$.\label{line:upload_update_local}
\STATE Local server $l_t$ sets $(\bar{\bH}^{l_t}_{t,j_t}, \bar{\bh}^{l_t}_{t,j_t})= (\bH^{l_t}_{t,j_t}, \bh^{l_t}_{t,j})$ and creates new perturbation $\bH^{\ell}_{t,j_t}, \bh^{\ell}_{t,j_t}$ using the tree-based privatizer $\text{PVT}({l_t, j_t})$.\label{line:upload_perterb}
\STATE Local server $l_t$ initializes the upload buffer using the new perturbation: $(\Delta \bS^{l_t}_{t, j_t}, \Delta \bu^{l_t}_{t, j_t}, \Delta T^{l_t}_{t, j_t})=(3\rho \bI+\bH^{l_t}_{t,j}-\bH^{l_t}_{t,j_t}, \bh^{l_t}_{t,j_t}-\bar{\bh}^{l_t}_{t,j_t},0)$.\label{line:upload_clear}
\ENDIF
\end{algorithmic}
\end{minipage}}
\end{algorithm}

\begin{algorithm}[t]
	\resizebox{0.95\columnwidth}{!}{
\begin{minipage}{\columnwidth}
\begin{algorithmic}[1]
\caption{Check Download Event}\label{alg:check_download}
\FOR{$(l,j) \in P_{s,k_t}$}
\IF{$\text{det}(\bS^g_{t,k_t})/{\text{det}(\bS^{\ell}_{t, j})}\ge D$}
\STATE Local server receives $(\bS^{\ell}_{t,j}, \bu^{\ell}_{t,j}, T^{\ell}_{t,j}) \mathrel{{+}{=}} (\Delta \bS^{-\ell}_{t,j}, \Delta \bu^{-\ell}_{t,j}, \Delta T^{-\ell}_{t,j})$.
\STATE Global server cleans the download buffer: $(\Delta \bS^{-\ell}_{t,j}, \Delta \bu^{-\ell}_{t,j}, \Delta T^{-\ell}_{t,j})=(\boldsymbol{0},\boldsymbol{0},0)$.
\ENDIF
\ENDFOR
\end{algorithmic}
\end{minipage}}
\end{algorithm}

\begin{algorithm}[t]
	\resizebox{0.95\columnwidth}{!}{
\begin{minipage}{\columnwidth}
\begin{algorithmic}[1]
\caption{Privatizer PVT($\ell,j$) for cluster $j$ at server $\ell$} \label{alg:private}
\STATE \textbf{Input:} Privacy budget $\varepsilon, \delta$, number of uploads $t_c$.
\STATE Create a binary tree $\cT$ of depth $\nu=\lceil \log (t_c+1) \rceil+1$.
\STATE For each node, we generate a perturbation matrix matrix $N \in \R^{(d+1)\times (d+1)}$, where $\bN=({\bar{\bN}+\bar{\bN}^{\top}})/\sqrt{2}$ and $\bar{\bN} \in \R^{(d+1)\times (d+1)}$ with $\bar{N}_{p,q}\sim \cN(0, 64\nu\frac{\log (2/\delta)^2}{\varepsilon^2})$.
\STATE Calculate a queue of $\cQ=(\bH_i, \bh_i)_{i=1, ..., t_c+1}$ for partial sums $s_0, ..., s_{t_c}$.
\STATE Sequentially pop one pair of $\cQ$ if PVT($\ell,j$) is called.
\end{algorithmic}
\end{minipage}}
\end{algorithm}

\section{Results}\label{sec:results}
Recall that perturbations $\bar{\bH}^{\ell}_{t,j}, \bar{\bH}^{\ell}_{t,j}$ are designed to satisfy the $(\varepsilon, \delta, L, m)$-CDP requirement.
In particular, the privacy budget $(\varepsilon,\delta)$ affects the regret and communication bounds via the following quantities $(\rho_{\max}, \rho_{\min}, \kappa)$, which can be treated as spectral bounds for $\bar{\bH}^{\ell}_{t,j}, \bar{\bH}^{\ell}_{t,j}$.
Let $\tilde{\bH}^{\ell}_{t,j}=2\rho \bI + 3\rho  c^{\ell}_{j,t} \bI + \bar{\bH}^{\ell}_{t,j}$, where $c^{\ell}_{j,t}$ is the number of uploads for local server $\ell$ and cluster $j$.
\begin{definition}[Approximately-accurate $\rho_{\min}, \rho_{\max}$ and $\kappa$]\label{def:accurate}
The bounds $0<\rho_{t, \min} \le \rho_{t,\max}$ and $\kappa >0$ are $(\alpha/(mL))$-accurate for $(\bar{\bH}^{\ell}_{t,j})$ for any $\ell \in [L], j \in [m]$ and $t \in [T]$:
\begin{equation}\label{eq:alpha_accurate}
    \norm{\tilde{\bH}^{\ell}_{t,j}}_{\text{op}} \le \rho_{\max}, \norm{(\tilde{\bH}^{\ell}_{t,j})^{-1}}_{\text{op}} \le \frac{1}{\rho_{\min}}, \norm{\bar{\bh}^{\ell}_{t,j}}_{(\tilde{\bH}^{\ell}_{t,j})^{-1}} \le \kappa
\end{equation}
\end{definition}
with probability at least $(1-\frac{\alpha}{mL})$. 

As will be shown later, our communication protocol ensures $ c^{\ell}_{j,t} \in [0, d\log T/\log (\min\{U,D\})]$, so $\rho_{\min} = \rho$,  $\rho_{\max} = 3\rho + 3\rho d\log T/\log (\min\{U,D\})$ , and $\kappa=\norm{\bar{\bh}^{\ell}_{t,j}}/\sqrt{\rho}$, where $\rho \triangleq 8\sqrt{2}\nu \log (4/\delta) (4\sqrt{d} + 2 \log (2mL/\alpha))/\varepsilon$ is given by our privatizer.

In the following, we will give general regret and communication bounds using $(\rho_{\max}, \rho_{\min}, \kappa)$ and replace them with their exact values.

\subsection{Regret Bound}
We give the following theorem as our main result for the regret bound.
\begin{restatable}{theorem}{mainRegret}\label{thm:mainRegret}
Suppose the cluster structure over the users and items satisfy the assumptions in \Cref{sec: problem setting} with gap parameter $\gamma>0$ and item regularity parameter $1\ge \lambda_x>0$. If the privatizer produces random perturbation that are $(1/(8mLT))$-accurate as in \Cref{def:accurate}, with probability at least $1-1/T$, the regret is upper bounded by
\begin{align}
R(T)&\le \tilde{O}\Big(n(\frac{\log T}{\lambda_x^2}+\frac{\sigma_0^2d\log T}{\lambda_x \gamma^2} + \frac{\log (1/\delta)\log T }{\lambda_x \varepsilon \gamma^2})\notag\\
&+ dL\sqrt{mT\frac{\log(1/\delta)}{\varepsilon}}\log^{1.5} T \Big)
\end{align}
\end{restatable}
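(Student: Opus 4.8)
The plan is to decompose $R(T)$ into an \emph{exploration/clustering} contribution, over rounds where the detected cluster structure may still be wrong, and an \emph{exploitation} contribution, over rounds where every global cluster is correctly identified so that the problem reduces to $m$ federated, private linear bandits. First I would condition on a high-probability event $\cE$ on which three statements hold at once: (i) the tree-based privatizer produces $(\rho_{\min},\rho_{\max},\kappa)$ that are accurate as in \Cref{def:accurate}; (ii) every per-user ridge estimate concentrates, $\norm{\hat{\btheta}_{t,i}-\btheta_i}=\tilde{O}(F(T_{t,i}))$ once $T_{t,i}$ passes the regularity threshold; and (iii) the privatized clustered estimates feeding the merge rule \Cref{eq:merge} concentrate around their cluster means. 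A union bound over the at most $mL$ local clusters, the $O(\log T)$ phases, and the $T$ rounds keeps the total failure probability below $1/T$, matching the statement.

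The clustering-correctness analysis drives the first bracketed term. By the item-regularity assumption, after a user $i$ has appeared $T_{t,i}=\tilde{O}(1/\lambda_x^2)$ times its empirical Gram matrix has minimum eigenvalue $\Omega(\lambda_x T_{t,i})$, so $\norm{\hat{\btheta}_{t,i}-\btheta_i}=\tilde{O}(\sigma_0\sqrt{d/(\lambda_x T_{t,i})})$. I would then argue that the local deletion rule \Cref{eq:edge_deletion} never cuts an intra-cluster edge (triangle inequality plus this concentration bound) and cuts every inter-cluster edge once its threshold drops below $\gamma$, which occurs at $T_{t,i}=\tilde{O}(\sigma_0^2 d/(\lambda_x\gamma^2))$. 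For the global merge, the privatized statistics carry an extra spectral error of order $\rho\sim\log(1/\delta)/\varepsilon$, so separating distinct cluster means through the injected noise needs a further $\tilde{O}(\log(1/\delta)/(\lambda_x\varepsilon\gamma^2))$ observations; this is precisely the origin of the privacy term. Charging the trivial per-round regret $r_t\le 2$ to each of the rounds every one of the $n$ users spends below its threshold gives the first term $n(\tfrac{\log T}{\lambda_x^2}+\tfrac{\sigma_0^2 d\log T}{\lambda_x\gamma^2}+\tfrac{\log(1/\delta)\log T}{\lambda_x\varepsilon\gamma^2})$, the $\log T$ factors absorbing the phase and union-bound overhead.

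For the exploitation regret, on $\cE$ every cluster is correct, so I analyze each global cluster as one federated private linear bandit shared by $L'\le L$ servers. Optimism in \Cref{line:recommend} together with \Cref{lemma:cdp_beta} yields $r_t\le 2\beta^{l_t}_{t,j_t}\norm{\bx_t}_{(\bS^{l_t}_{t,j_t})^{-1}}$. The nonstandard step is that $\bS^{l_t}_{t,j_t}$ is a delayed, privatized local copy rather than the synchronized Gram matrix; here I would invoke the triggers \Cref{eq:upload_condition} and \Cref{eq:download_condition} to show that $\det(\bS^g_{t,k_t})/\det(\bS^{l_t}_{t,j_t})$ stays bounded by $D$, so each local confidence width is within a constant factor of its synchronized value and the total number of communication rounds is $O(dmL\log T)$. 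A determinant/elliptical-potential lemma then bounds $\sum_t\norm{\bx_t}^2_{(\bS)^{-1}}=\tilde{O}(d\log T)$ inside each of the at most $mL$ local clusters, and Cauchy--Schwarz across them converts $\sum\sqrt{T_{\ell,j}}$ into $\sqrt{mL\,T}$. Substituting $\beta=\tilde{O}(\sqrt{d\log T}+\sqrt{L'\rho_{\max}})$ with $\rho_{\max}=\tilde{O}(d\log T\cdot\log(1/\delta)/\varepsilon)$ merges the two $\sqrt{L}$ factors and reproduces the second term $dL\sqrt{mT\log(1/\delta)/\varepsilon}\,\log^{1.5}T$.

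The hardest part will be decoupling the three sources of lag in the exploitation bound: the phase-based cluster updates, the asynchronous buffering, and the tree-based perturbations that are refreshed at every upload. Concretely, the confidence matrix $\bS^{l_t}_{t,j_t}$, its injected noise $\bH^{l_t}_{t,j_t}$, and the communication delay all move together, so I expect most of the effort to go into a single ``synchronized-versus-local'' comparison inequality that simultaneously absorbs the perturbation through the $(\rho_{\min},\rho_{\max},\kappa)$ bounds of \Cref{def:accurate} and the delay through \Cref{eq:upload_condition}--\Cref{eq:download_condition}, while certifying that the determinant ratio never leaves $[1/D,U]$ under asynchronous arrivals and that the communication rounds remain $O(dmL\log T)$ so the delay costs only logarithmically.
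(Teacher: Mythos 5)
Your overall architecture coincides with the paper's: a two-part decomposition into (i) rounds before the cluster structure is correctly identified, charged the trivial per-round regret and bounded via item regularity plus the spectral bounds on the privacy noise (this reproduces the paper's $T_0$ analysis, including the extra $n\log(1/\delta)\log T/(\lambda_x\varepsilon\gamma^2)$ term coming from the noisy global merge), and (ii) rounds afterwards, treated as $m$ federated private linear bandits via optimism, \Cref{lemma:cdp_beta}, an elliptical-potential bound, and Cauchy--Schwarz. The clustering-correctness half of your plan matches the paper essentially step for step.

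The genuine problem is in your exploitation-phase delay argument. You claim the triggers give $\det(\bS^g_{t,k_t})/\det(\bS^{l_t}_{t,j_t})\le D$ and hence that each local confidence width is ``within a constant factor of its synchronized value.'' This is false as stated: the fully synchronized Gram matrix is not $\bS^g_{t,k_t}$ but $\bV_{t,j}=\bS^g_{t,k_t}+\sum_{k=1}^{L'}\Delta\bS^{\ell_k}_{t,j_k}$, i.e., it also contains the observations still sitting in the upload buffers of all $L'$ servers. Controlling those requires the upload condition as well, and the resulting factor (the paper's \Cref{lemma:delay}) is $\Gamma=\sqrt{D(1+(L'-1)(U-1))+U-1}=O(\sqrt{L})$, not a constant. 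This matters precisely for the $L$-dependence you need: the paper pays this $\sqrt{L}$ but applies the elliptical potential to $\bV_{t,j}$ at the global-cluster level, so Cauchy--Schwarz runs over only $m$ clusters and yields $\sqrt{mT}$; you instead apply the potential per local cluster and get $\sqrt{mLT}$. If your ``constant factor'' were genuinely a comparison against the synchronized matrix, it would have to be $\Gamma=O(\sqrt{L})$, and combined with your $\sqrt{mLT}$ this gives $L^{3/2}\sqrt{mT}$, overshooting the claimed bound. Your route can be repaired: compare $\bS^{l_t}_{t,j_t}$ not to the synchronized matrix but to the local cluster's own instantaneously updated matrix --- downloaded foreign data only adds PSD terms, and the missing recent local observations live in the upload buffer, so \Cref{eq:upload_condition} alone gives a $\sqrt{U}$ factor via the determinant-ratio inequality; then the per-local-cluster potential plus $\sqrt{mLT}$ indeed recovers $dL\sqrt{mT\log(1/\delta)/\varepsilon}\log^{1.5}T$, with the two $\sqrt{L}$'s supplied by $\beta\sim\sqrt{L\rho_{\max}}$ and the Cauchy--Schwarz step. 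Either repair works (the paper's $\Gamma\cdot\sqrt{mT}$ accounting, or the local $\sqrt{U}\cdot\sqrt{mLT}$ accounting), but you must commit to one consistent comparison; as written, the middle step does not justify the constant you rely on.
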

We will give the proof sketch for the above \cref{thm:mainRegret}.
\begin{proof}
Our proof mainly consists of two parts.
The first part bounds the number of exploration rounds $2T_0$ after which the overall user clusters are correctly detected at the global server.
The second part is to bound the regret for the asynchronous contextual linear bandits after the clusters are partitioned correctly.

Different from standard online clustering bandits, the key technical challenge is to take care of the additional random Gaussian noise produced by the privater, which perturbs the true observation that is needed for global cluster detection and the regret analysis for contextual linear bandits.
Moreover, such perturbed observation are also lagged behind the instant observation, since FCLUB-CDP adopts the "delayed" asynchronous communication where upload and download are triggered occasionally. 
This makes standard contextual bandit analysis no longer works and requires new proof techniques to handle the gap between instant observation and the lagged (and perturbed) observation.

For the first cluster detection part, by the assumption of item regularity, we prove that after $t \ge O(n(\frac{\log T}{\lambda_x^2}+\frac{d\sigma_0^2\log T}{\lambda_x \gamma^2}))$ rounds, the local estimates are accurate enough so that the local clusters are correctly identified, similar to that of~\cite{li2018online}.
Specifically, the 2-norm distance between local estimate $\hat{\theta}_{t,i}$ and the truth $\theta_i$ for any user $i$ is less than $\gamma/4$.
Thus the local clusters are split correctly for all local servers. 
Now for the global cluster detection, the global server receives the aggregated observation from correctly partitioned local clusters, in which random Gaussian noises are added.
Based on spectra property of Gaussian noise matrices (\cref{def:accurate}), the global server will spend additional $O(\frac{n\log (1/\delta)\log T }{\lambda_x \varepsilon \gamma^2})$ rounds so that the perturbed estimate $\hat{\theta}^{l}_{s,j}$ are accurate enough at the beginning of phase $s=\lceil \log_2 T_0\rceil$, where $T_0=O(n(\frac{\log T}{\lambda_x^2}+\frac{d\sigma_0^2\log T}{\lambda_x \gamma^2} + \frac{\log (1/\delta)\log T }{\lambda_x \varepsilon \gamma^2}))$.
Therefore, after $t > 2T_0$, the overall user clusters are partitioned correctly.

For the regret after $2T_0$, we use the delayed update technique from~\citep[Section 5.1]{abbasi2011improved}, which only recomputes the confidence radius only $O(\log T)$ times and hence saves computation.
The same strategy can also be applied for the delayed communication.
The key analysis relies on using the upload and download condition in~\cref{eq:upload_condition} and \cref{eq:download_condition}, so that the actually-used cluster confidence radius is at most $\Gamma$ times larger than that if all local servers upload their perturbed observations in a fully synchronized manner, where $\Gamma=\sqrt{D(1+(L-1)(U-1))+ U-1}$.
This will give a $\Gamma R_j(T_j)$ regret for the second part, where $R_j(T_j)$ is the private-version regret for the cluster $j$ if all observation are synchronized at each round. The full proof is put in \OnlyInFull{\Cref{sec:regret_proof}}\OnlyInShort{Appendix B}.
\end{proof}

\subsection{Communication Cost}
We give the following theorem to bound the total communication cost.
\begin{restatable}{theorem}{comCost}\label{thm:comCost}
Under the CDP setting, the total communication cost satisfies:
\begin{align}
    C(T)&\le O(\frac{dmL\log T}{\log(\min\{U,D\}})
\end{align}
\end{restatable}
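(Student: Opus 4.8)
The plan is to decompose the total communication $C(T)$ into three contributions and bound each separately: (i) the phase-based cluster-detection uploads in \Cref{alg:cluster_detect}, (ii) the asynchronous uploads triggered by \Cref{eq:upload_condition} in \Cref{alg:check_upload}, and (iii) the asynchronous downloads triggered by \Cref{eq:download_condition} in \Cref{alg:check_download}. For (i), each server uploads its clustered information $I_{s,\ell}$ exactly once per phase, and since phase $s$ lasts $2^s$ rounds there are at most $\lceil \log_2 T\rceil + 1 = O(\log T)$ phases, giving $O(L\log T)$ detection rounds, which I expect to be a lower-order term. The heart of the argument is (ii) and (iii), handled by a determinant-growth (``doubling'') potential argument.

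For the asynchronous uploads, fix a single local cluster $(\ell,j)$ while its structure is unchanged. Each time \Cref{eq:upload_condition} fires, the local statistic is updated by its buffer, so $\det(\bS^{\ell}_{t,j})$ is multiplied by a factor of at least $U$. Since every item satisfies $\norm{\bx}\le 1$ we have $\mathrm{tr}\big(\sum_\tau \bx_\tau\bx_\tau^\top\big)\le T$, and by \Cref{def:accurate} the regularizer/perturbation part is bounded in operator norm by $\rho_{\max}$, so $\mathrm{tr}(\bS^{\ell}_{t,j})\le d\rho_{\max}+T$; by AM--GM this yields $\det(\bS^{\ell}_{t,j})\le (\rho_{\max}+T/d)^d$. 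Meanwhile $\bS^{\ell}_{t,j}\succeq \rho_{\min}\bI$ gives $\det \ge \rho_{\min}^d$. Hence the number of uploads for this cluster is at most
\begin{equation}
\frac{d}{\log U}\log\!\Big(\frac{\rho_{\max}+T/d}{\rho_{\min}}\Big)=O\!\Big(\frac{d\log T}{\log U}\Big),
\end{equation}
where I use $\rho_{\min}=\rho$ and $\rho_{\max}=O(\rho\, d\log T/\log(\min\{U,D\}))$ so that the $T$ term dominates inside the logarithm. The same computation applied to \Cref{eq:download_condition}, where between downloads the global $\det(\bS^g_{t,k})$ grows relative to $\det(\bS^\ell_{t,j})$ by the factor $D$ before the local copy catches up, bounds the downloads per cluster by $O(d\log T/\log D)$. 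Combining the two with the common lower bound $\log(\min\{U,D\})$ gives $O(d\log T/\log(\min\{U,D\}))$ asynchronous rounds per local cluster. Note the mild circularity here is self-consistent: the per-cluster upload count $O(d\log T/\log(\min\{U,D\}))$ is exactly what determines $\rho_{\max}$ through $c^{\ell}_{j,t}$ in \Cref{def:accurate}, so plugging it back in closes the loop.

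It remains to count the number of distinct local clusters that ever contribute communication. Here I would invoke the high-probability correctness established in the regret analysis: since the local graphs only lose edges across phases ($\cG^\ell_s\subseteq\cG^\ell_{s-1}$) and, with high probability, edges within a true cluster are never deleted, each server's partition is always a coarsening of its true partition, so it refines monotonically from a single block to at most $m_\ell\le m$ blocks. A refinement of a set into at most $m_\ell$ final blocks creates at most $O(m_\ell)$ distinct blocks in total over its entire history (the split structure forms a tree with $\le m_\ell$ leaves), and global merges do not create new \emph{local} clusters. Summing over the $L$ servers bounds the number of distinct local clusters by $O(mL)$. Multiplying the per-cluster count by this and adding the detection term gives
\begin{equation}
C(T)\le O\!\Big(\frac{dmL\log T}{\log(\min\{U,D\})}\Big)+O(L\log T)=O\!\Big(\frac{dmL\log T}{\log(\min\{U,D\})}\Big).
\end{equation}
The main obstacle I anticipate is precisely this cluster-structure bookkeeping: ensuring that the per-cluster determinant potential does not reset so often across splits and merges that an extra $\log T$ factor creeps in. The resolution is the monotone edge-deletion structure together with the high-probability correct-clustering guarantee, which caps the number of distinct clusters at $O(mL)$ rather than $O(mL\log T)$; verifying that merges do not spawn additional local clusters (and that each freshly split cluster inherits a determinant that is still $\succeq\rho_{\min}\bI$ so the bound applies verbatim) is the delicate part of the accounting.
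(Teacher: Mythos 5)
Your determinant--potential argument is the paper's own core argument: for $t>2T_0$ the paper telescopes $\log\det(\bS^{\ell}_{t,j})$ over upload/download events, lower-bounds each increment by $\log(\min\{U,D\})$ (each event multiplies the local determinant by at least $U$ or $D$), upper-bounds the total by $d\log(\frac{\rho_{\max}}{\rho_{\min}}+\frac{T}{d\rho_{\min}})$ via the same trace/AM--GM reasoning, and multiplies by the $mL$ local clusters; your treatment of the circularity through $c^{\ell}_{j,t}$ and the phase-wise detection uploads also matches. The genuine gap is in your cluster-structure bookkeeping for the early period. You charge all potential resets to ``distinct local clusters'' and cap these at $O(mL)$ via the monotone refinement of each server's graph. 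But in \Cref{alg:cluster_detect} the local statistics $(\bS^{\ell}_{t,j},\bu^{\ell}_{t,j},T^{\ell}_{t,j})$ are renewed whenever the \emph{global} partition changes ($P_s\neq P_{s-1}$), not only when the local cluster $(\ell,j)$ itself splits. Before the clusters are correctly identified ($t\le 2T_0$), the merge decisions in \Cref{eq:merge} depend on noisy, re-perturbed estimates and shrinking thresholds, so $P_s$ can change at every phase boundary even while every local partition stays fixed; each such change renews the statistics of \emph{all} $mL$ local clusters. Moreover, a renewal can strictly \emph{decrease} $\det(\bS^{\ell}_{t,j})$: if a former merge partner $(\ell',j')$ is dropped from $(\ell,j)$'s global cluster, the renewed matrix no longer contains that partner's data, while the pre-renewal matrix did. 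Hence the determinant is not monotone across renewals, your per-distinct-cluster telescoping does not apply ``verbatim,'' and the number of resets is governed by the number of structure-changing phases, not by the number of distinct clusters ever created.

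The paper resolves this differently (\Cref{sec:communication_cost}, Case 2): for $t\le 2T_0$ it simply restarts the potential argument in every phase, paying an extra factor $\log(2T_0)$ on that term, i.e.\ roughly $mL\,d\log(\frac{\rho_{\max}}{\rho_{\min}}+\frac{2T_0}{d\rho_{\min}})\log(2T_0)/\log(\min\{U,D\})$, and this stays lower order because $T_0$ scales like $n\cdot\mathrm{poly}(\lambda_x^{-1},\gamma^{-1},\varepsilon^{-1})\cdot\log T$, so its logarithms contribute only $O(\log n+\log\log T+\cdots)$. Your proof becomes correct if you replace the $O(mL)$-distinct-cluster charging by this per-phase restart for the pre-$2T_0$ period, keeping your argument unchanged for $t>2T_0$, where the partition is frozen with high probability and the potential is genuinely monotone.
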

\begin{proof}
The total communication cost also has two parts: the upload at the beginning of each phase for global cluster detection and the asynchronous communication within each phase for information sharing.
For the first part, the algorithm has at most $\log T$ phases and at each phase, there are total $mL$ local clusters uploading the clustered information, hence the total communication cost is $O(mL\log T)$.
For the second part, recall that we adopt the delayed asynchronous communication protocol and the total number of uploads and downloads can be bounded by $O(dmL\log T)$.
See \OnlyInFull{\Cref{sec:communication_cost}}\OnlyInShort{Appendix C} for the detailed proofs.
\end{proof}

\subsection{Privacy Guarantee}
\begin{restatable}{theorem}{privacyCDP}\label{thm:privacyCDP}
\Cref{alg:main} preserves $(\varepsilon,\delta, L, m)$-CDP as defined in \Cref{def:cdp}.
\end{restatable}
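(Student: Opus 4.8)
The plan is to reduce the $(\varepsilon,\delta,L,m)$-CDP guarantee of \Cref{def:cdp} to the standard $(\varepsilon,\delta)$-DP guarantee of a single tree-based privatizer, via a post-processing argument. First I would make the structural observation that the \emph{only} channel through which data belonging to cluster $(\ell',j')$ can influence the actions produced by a different cluster $(\ell,j)$ is the privatized statistics that $\text{PVT}(\ell',j')$ uploads to the global server. Concretely, the algorithm never transmits raw per-user data across cluster boundaries: the quantities that leave $(\ell',j')$ are the perturbed clustered statistics $(\tilde{\bV}^{\ell'}_{s,j'},\tilde{\bb}^{\ell'}_{s,j'},\tilde{T}^{\ell'}_{s,j'})$ at phase starts in \Cref{alg:cluster_detect} and the perturbed upload buffers $(\Delta\bS^{\ell'}_{t,j'},\Delta\bu^{\ell'}_{t,j'},\Delta T^{\ell'}_{t,j'})$ during the asynchronous protocol in \Cref{alg:check_upload}, each of which already carries the tree noise $\bH^{\ell'}_{t,j'},\bh^{\ell'}_{t,j'}$. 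Since the item $\bx_t$ recommended by $(\ell,j)$ in line~\ref{line:recommend} is a deterministic function of its local statistics $\bS^{\ell}_{t,j},\bu^{\ell}_{t,j}$, which are obtained only by downloading these privatized aggregates (and by mixing in data from clusters other than $(\ell',j')$, which is held identical across the two $t$-neighboring inputs $\bar S_{\ell',j'}$ and $\bar S'_{\ell',j'}$), the map $A_{\ell,j}$ factors as a randomized post-processing of the output of $\text{PVT}(\ell',j')$. It therefore suffices to prove that $\text{PVT}(\ell',j')$ is $(\varepsilon,\delta)$-DP with respect to a single $t$-neighboring change in $S_{\ell',j'}$.

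Next I would bound the sensitivity of the privatized stream and verify the per-node Gaussian mechanism. Stacking each observation into the augmented vector $\bz_\tau=(\bx_\tau;y_\tau)\in\R^{d+1}$, every uploaded partial sum is a sum of outer products $\bz_\tau\bz_\tau^{\top}$, whose top-left $d\times d$ block and leading column give $\bH^{\ell'}_{t,j'}$ and $\bh^{\ell'}_{t,j'}$. Changing a single $(\bx_t,y_t)$ alters exactly one leaf of the binary tree $\cT$, and since $\norm{\bx}\le 1$ and $y\in[0,1]$ we have $\norm{\bz_t\bz_t^{\top}-\bz_t'\bz_t'^{\top}}_F\le 2$, so the Frobenius-norm sensitivity of any node's contribution is a bounded constant. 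I would then show that each node is privatized by the symmetric Gaussian mechanism $\bN=(\bar{\bN}+\bar{\bN}^{\top})/\sqrt{2}$ with per-entry variance $\sigma_{\text{noise}}=64\nu\log(2/\delta)^2/\varepsilon^2$, where the $\sqrt 2$ symmetrization exactly compensates for the double-counting of off-diagonal entries, so that each node satisfies $(\varepsilon/\sqrt{8\nu\log(2/\delta)},\,\delta/2)$-DP by the analytic Gaussian-mechanism guarantee.

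I would then lift the per-node guarantee to the whole tree by composition. A datum at time $t$ lies on a single root-to-leaf path and hence influences at most $\nu=\lceil\log(t_c+1)\rceil+1$ noisy nodes; every released partial sum thus depends on the changed datum through at most $\nu$ independently-perturbed nodes. Applying the advanced composition theorem to the $\nu$-fold composition of $(\varepsilon/\sqrt{8\nu\log(2/\delta)},\delta/2)$-DP mechanisms returns exactly $(\varepsilon,\delta)$-DP for the entire continual release of partial sums over the horizon, i.e.\ for the full output of $\text{PVT}(\ell',j')$. Combined with the post-processing reduction of the first paragraph, this establishes the inequality required by \Cref{def:cdp} for every pair $(\ell,j)\neq(\ell',j')$ and every $t$, proving $(\varepsilon,\delta,L,m)$-CDP.

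The step I expect to be the main obstacle is the quantitative calibration tying the three preceding ingredients together: checking that the stated per-node variance, the $\sqrt 2$ symmetrization, and advanced composition over $\nu$ nodes recombine to \emph{exactly} $(\varepsilon,\delta)$, including the correct sensitivity constant for the outer-product stream and the $\delta/2$ budget split. A secondary but genuine subtlety is making the post-processing reduction watertight at phase boundaries: I must confirm that the global merge (line~\ref{line:detect_merge}) and the download updates never expose any un-privatized function of $(\ell',j')$'s raw data to $(\ell,j)$, and that regenerating a fresh privatizer when $P_s\neq P_{s-1}$ does not cause a single datum to be released through more than one tree, which would otherwise inflate the privacy cost beyond $(\varepsilon,\delta)$.
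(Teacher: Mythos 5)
Your proposal follows essentially the same route as the paper's own (very terse) proof: the paper likewise argues that each tree node of $\text{PVT}(\ell,j)$ preserves $(\varepsilon/\sqrt{8\nu\log(2/\delta)},\delta/2)$-DP, that composition over the $\nu$ nodes touched by any single datum makes each privatizer $(\varepsilon,\delta)$-DP, and that CDP then follows because only privatized clustered statistics ever cross cluster boundaries (your post-processing reduction). Your write-up is in fact more careful than the paper's appendix --- which asserts the per-node-to-tree composition and the cluster-level reduction without the sensitivity or composition details you supply --- so the calibration subtleties you flag as the main obstacle are glossed over by the paper itself rather than resolved differently.
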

\begin{proof}
The CDP condition is satisfied by assigning the right amount of Gaussian noise in each tree node of our tree-based privacy protocol in~\Cref{sec:algorithm}. See \OnlyInFull{\Cref{sec:privacy_proof}}\OnlyInShort{Appendix D} for details.
\end{proof}
\subsection{Discussion and Comparison}
\textbf{Discussion on the Regret Bounds.}
For the regret bound, our result has two terms: the regret before the clusters are correctly partitioned $n(\frac{\log T}{\lambda_x^2}+\frac{\sigma_0^2d\log T}{\lambda_x \gamma^2} + \frac{\log (1/\delta) \log T}{\lambda_x \varepsilon \gamma^2})$ and the regret after the clusters are correctly partition $O(dL\sqrt{mT\frac{\log(1/\delta)}{\varepsilon}}\log^{1.5} T)$. 
We will compare our results with several degenerate cases, given that we are the first work to study the federated clustering of bandits setting. 
For these cases, the additional CDP causes at most $O(\sqrt{\frac{\log(1/\delta)}{\varepsilon}})$ factor and asynchronous communication protocol causes at most $O(\sqrt{d\log T})$ factor in general.

First, when $m=1, L=1$, our setting degenerates to the linear bandits with DP where all users share the same underlying parameter.
Compared to \citet{shariff2018differentially} which gives a $O(\sqrt{d\frac{\log(1/\delta)}{\varepsilon}}\sqrt{T}\log^{1.5}T)$ regret with $0$ communication, our bound has a $O(\sqrt{d})$ additional factor (or more precisely $O(\sqrt{d\log \log T})$ factor) for the second term, which stems from the larger perturbation in order to protect total $O(d\log T)$ communication rounds.

Second, when $L=1$, our setting reduces to the online clustering bandits with DP, \citet{li2018online} gives a $O(n(\frac{\log T}{\lambda_x^2}+\frac{d\sigma_0^2\log T}{\lambda_x \gamma^2})+d\sqrt{mT}\log T)$ for the non-DP version.
Since CDP mechanism requires random perturbation, the clustering process suffers an additional $\frac{n\log T\log (1/\delta) }{\lambda_x \varepsilon \gamma^2}$ for the first term and the second regret term now has a new $\sqrt{\frac{\log(1/\delta)\log T}{\varepsilon}}$ leading factor due to the CDP requirements. 

Third, when $m=1$ and if we consider the special case when each local server only has one user and all users come in a round-robin manner, our setting reduces to the distributed linear bandits with DP.
\citet{dubey2020differentially} provides a synchronized algorithm that achieves $O(L\sqrt{dT\frac{\log(1/\delta)}{\varepsilon}}\log^{1.5} T)$, our second term has an additional $\sqrt{d}$ factor because of different communication protocol, which enables asynchronous communication at the cost of the larger $O(dL\log T)$ compared with $O(L\log T)$ communication rounds.

Finally, there is a lower bound $\Omega(\sqrt{dmT})$,
if we consider the case where the clustering structure is known,
the communication and privacy budgets are unlimited and each cluster contains equal number of users. 
In this case, it is equivalent to learn $m$ independent linear bandits, each with expected rounds $T/m$ and according to \cite{dani2008stochastic}, the lower bound is $\Omega(\sum_{i \in [m]}\sqrt{dT/m})=\Omega(\sqrt{dmT})$.
In other cases, the regret lower bound will be greater and the lower bound $\Omega(\sqrt{dmT})$ still holds.
Our regret bound matches the lower bound up to a factor of $O(L\sqrt{d\frac{\log(1/\delta)}{\varepsilon}}\log^{1.5} T)$.

\textbf{Discussion on the Communication Cost.}
Our communication cost also has two terms: the first $O(mL\log T)$ term for identifying clusters at the beginning of each phase and the leading $O(\frac{dmL\log T}{\log(\min\{U,D\}})$ term for our asynchronous communication protocol.
Compared with \citet{dubey2020differentially} when $m=1$ and users come at the round-robin manner, our communication has an additional $O(d)$ factor. Due to the specialty of the user arrival, the same paper can achieve communication cost independent of $T$ at the cost of $O(\log (LT))$ additional factor in the regret. Though our total communication cost can not be reduced below $O(mL\log T)$ due to the first term, it will be interesting to consider whether the similar trade-off works for our asynchronous protocol in the future work.

\begin{figure*}[t]
 \centering
 \begin{subfigure}[b]{0.495\textwidth}
  \centering
  \includegraphics[width=0.7\textwidth]{./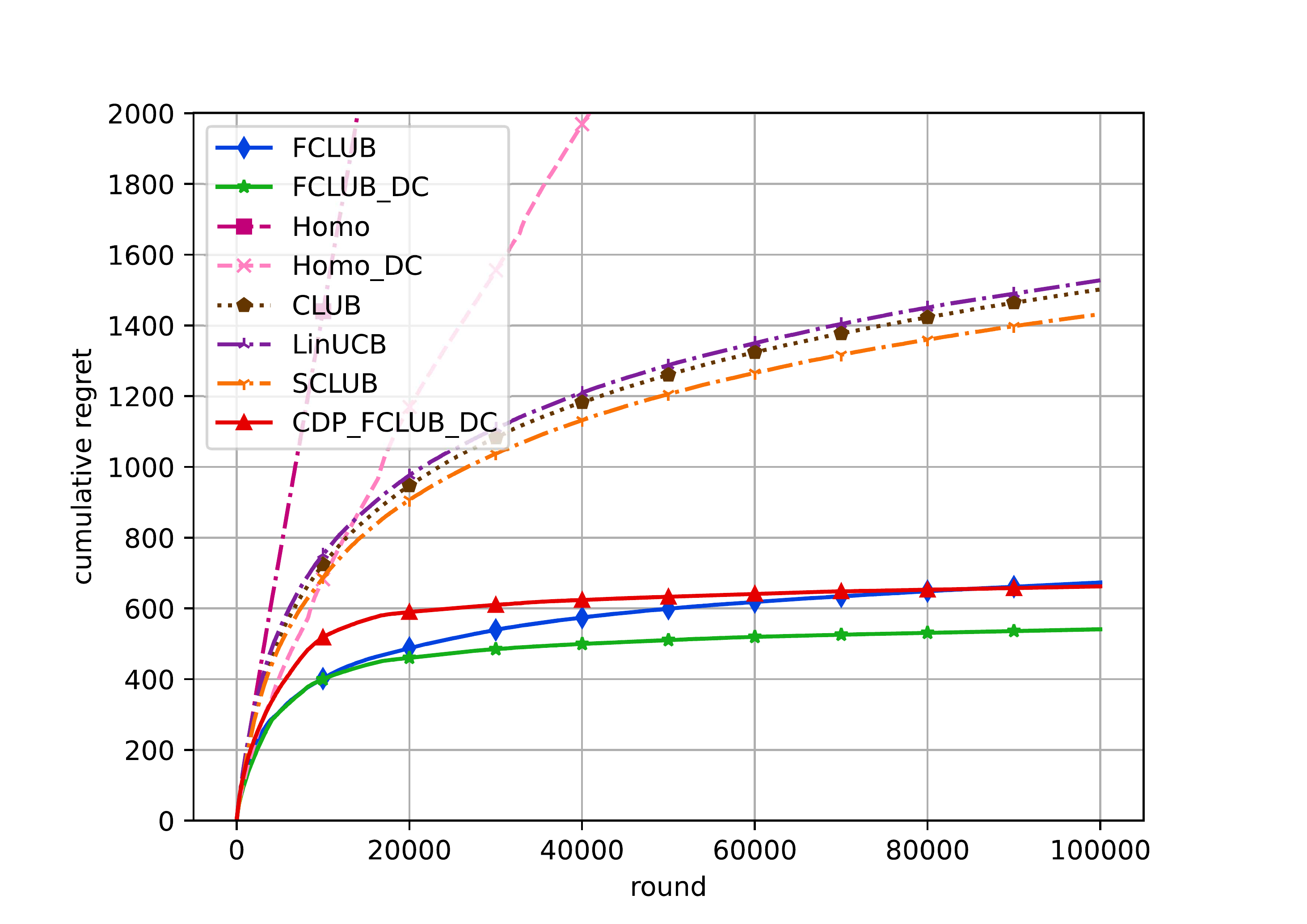}
  \caption{Comparison with Baselines on the Synthetic Dataset}
  \label{fig:synthetic}
 \end{subfigure}
 \hfill
 \begin{subfigure}[b]{0.495\textwidth}
  \centering
  \includegraphics[width=0.67\textwidth]{./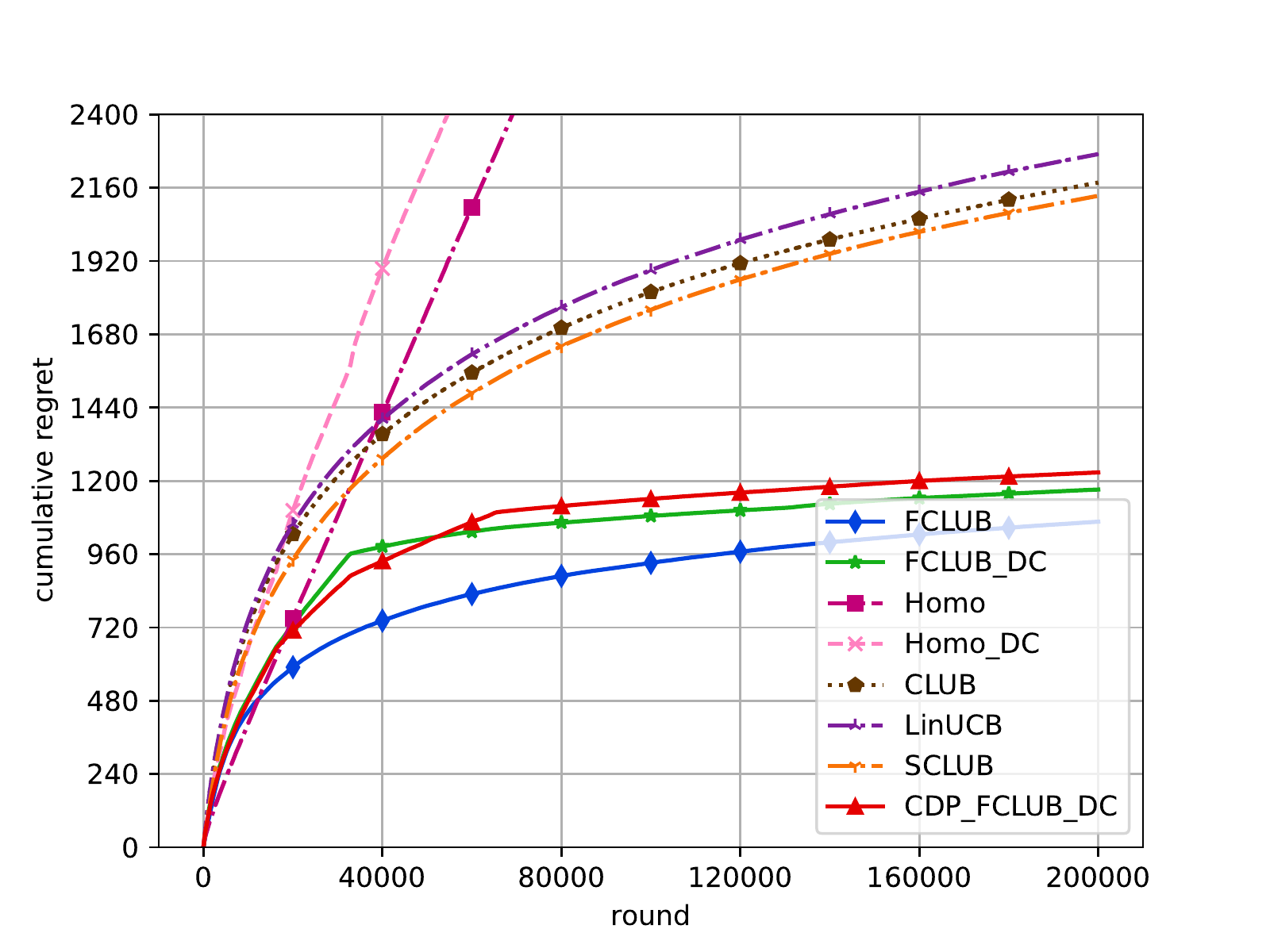}
  \caption{Comparison with Baselines on the MovieLens Dataset}
  \label{fig:movielens}
 \end{subfigure}
 \caption{Comparative Experiments on the Cumulative Regret with Baseline Algorithms}\label{fig:compare2}
\end{figure*}
\section{Experiments}\label{sec: experiments}
To validate our theoretical findings, we conduct experiments on a synthetic dataset and a real-world MovieLens dataset.
\Cref{alg:main} is denoted as CDP-FCLUB-DC and we also present its non-private version FCLUB-DC and its non-private, synchronized version FCLUB (by instantly uploading the observations).
The baselines include CLUB which uses a separate CLUB algorithm~\cite{gentile2014online} for each local server; SCLUB which uses a separate SCLUB algorithm~\cite{li2019improved} for each server; and LinUCB which uses a separate LinUCB algorithm~\cite{abbasi2011improved} for each user.
We also consider the synchronized and asynchronous version of \Cref{alg:main} (denoted as Homo and Homo-DC, respectively) by treating users are identical with the same preference vector. 
Note that all results are averaged over ten random seeds, and we provide mean results with one unit of standard derivation for each curve.
Due to the space limit, we provide the detailed experiment settings (including data generation and processing) in \OnlyInFull{\Cref{sec:exp_setting}}\OnlyInShort{Appendix E.1}, the parameter study \OnlyInFull{\Cref{sec:exp_pstudy}}\OnlyInShort{Appendix E.2}, the communication cost \OnlyInFull{\Cref{sec:exp_communication_cost}}\OnlyInShort{Appendix E.3} and running time results in \OnlyInFull{\Cref{sec:exp_running_time}}\OnlyInShort{Appendix E.4}, respectively.

\textbf{Synthetic Dataset.} We first conduct experiments on a synthetic dataset.
In Figure \ref{fig:synthetic}, we compare our algorithm CDP-FCLUB-DC with the baselines listed above.
The vertical axis indicates the cumulative regret and the horizontal axis indicates the round $t$.
In general, our algorithm CDP-FCLUB-DC's performance has a clear advantage over baseline SCLUB, CLUB, LinUCB, Homo and Homo-DC.
Since Homo-DC and Homo assumes users are in the same cluster, they mistakenly merge different clusters and suffer linear regrets, indicating the correctness of cluster detection is essential to have small regrets.
Compared with SCLUB and CLUB that only perform local clustering operations, we can verify the correctness of our algorithm's clustering operations at the global level, which successfully leverages the collaborative effects across different local servers.
As expected, CDP-FCLUB-DC performs a little worse than FCLUB and FCLUB-DC due to the delayed communication and cluster differential privacy requirements.

\textbf{MovieLens Dataset.}
In this section, we also compare our algorithm CDP-FCLUB-DC with the baselines listed above on movie recommendations with the MovieLens dataset.
The performances are shown in Figure \ref{fig:movielens}. Our algorithm CDP-FCLUB-DC's performance has an advantage over baseline SCLUB, CLUB, LinUCB, Homo and Homo-DC in general. Figure \ref{fig:movielens} shows CDP-FCLUB-DC performs worse than FCLUB and FCLUB-DC due to the delay communication and cluster differential privacy (CDP) as we have explained in synthetic dataset part. Different from the synthetic dataset, in the early stage, our algorithm needs more time to identify the underlying cluster structure. But after all user clusters are correctly detected at the global server, our algorithm performs better than Homo/Homo-DC that assume users are homogeneous, CLUB/SCLUB on each local server and LinUCB on each user.

\section{Conclusion and Future Work}\label{sec: conclusion}
In this paper, we formulate the federated online clustering of bandits problem, which generalizes the clustering of bandits problem to its federated counterpart. To tackle this new problem, we propose a FCLUB-CDP algorithm, which simultaneously achieves sublinear regret, sublinear communication complexity and satisfies our newly-defined clustered differential privacy requirements. Compared with benchmark algorithms, we show that FCLUB-CDP achieves superior performance regarding regret and communication cost.
There are many compelling directions for future study.
For example, it would be interesting to study our problem where local differential privacy is considered.
One could also study a more efficient protocol to further reduce the communication cost.

\subsubsection*{Acknowledgement}
The corresponding author Shuai Li is supported by National Natural Science Foundation of China (62006151). This work is sponsored by Shanghai Sailing Program.
The work of John C.S. Lui was supported in part by the RGC SRFS2122-4S02.

\clearpage
\bibliography{main}

\appendix
\clearpage
\onecolumn

\section*{Supplementary Material}
\section{Summary of Notations}\label{apdx:notations}
\Cref{tab:notation} summarizes the notations used through the main paper and the appendix.
\begin{table}[htbp]\caption{Table of Notations}\label{tab:notation}
\begin{center}
\begin{tabular}{c c p{10cm} }
\toprule
Symbol & & Meaning \\
\midrule
$n$ & & Number of users.\\
$m$ & & Number of underlying clusters.\\
$d$ & & Dimension.\\
$L$ & & Number of local servers.\\
$K$ & & Number of candidate items to be recommended each round.\\
$T$ & & Number of rounds.\\
$(\varepsilon, \delta)$ & & Privacy budgets.\\
$t$ & & Index of rounds.\\
$s$ & & Index of phases, each phase begins at $t=2^s-1$.\\
$i$ (or $i_t$) & & Index of users (or index of users at round $t$)\\
$j$ (or $j_t$) & & Index of local clusters (or index of local clusters at round $t$)\\
$k$ (or $k_t$) & & Index of global clusters (or index of global clusters at round $t$)\\
$\ell$ (or $\ell_t$) & & Index of local servers (or index of local servers at round $t$)\\
$\bD_t$ & & Candidate item set at round $t$.\\
$\btheta_i$ & & Preference vector for user $i \in [n]$.\\
$\bx$ or $\bx_t$) & & Feature vector (or the chosen vector at time $t$).\\
$\rho$ && Unknown distribution that generates $\bx$.\\
$\sigma_0^2, \sigma^2$ & & Sub-Gaussian parameters for the reward.\\
$\gamma$ & & Threshold for the between-cluster distance.\\
$\lambda_x$ & & minimal eigenvalue for the $\E[\bx\bx^{\top}]$.\\
$\cG^\ell_{s}$ & & The connection graph at local server $s$ at phase $s$, the edge connecting two users means they belong to the same cluster.\\
$P_{s,k}$ & & The $k$-th global cluster identified by the global server at phase $s$.\\
$\bV_{t,i}, \bb_{t,i}, T_{t,i}$ & & $d\times d$ Gram matrix, $d\times 1$ moment vector, number of arrival times for user $i$ at time $t$, respectively.\\
$\tilde{\bV}^{\ell}_{s, j}, \tilde{\bb}^{\ell}_{s, j}, \tilde{T}^{\ell}_{s,j}$ & & $d\times d$ Gram matrix, $d\times 1$ moment vector, total number of arrival times for all users in cluster $j$ at local server $\ell$ at the beginning of phase $s$, respectively. Used for global cluster detection.\\
$\bS^{\ell}_{t,j}, \bu^{\ell}_{t,j}, T^{\ell}_{t,j}$ & &  $d\times d$ local synchronized Gram matrix, $d\times 1$ local synchronized moment vector, local synchronized total number of arrival times for all users in cluster $j$ at local server $\ell$ at $t$, which are collected by asynchronous upload/download events, respectively. Used for recommendation.\\
$\bS^{g}_{t,k}, \bu^{g}_{t,k}, T^{g}_{t,k}$ & &  $d\times d$ global synchronized Gram matrix, $d\times 1$ global synchronized moment vector, global synchronized total number of arrival times for all users in global cluster $k$ at the global server in $t$, respectively. Used for communication.\\
$\Delta \bS^{\ell}_{t,j}, \Delta \bu^{\ell}_{t,j}, \Delta T^{\ell}_{t,j}$ & & $d\times d$ upload buffer for Gram matrix, $d\times 1$ upload buffer for moment vector, upload buffer for total number of arrival times for all users in cluster $j$ at local server $\ell$ at $t$, respectively. Used for upload.\\
$\Delta \bS^{-\ell}_{t,j}, \Delta \bu^{-\ell}_{t,j}, \Delta T^{-\ell}_{t,j}$ & & $d\times d$ download buffer for Gram matrix, $d\times 1$ download buffer for moment vector, download buffer for total number of arrival times for all users in cluster $j$ at local server $\ell$ at $t$, respectively. Used for download.\\
\bottomrule
\end{tabular}
\end{center}
\label{tab:TableOfNotationForMyResearch}
\end{table}

\begin{table}[htbp]\caption{Table of Notations (Continued)}
\begin{center}
\begin{tabular}{c c p{10cm} }
\toprule
Symbol & & Meaning \\
\midrule
$\bH^\ell_{t,j},\bar{\bH}^\ell_{t,j}, \tilde{\bH}^\ell_{t,j}$ & & The $d\times d$ perturbation matrix for next upload, current upload and adding right amount of $\bI$ to make $H$ positive semi-definite, respectively. Used for privacy requirements.\\
$\bh^\ell_{t,j},\bar{\bh}^\ell_{t,j}$ &  & The $d\times 1$ perturbation vector for next upload, current upload, respectively. Used for privacy requirements.\\
$c_{t,j}^{\ell}$ & & The total number of communications rounds for local cluster $j$ at server $\ell$ at before round $t$.\\
$\nu$ & & The maximum number of communications for each local server.\\
$D,U$ & & Upload and Download threshold, respectively.\\
$T_0, T_0(\alpha)$ && Number of rounds after which clusters are detected and partitioned correctly with high probability.\\
$\rho_{\max}, \rho_{\min}, \kappa$ & & Spectral bounds for the perturbation matrices $\tilde{\bH}^{\ell}_{t,j}$.\\
$\beta^\ell_{t,j}$ & & Confidence interval.\\
$R(T)$ & & Expected cumulative regrets over time $T$. \\
$C(T)$ & & Expected communication costs over time $T$.\\
$\bV_{t,j}, \bb_{t,j}, T_{t,j}$ & & The fully-synchronized gram matrix, moment vector and number of arrival times if all servers upload/download all information instantly, respectively. Used for analysis.\\
\bottomrule
\end{tabular}
\end{center}
\label{tab:TableOfNotationForMyResearchCont}
\end{table}

\section{Regret Analysis}\label{sec:regret_proof}
This section is organized as follows. In \cref{apdx_sec:high_probability_events}, we introduce several high probability events for the regret analysis. In \cref{apdx_sec:correct_partition}, we bound the time horizon after which all clusters are detected and partitioned correctly with high probability. In \cref{apdx_sec:regret_after}, we bound the regret after the clusters are detected correctly. In \cref{apdx_sec:put_together}, we put all things together to conclude \cref{thm:mainRegret}. 

\subsection{High Probability Events}\label{apdx_sec:high_probability_events}
We first define five events $B_i(\alpha), i=0, ..., 4,$ that will be helpful for the later analysis and bound the probability that each event happens.

Before we state the definition of the events, recall that $L$ is the number of local servers. $T_{t,i}$ is the number of times user $i$ comes to the system before time $t$, $T_{t,j}^{\ell}$ is the number of times users belong to local cluster $j$ at server $\ell$ comes to the system before time $t$. As defined in \cref{def:accurate}, $\rho_{\min} = \rho$,  $\rho_{\max} = 3\rho + 3\rho d\log T/\log (\min\{U,D\})$ , and $\kappa=\norm{\bar{\bh}^{\ell}_{t,j}}/\sqrt{\rho}$, where $\rho \triangleq 8\sqrt{2}\nu \log (4/\delta) (4\sqrt{d} + 2 \log (2mL/\alpha))/\varepsilon$. We denote $\rho_{0,\max}=3\rho$. 

For the gram matrices, $\bV_{t,i}$ is the gram matrix for user $i$ at time $t$, ${\tilde{\bV}^{\ell}_{t,j}}$ is the gram matrix for cluster detection at the beginning of each phase $t=2^s-1$ (Line~\ref{line:detect_cluster} in \cref{alg:cluster_detect}), $\bS^{\ell}_{t,j}$ is the synchronized gram matrix which takes information from all clusters that are in the same global cluster.

For the confidence intervals, let $\alpha \in (0,1)$ a small failure probability to be tuned later.
We denote intervals $\beta_i(T_{t,i},\alpha)=\sigma_{0}\sqrt{2\log(\frac{1}{\alpha})+d\log(1+\frac{T_{t,i}}{\lambda d})}+\sqrt{\lambda}$, $\beta^{\ell}_j(\tilde{T}^{\ell}_{t,j},0, \alpha)=\sigma_{0}\sqrt{2\log(\frac{1}{\alpha})+d\log(3+\frac{\tilde{T}^{\ell}_{t,j}}{\rho_{\min} d})}+\sqrt{\rho_{0,\max}}+\kappa$, $\beta^{\ell}_j(T^{\ell}_{t,j},L, \alpha)=\sigma_{0}\sqrt{2\log(\frac{1}{\alpha})+d\log(\frac{\rho_{\max}}{\rho_{\min}}+\frac{T^{\ell}_{t,j}}{\rho_{\min} d})}+\sqrt{L\rho_{\max}}+\sqrt{L}\kappa$.

Now we are ready to define a series of events as follows.

$\cB_{0}(\alpha)=\left\{(\rho_{\min}, \rho_{\max}, \kappa) \text{ are }\alpha/(mL)\text{-accurate bounds for } \tilde{\bH}^{\ell}_{t,j}, \text{ for all } t\in[T], j \in [m], \ell \in [L]\right\}$.

$\cB_{1}(\alpha)=\left\{\norm{\btheta_i-\hat{\btheta}_{t,i}}_{\lambda \bI + \bV_{t,i}} \le \beta_i(T_{t,i}, \alpha/n) \text{ for all } t\in [T], i \in [n] \right\}$.

$\cB_{2}(\alpha)=\left\{\lambda_{\min}(\bV_{t,i}) \ge T_{t,i} \lambda_x /8, \text{ for all } T_{t,i} \ge \frac{1024}{\lambda_x^2}\log \frac{512d}{\lambda_x^2 \alpha/n}, \text{ for all } i\in [n]\right\}$.

$\cB_{3}(\alpha)=\left\{\norm{\btheta_j - \hat{\btheta}^{\ell}_{t,j}}_{\tilde{\bV}^{\ell}_{t,j}} \le \beta^{\ell}_j(\tilde{T}^{\ell}_{t,j}, 0, \alpha/(mL)) \text{ for all } t\in[T], j \in [m], \ell \in [L]\right\}$.

$\cB_{4}(\alpha)=\left\{\norm{\btheta^*-\hat{\btheta}^{\ell}_{t,j}}_{\bS^{\ell}_{t,j}} \le \beta^{\ell}_j(T^{\ell}_{t,j}, L, \alpha/(mL)) \text{ for all } t\in[T], j \in [m], \ell \in [L]\right\}$.

For each of the event $B_i(\alpha), i=0, ..., 4$, it is noted that their probability $\Pr\{B_i(\alpha)\}\ge 1-\alpha$, where $B_0(\alpha)$ is given by the definition in \Cref{eq:alpha_accurate} and the tree-based privacy protocol in \cref{sec:algorithm}, $B_1(\alpha)$ is by \citep[Theorem 2]{abbasi2011improved}, $B_2(\alpha)$ is by \citep[Claim 1]{gentile2014online} and \citep[Lemma 7]{li2018online}, $B_3(\alpha)$ and $B_4(\alpha)$ are by \Cref{lemma:cdp_beta} whose proofs are postponed to \cref{apdx_sec:regret_after}.
\subsection{Correctness of the Cluster Detection}\label{apdx_sec:correct_partition}
In this section, we show in \cref{cdp_to} when $T>2T_0(\alpha)$, then with high probability, all local and global clusters are correctly detected and partitioned. In \cref{cdp_t0_rounds}, we give the explicit formulation for $T_0(\alpha)$.
\begin{lemma}\label{cdp_to}
Let $\bar{T}_i(\alpha)=\max\Big\{A(\alpha), B, C, D(\alpha), E(\alpha)\Big\}$, $A(\alpha)=\frac{(96\sigma_{0})^{2}\log(\frac{2}{\alpha})}{\lambda_{x}\gamma^{2}}$, $B=\frac{(96\sigma_{0})^{2}d}{\lambda_{x}\gamma^{2}}\log(\frac{4608\sigma_{0}^{2}}{\lambda_{x}\gamma^{2}})$, $C=\frac{192^2\times6\sqrt{2}\nu(\log(\frac{
4}{\delta}))\sqrt{d}}{\varepsilon\lambda_{x}\gamma^{2}}$, $D(\alpha)=\frac{192^2\times3\sqrt{2}\nu(\log\frac{
4}{\delta})\log(\frac{2mL}{\alpha})} {\varepsilon\lambda_{x}\gamma^{2}}$, $E(\alpha)=\frac{1024}{\lambda_x^2}\log \frac{512nd}{\lambda_x^2 \alpha}$.
When $T>2T_{0}(\alpha) \triangleq  2\times(16n\log(\frac{T}{\alpha}) + 4n\bar{T}_i(\alpha)) $,
then with probability $1-6\alpha$, (a): 
All local clusters are partitioned correctly;

(b) All local clusters are correctly merged at the global server.


\begin{proof}
For any user $i$, it suffices to show that when $T_{t,i}$ is larger than some threshold $\bar{T}_{i}(\alpha)$ (whose value will be settled later), then we have the following two results hold.

(a): Under event $\cB_1(\alpha)$ and $\cB_2(\alpha)$, which happens at least $1-2\alpha$,
\begin{equation}\label{ieq:user_level_correct}
    \norm{\btheta_{i} - \hat{\btheta}_{t,i}}_2\le \frac{\norm{\btheta_{i} - \hat{\btheta}_{t,i}}_{\lambda \bI + \bV_{t,i}}}{\sqrt{\lambda_{\min}(\lambda \bI + \bV_{t,i})}} \le \frac{\beta_i(T_{t,i}, \alpha/n)}{\sqrt{\lambda+T_{t,i}\lambda_x/8}}\le \frac{\sigma_{0}\sqrt{2\log(\frac{n}{\alpha})+d\log(1+\frac{T_{t,i}}{\lambda d})}+\sqrt{\lambda}}{\sqrt{T_{t,i}\lambda_x/8}}  < \frac{\gamma}{4}
\end{equation}
where the last inequality is valid when $T_{t,i} \ge \bar{T}_i(\alpha)$.

When \Cref{ieq:user_level_correct} holds, for any two user $i_1, i_2$ who belong to different clusters, i.e., $\norm{\btheta_{i_1}-\btheta_{i_2}}\ge \gamma$,  $\norm{\hat{\btheta}_{i_1}-\hat{\btheta}_{i_1}}\ge \norm{\btheta_{i_1}-\btheta_{i_2}}_2-\norm{\hat{\btheta}_{i_2}-\btheta_{i_2}}- \norm{\hat{\btheta}_{i_1}-\btheta_{i_1}}> \gamma/2 \ge  \frac{\beta_i(T_{t,i_1}, \alpha/n)}{\sqrt{\lambda+T_{t,i_1}\lambda_x/8}} + \frac{\beta_i(T_{t,i_2}, \alpha/n)}{\sqrt{\lambda+T_{t,i_2}\lambda_x/8}}$, which will trigger the condition in Line~\ref{line:detect_delete} in \Cref{alg:cluster_detect}.
On the other hand, when the condition in Line~\ref{line:detect_delete} in \Cref{alg:cluster_detect} holds, 
$\frac{\beta_i(T_{t,i_1}, \alpha/n)}{\sqrt{\lambda+T_{t,i_1}\lambda_x/8}} + \frac{\beta_i(T_{t,i_2}, \alpha/n)}{\sqrt{\lambda+T_{t,i_2}\lambda_x/8}} < \norm{\hat{\btheta}_{i_1}-\hat{\btheta}_{i_1}}\le \norm{\btheta_{i_1}-\btheta_{i_2}}_2+\norm{\hat{\btheta}_{i_2}-\btheta_{i_2}}+  \norm{\hat{\btheta}_{i_1}-\btheta_{i_1}}\le \norm{\btheta_{i_1}-\btheta_{i_2}}+ \frac{\beta_i(T_{t,i_1}, \alpha/n)}{\sqrt{\lambda+T_{t,i_1}\lambda_x/8}} + \frac{\beta_i(T_{t,i_2}, \alpha/n)}{\sqrt{\lambda+T_{t,i_2}\lambda_x/8}}$, which implies $\norm{\btheta_{i_1}-\btheta_{i_2}}> 0$.
In other words, all local clusters are correct after $T_{t,i} \ge \bar{T}_i(\alpha)$ for all users.

(b): Moreover, under event $\cB_0(\alpha),\cB_2(\alpha)$ and $\cB_3(\alpha)$, for any correctly partitioned local cluster $C_j$ at server $\ell$ with true parameter $\btheta_j$, with probability at least $1-3\alpha$,
\begin{equation}\label{ieq:cluster_level_correct}
    \norm{\btheta_{j} - \hat{\btheta}^{\ell}_{t,j}}\le \frac{\norm{\btheta_{j} - \hat{\btheta}^{\ell}_{t,j}}_{\tilde{\bV}^\ell_{t, j}}}{\sqrt{\lambda_{\min}(\tilde{\bV}^\ell_{t, j})}} \le \frac{\beta^{\ell}_j(\tilde{T}^{\ell}_{t,j}, 0, \alpha/(mL))}{\sqrt{\rho_{\min}+\tilde{T}^{\ell}_{t,j}\lambda_x/8}}\le \frac{\sigma_{0}\sqrt{2\log(\frac{mL}{\alpha})+d\log(3+\frac{\tilde{T}^{\ell}_{t,j}}{\rho_{\min} d})}+\sqrt{\rho_{0,\max}}+\kappa}{\sqrt{\tilde{T}^{\ell}_{t,j}\lambda_x/8}}  < \frac{\gamma}{4}
\end{equation},
where $\tilde{T}^{\ell}_{t,j}=\sum_{i \in C_j}T_{t, i}$ and $\hat{\btheta}_{t,j}= (\tilde{\bV}^\ell_{t, j})^{-1}\tilde{\bb}^l_{t,j}$.

Here $\beta^{\ell}_j(\tilde{T}^{\ell}_{t,j}, 0, \alpha/(mL))$ is irrelevant to $L$ or $m$ (except probability $\alpha/(mL))$ because the merge operation only uses the the local observations from server $\ell$ with local noise added at the beginning of each phase, but doesn't contain any information from other local servers. 
According to similar argument, we can show for any two cluster $j_1,j_2$ who belong to the same cluster, i.e., $\norm{\btheta_{j_1}-\btheta_{j_2}}= 0$, we have $\norm{\hat{\btheta}_{j_1}-\hat{\btheta}_{j_2}}\le  \frac{\beta^{\ell}_j(\tilde{T}^{\ell}_{t,j_1}, 0, \alpha/(mL))}{\sqrt{\rho_{\min}+\tilde{T}^{\ell}_{t,j_1}\lambda_x/8}} + \frac{\beta^{\ell}_j(\tilde{T}^{\ell}_{t,j_2}, 0, \alpha/(mL))}{\sqrt{\rho_{\min}+\tilde{T}^{\ell}_{t,j_2}\lambda_x/8}}$, which will trigger the condition in Line~\ref{line:detect_merge} in \Cref{alg:cluster_detect}.
On the other hand, if the above condition is triggered, then it implies $\norm{\btheta_{j_1}-\btheta_{j_2}}< \gamma$. In other words, all global clusters are merged correctly.

Now for the last inequality of \Cref{ieq:user_level_correct} and \Cref{ieq:cluster_level_correct}, we only need to prove the following inequality holds for any $i \in [n]$, 
\begin{equation}\label{ieq:sufficient_correct}
    \frac{\sigma_{0}\sqrt{2\log(\frac{n}{\alpha})+d\log(3+\frac{T_{t,i}}{ d})}+\sqrt{\rho_{0,\max}}+\kappa}{\sqrt{T_{t,i}\lambda_x/8}}  \le \frac{\gamma}{4}
\end{equation}
, which uses the observation that $\rho_{0,\max} > \lambda \ge 1$ and $\kappa_t > 0$ for \Cref{ieq:user_level_correct} and $n \ge mL, \lambda_{0,\min} \ge 1, \tilde{T}^{\ell}_{t,j}=\sum_{i \in C_j}T_{t, i}\ge T_{t,i}$ for \Cref{ieq:cluster_level_correct}. 

After we prove the sufficient $\bar{T}_{i}(\alpha)$ as shown in \Cref{cdp_t0_rounds} so that \Cref{ieq:sufficient_correct} holds,
now by \citep[Lemma 8]{li2018online}, at global time $T_0(\alpha) = 16n\log(\frac{T}{\alpha}) + 4n\bar{T}_i(\alpha)$, \Cref{ieq:sufficient_correct} is correct with probability at least $1-\alpha$. Then at the next cluster detection (which occurs at most two times of $T_0(\alpha)$), the global server will partition the global clusters correctly with probability at least $1-6\alpha$ (by using union bounds of all corresponding events).

\end{proof}
\end{lemma}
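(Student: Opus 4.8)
The plan is to prove both claims by controlling, uniformly over time, the Euclidean distance between each estimate and its target, and then translating the resulting ``enough samples'' requirement into a bound on the global horizon. I would work on the high-probability events $\cB_0(\alpha),\dots,\cB_4(\alpha)$ introduced in \cref{apdx_sec:high_probability_events}, each holding with probability at least $1-\alpha$, so that a final union bound over them yields the claimed $1-6\alpha$. The whole argument hinges on a single threshold: once every user $i$ has arrived at least $\bar{T}_i(\alpha)$ times, both the user-level and the cluster-level estimation errors fall below $\gamma/4$, at which point the gap assumption $\norm{\btheta_{i_1}-\btheta_{i_2}}\ge\gamma$ makes the deletion and merge rules exact.

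For claim (a), I would start from the matrix-norm confidence bound in event $\cB_1(\alpha)$ and convert it to a $2$-norm bound by dividing by the smallest eigenvalue, which event $\cB_2(\alpha)$ lower-bounds by $\lambda+T_{t,i}\lambda_x/8$. This gives $\norm{\btheta_i-\hat{\btheta}_{t,i}}_2 \le \beta_i(T_{t,i},\alpha/n)/\sqrt{\lambda+T_{t,i}\lambda_x/8}$, and I would force the right-hand side below $\gamma/4$. With this margin, a two-sided triangle-inequality argument shows that the deletion rule in \Cref{eq:edge_deletion} is triggered for a pair exactly when the two users have distinct true vectors: if they differ then their estimates are separated by more than $\gamma/2$, and if the rule fires then their true separation must be strictly positive. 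Hence the connected components of $\cG^\ell_s$ recover the true local clusters on every server.

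For claim (b), the subtlety is that the aggregated cluster statistics carry the Gaussian perturbation injected by the privatizer, so I would use the cluster-level confidence radius $\beta^\ell_j(\tilde{T}^\ell_{t,j},0,\alpha/(mL))$ supplied by event $\cB_3(\alpha)$, whose extra additive terms $\sqrt{\rho_{0,\max}}$ and $\kappa$ are controlled by the spectral bounds of \Cref{def:accurate} under $\cB_0(\alpha)$. Crucially this radius does not depend on $L$ (only on the single server's noise), because the merge step at the start of each phase uses only locally perturbed statistics and imports no data from other servers. Combining with $\cB_2(\alpha)$ I would again obtain $\norm{\btheta_j-\hat{\btheta}^\ell_{t,j}}<\gamma/4$ and run the mirror-image triangle-inequality argument to show that the merge rule in \Cref{eq:merge} fires precisely for those local clusters sharing a common true parameter.

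Finally, I would observe that both right-hand sides are dominated by a single sufficient inequality, and solving that inequality for $T_{t,i}$ produces the explicit threshold $\bar{T}_i(\alpha)=\max\{A(\alpha),B,C,D(\alpha),E(\alpha)\}$, a computation I would isolate in \cref{cdp_t0_rounds}. To pass from a per-user sample count to the global clock, I would invoke a coupon-collector-style concentration \citep[Lemma 8]{li2018online} guaranteeing that after $T_0(\alpha)=16n\log(T/\alpha)+4n\bar{T}_i(\alpha)$ rounds every user has been seen at least $\bar{T}_i(\alpha)$ times with probability $1-\alpha$; because cluster detection only runs at phase boundaries $t=2^s-1$, waiting until $2T_0(\alpha)$ guarantees that a correct detection epoch has occurred. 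I expect the main obstacle to be the threshold inequality itself: $T_{t,i}$ appears both under a square root in the denominator and inside a $d\log(\cdot+T_{t,i}/\cdot)$ term, and the perturbation scale $\rho$ (hence $\kappa$) grows like $\nu=O(\log T)$, so the privacy-induced contributions must be disentangled from the non-private ones — this separation is exactly what generates the distinct terms $C$ and $D(\alpha)$ and forces the case split in the maximum defining $\bar{T}_i(\alpha)$.
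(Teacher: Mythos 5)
Your proposal is correct and follows essentially the same route as the paper's proof: the same events $\cB_0$--$\cB_3$ plus the arrival-concentration bound of \citep[Lemma 8]{li2018online}, the same $\gamma/4$ error threshold driving the two-sided triangle-inequality arguments for both the local deletion rule and the global merge rule, the same reduction to a single sufficient inequality solved in \Cref{cdp_t0_rounds}, and the same factor-of-two slack to wait for the next phase boundary. You even reproduce the paper's key side remark that the merge-step radius $\beta^{\ell}_j(\tilde{T}^{\ell}_{t,j},0,\alpha/(mL))$ is independent of $L$ because only locally perturbed statistics enter the merge.
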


The following lemma states a threshold after which \Cref{ieq:sufficient_correct} will hold.
\begin{lemma}\label{cdp_t0_rounds}
When $T_{t,i} > \bar{T}_i(\alpha)=\max\Big\{A(\alpha), B, C, D(\alpha), E(\alpha)\Big\}$, it holds that
\begin{equation}
\frac{\sigma_{0}\sqrt{2\log(\frac{n}{\alpha})+d\log(3+\frac{T_{t,i}}{d})}+\sqrt{\rho_{0,\max}}+\kappa}{\sqrt{T_{t,i}\lambda_{x}/8}} < \frac{\gamma}{4}
\end{equation}
, where 
 $A(\alpha)=\frac{(96\sigma_{0})^{2}\log(\frac{2}{\alpha})}{\lambda_{x}\gamma^{2}}$, $B=\frac{(96\sigma_{0})^{2}d}{\lambda_{x}\gamma^{2}}\log(\frac{4608\sigma_{0}^{2}}{\lambda_{x}\gamma^{2}})$, $C=\frac{192^2\times6\sqrt{2}\nu(\log(\frac{
4}{\delta}))\sqrt{d}}{\varepsilon\lambda_{x}\gamma^{2}}$, $D(\alpha)=\frac{192^2\times3\sqrt{2}\nu(\log\frac{
4}{\delta})\log(\frac{2mL}{\alpha})} {\varepsilon\lambda_{x}\gamma^{2}}$, $E(\alpha)=\frac{1024}{\lambda_x^2}\log \frac{512nd}{\lambda_x^2 \alpha}$.
\begin{proof}
We can divide this proof into three parts: $\frac{\sigma_{0}\sqrt{2\log(\frac{n}{\alpha})+d\log(3+\frac{T_{t,i}}{d})}}{\sqrt{T_{t,i}\lambda_{x}/8}} < \frac{\gamma}{12}$,$\frac{\sqrt{\rho_{0,max}}}{\sqrt{T_{t,i}\lambda_{x}/8}} < \frac{\gamma}{12}$ and $\frac{\kappa}{\sqrt{T_{t,i}\lambda_{x}/8}} < \frac{\gamma}{12}$.

The first part can be satisfied by (a)$\frac{\sigma_{0}\sqrt{2\log(n/\alpha)}}{\sqrt{T_{t,i}\lambda_{x}/8}}< \frac{\gamma}{24}$ and (b) $\frac{\sigma_{0}\sqrt{d\log(3+T_{t,i}/d)}}{\sqrt{T_{t,i}\lambda_{x}/8}} < \frac{\gamma}{24}$. And (a) can be satisfied by $T_{t,i}>\frac{(96\sigma_{0})^{2}\log(\frac{2}{\alpha})}{\gamma^{2}\lambda_{x}}$, (b) by \citep[Lemma 9]{li2018online} can be satisfied by $T_{t,i} > \frac{(96\sigma_{0})^{2}d}{\lambda_{x}\gamma^{2}}\log(\frac{4608\sigma_{0}^{2}}{\lambda_{x}\gamma^{2}})$.

Then consider the second part. Since $\rho_{0,max} \le 24\sqrt{2}\nu\log(\frac{4}{\delta})(4\sqrt{d}+2\log(\frac{2mL}{\alpha}))/\varepsilon$, we can prove this part by (c) $\frac{96\times 8\sqrt{2}\nu\log(\frac{4}{\delta})(\sqrt{d}) }{T_{t,i}\varepsilon\lambda_{x}} < \frac{\gamma^{2}}{2\times 12^2}$ and (d) $\frac{48\times8\sqrt{2}\nu\log(\frac{4}{\delta})(\log(\frac{mL}{\alpha})) }{T_{t,i}\varepsilon\lambda_{x}} < \frac{\gamma^{2}}{2\times 12^2}$.
By some math calculation, (c) can be satisfied by $T_{t,i} > \frac{192^2\times6\sqrt{2}\nu(\log(\frac{
4}{\delta}))\sqrt{d}}{\varepsilon\lambda_{x}\gamma^{2}}$ and (d) can be satisfied when  $T_{t,i} > \frac{192^2\times3\sqrt{2}\nu(\log\frac{
4}{\delta})\log(\frac{2mL}{\alpha})} {\varepsilon\lambda_{x}\gamma^{2}}$.

Finally we prove the third part, since $\kappa = \sqrt{2\nu(\sqrt{d}+2\log(2mL/\alpha))/(\sqrt{2}\varepsilon)}$ , the third part has a similar form to the second part, so their proof are similar. We can get the third part could be satisfied by $T_{t,i} > \frac{48^2\sqrt{2}\nu\sqrt{d}}{\varepsilon\lambda_{x}\gamma^{2}}$ and $T_{t,i} >  \frac{48^2\times 2\sqrt{2}\nu\log (\frac{2mL}{\alpha})}{\varepsilon\lambda_{x}\gamma^{2}}$.

Considering the $T_{t,i} \ge\frac{1024}{\lambda_x^2}\log \frac{512d}{\lambda_x^2 \alpha/n} $ required by $B_1(\alpha)$, and put all these together the \cref{cdp_t0_rounds} is proved.
\end{proof}
\end{lemma}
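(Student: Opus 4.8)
The plan is to establish the single aggregate inequality by splitting the numerator into its three additive pieces and showing that, once divided by $\sqrt{T_{t,i}\lambda_x/8}$, each piece is below $\gamma/12$; since $3\cdot(\gamma/12)=\gamma/4$, adding the three bounds gives the claim. Concretely, I would reduce the lemma to the three sub-goals
\begin{gather*}
\frac{\sigma_0\sqrt{2\log(n/\alpha)+d\log(3+T_{t,i}/d)}}{\sqrt{T_{t,i}\lambda_x/8}}<\frac{\gamma}{12},\\
\frac{\sqrt{\rho_{0,\max}}}{\sqrt{T_{t,i}\lambda_x/8}}<\frac{\gamma}{12},\qquad
\frac{\kappa}{\sqrt{T_{t,i}\lambda_x/8}}<\frac{\gamma}{12},
\end{gather*}
and then argue that a threshold enforcing all three is exactly the stated maximum.

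For the first sub-goal I would apply $\sqrt{a+b}\le\sqrt a+\sqrt b$ to separate the term $\sigma_0\sqrt{2\log(n/\alpha)}$, which is constant in $T_{t,i}$, from the growing term $\sigma_0\sqrt{d\log(3+T_{t,i}/d)}$, requiring each to fall below $\gamma/24$. The constant piece is a clean rearrangement producing $T_{t,i}>A(\alpha)$; here the factor $96^2=9216=2\cdot 8\cdot 24^2$ simply tracks the $2$ inside the logarithm, the $8$ from $\lambda_x/8$, and the $24^2$ from the split. The growing piece is \emph{self-referential}, since $T_{t,i}$ appears both inside the numerator logarithm and in the denominator; I would discharge it with the standard transcendental estimate \citep[Lemma 9]{li2018online}, which turns $\frac{\sqrt{d\log(3+x/d)}}{\sqrt x}\le c$ into a condition of the form $x\ge\Theta\!\big(\tfrac{d}{c^2}\log\tfrac{1}{c^2}\big)$, yielding $T_{t,i}>B$. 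This self-reference is the only genuinely non-routine step; everything else is bookkeeping of absolute constants.

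For the remaining two sub-goals, neither $\rho_{0,\max}$ nor $\kappa$ depends on $T_{t,i}$, so no self-reference arises and the bounds are pure algebra. I would substitute $\rho_{0,\max}=3\rho$ with $\rho=8\sqrt2\,\nu\log(4/\delta)(4\sqrt d+2\log(2mL/\alpha))/\varepsilon$, square the target inequality, and split the bracket $4\sqrt d+2\log(2mL/\alpha)$ into its $\sqrt d$ part and its $\log(2mL/\alpha)$ part, demanding each to sit below $\gamma^2/(2\cdot 12^2)$; reading off the resulting thresholds gives $C$ and $D(\alpha)$. The quantity $\kappa$ has the same functional shape in $\nu,d,\varepsilon$ and $\log(2mL/\alpha)$, so its two thresholds are of the same form as $C$ and $D(\alpha)$ and are dominated by them up to absolute constants, contributing nothing new to the maximum.

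Finally, I would note that the denominator $\sqrt{T_{t,i}\lambda_x/8}$ is legitimate only once the minimal-eigenvalue bound $\lambda_{\min}(\bV_{t,i})\ge T_{t,i}\lambda_x/8$ holds, i.e.\ on event $\cB_2(\alpha)$, which needs $T_{t,i}\ge E(\alpha)$; this is why $E(\alpha)$ is carried into $\bar T_i(\alpha)$ even though the displayed fraction itself is governed by $A,B,C,D$. Taking $\bar T_i(\alpha)=\max\{A(\alpha),B,C,D(\alpha),E(\alpha)\}$ simultaneously satisfies all sub-goals. I expect the self-referential growing piece of the first sub-goal to be the main obstacle, as it is the one step not reducible to elementary constant-chasing.
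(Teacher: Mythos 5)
Your proposal is correct and follows essentially the same route as the paper's own proof: the identical three-way split at $\gamma/12$, the same $\gamma/24$ sub-split of the $\sigma_0$ term with \citep[Lemma 9]{li2018online} handling the self-referential logarithmic piece, the same squaring-and-splitting of $\rho_{0,\max}$ into the $\sqrt{d}$ and $\log(2mL/\alpha)$ parts yielding $C$ and $D(\alpha)$, and the same observation that the $\kappa$ thresholds are dominated by these. Your attribution of $E(\alpha)$ to the minimal-eigenvalue event $\cB_2(\alpha)$ is in fact more accurate than the paper's text, which mislabels it as $B_1(\alpha)$.
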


\subsection{Regret Bound After Clusters are Correctly Detected}\label{apdx_sec:regret_after}
In this section, we prove the regret upper bound after the clusters are correctly detected and partitioned. Such a bound relies on the confidence interval given by \cref{lemma:cdp_beta}.
\begin{proposition}\label{thm:after_clustering_regret}
After $2T_{0}(\alpha/8)$ rounds, with probability at least $(1-\frac{3\alpha}{4})$, the clusters on both local servers and global servers are correctly partitioned, and the partition won't change in later rounds. Then fix any true global cluster $j \in [m]$ whose local clusters are scattered on $L' \le L$ servers, let users belong to $j$ appear total $T_j$ times, then with probability $(1-\frac{\alpha}{4})$, the total regret for true cluster $j$ is bounded by
\begin{align*}
R_j(T_j) &\le (\sigma_{0}\sqrt{2\log(\frac{8mL}{\alpha})+d\log(\frac{\rho_{\max}}{\rho_{\min}}+\frac{T_{j}}{d\rho_{\min}})}+\sqrt{L\rho_{\max}}+\kappa\sqrt{L})(\sqrt{d\log(1+\frac{T}{d\rho_{\min}})})\Gamma\sqrt{T_j} \\
&\le  O\Big(\sqrt{L\frac{d\log T}{\log \min\{U,D\}} \frac{\log\log T \log (1/\delta) (\sqrt{d}+\log T)}{\varepsilon}}\Gamma\sqrt{d\log T }\sqrt{T_j}\Big)
\end{align*}
and the total regret is bounded by
\begin{align}\label{apdx_eq:prop1}
    R(T) &\le \sum_{j \in [m]}R_j(T_j)\le \tilde{O}\Big(dL\sqrt{mT\frac{\log(1/\delta)}{\varepsilon}}\log^{1.5} T \Big)
\end{align}
where $\Gamma=\sqrt{D(1+(L'-1)(U-1))+ U-1}$ and $ \sqrt{d} < \log T $.
\end{proposition}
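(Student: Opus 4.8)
The plan is to follow the standard optimism-based analysis for contextual linear bandits, but with two modifications to handle (i) the lagged, asynchronously communicated statistics and (ii) the privacy perturbation already absorbed into the confidence width $\beta^{\ell}_{t,j}$ of \Cref{lemma:cdp_beta}. Throughout I would condition on the event $\cB_4(\alpha/8)$ together with the correct-partition event from \cref{cdp_to}, which jointly hold with probability at least $1-\frac{3\alpha}{4}-\frac{\alpha}{4}=1-\alpha$; on this event every observation routed through cluster $j$ indeed comes from users sharing the true parameter $\btheta^*$, and the estimate obeys $\norm{\btheta^*-\hat{\btheta}^{\ell}_{t,j}}_{\bS^{\ell}_{t,j}}\le\beta^{\ell}_{t,j}$.

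First I would bound the per-round regret. Because the recommendation in \cref{line:recommend} is the optimistic choice $\bx_t=\argmax_{\bx\in\bD_t}\bx^{\top}\hat{\btheta}_t+\beta^{l_t}_{t,j_t}\norm{\bx}_{(\bS^{l_t}_{t,j_t})^{-1}}$, the usual UCB argument gives $r_t\le 2\beta^{l_t}_{t,j_t}\norm{\bx_t}_{(\bS^{l_t}_{t,j_t})^{-1}}$. The difficulty is that the matrix $\bS^{l_t}_{t,j_t}$ actually used is only the locally synchronized version and lags behind the fully synchronized matrix $\bV_{t,j}$, i.e.\ the matrix one would have if all $L'$ servers shared every observation instantly.

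Second --- and this is the crux --- I would convert the lagged norm into the synchronized norm. Since $\bS^{l_t}_{t,j_t}\preceq\bV_{t,j}$ (both carry the same $\rho\bI$-type baseline), the determinant-ratio inequality $\norm{\bx}^2_{(\bS^{l_t}_{t,j_t})^{-1}}\le\frac{\det(\bV_{t,j})}{\det(\bS^{l_t}_{t,j_t})}\norm{\bx}^2_{(\bV_{t,j})^{-1}}$ reduces everything to controlling the determinant ratio. I would factor it as $\frac{\det(\bV_{t,j})}{\det(\bS^g_{t,k})}\cdot\frac{\det(\bS^g_{t,k})}{\det(\bS^{l_t}_{t,j_t})}$: the download condition \cref{eq:download_condition} caps the second factor by $D$, while the upload condition \cref{eq:upload_condition} caps each server's un-uploaded buffer by a $(U-1)$-type slack. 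Combining the $L'-1$ foreign buffers (each downloaded-and-stale, hence carrying the compounded factor $D(U-1)$) with server $l_t$'s own fresh buffer (factor $U-1$) yields $\frac{\det(\bV_{t,j})}{\det(\bS^{l_t}_{t,j_t})}\le D(1+(L'-1)(U-1))+U-1=\Gamma^2$. This is the step I expect to be the main obstacle, because the determinant of a sum of buffers does not factor multiplicatively, so one must argue carefully through $\det(A+\sum_i B_i)/\det(A)$ using that each individual threshold-bounded $\det(A+B_i)/\det(A)$ contributes additively to the slack.

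Third, with $r_t\le 2\beta^{l_t}_{t,j_t}\Gamma\norm{\bx_t}_{(\bV_{t,j})^{-1}}$ in hand, I would sum over the $T_j$ rounds assigned to cluster $j$. Cauchy--Schwarz gives $R_j(T_j)\le 2\beta_{\max}\Gamma\sqrt{T_j}\sqrt{\sum_t\norm{\bx_t}^2_{(\bV_{t,j})^{-1}}}$, where $\beta_{\max}$ is the final (largest) value of $\beta^{\ell}_{t,j}$ at $T^{\ell}_{t,j}=T_j$, and the elliptical potential lemma bounds the inner sum by $2d\log(1+\frac{T_j}{d\rho_{\min}})$, producing the first displayed inequality. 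Finally, summing over clusters and applying Cauchy--Schwarz through $\sum_j\sqrt{T_j}\le\sqrt{m\sum_j T_j}=\sqrt{mT}$ turns the per-cluster bounds into $R(T)\le\sum_j R_j(T_j)$; substituting the explicit spectral quantities $\rho_{\min}=\rho$, $\rho_{\max}=O(\rho\,d\log T/\log\min\{U,D\})$, $\kappa=\norm{\bar{\bh}}/\sqrt{\rho}$ and $\rho=O(\nu\log(1/\delta)(\sqrt d+\log(mL/\alpha))/\varepsilon)$ with $\nu=O(\log T)$, and choosing $\alpha=1/T$, collapses the logarithmic and privacy factors into the claimed $\tilde{O}\big(dL\sqrt{mT\log(1/\delta)/\varepsilon}\,\log^{1.5}T\big)$.
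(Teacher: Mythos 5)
Your overall architecture matches the paper's proof almost step for step: optimism giving $r_t\le 2\beta^{l_t}_{t,j_t}\norm{\bx_t}_{(\bS^{l_t}_{t,j_t})^{-1}}$, conversion of the lagged local norm into the fully-synchronized norm $\norm{\bx_t}_{\bV_{t,j}^{-1}}$ at multiplicative cost $\Gamma$, the elliptical potential lemma, Cauchy--Schwarz over rounds and over clusters, and substitution of the spectral bounds (the paper applies the ratio to the parameter-difference norms $\norm{\cdot}_{\bV_{t,j}}$ rather than to the feature norm, an immaterial reorganization). The gap is precisely in the step you flag as the crux: the inequality you need, $\det(\bV_{t,j})/\det(\bS^{l_t}_{t,j_t})\le\Gamma^2$, is \emph{false} in general. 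The upload and download conditions \Cref{eq:upload_condition} and \Cref{eq:download_condition} control determinant ratios only term by term, and the determinant of a sum of buffers does not contribute "additively to the slack": $\det(\bV_{t,j})/\det(\bS^{l_t}_{t,j_t})$ is the \emph{product} of all $d$ eigenvalues of $(\bS^{l_t}_{t,j_t})^{-1/2}\bV_{t,j}(\bS^{l_t}_{t,j_t})^{-1/2}$, while the conditions only force each single eigenvalue below $\Gamma^2$, so the honest worst-case bound along your route is $\Gamma^{2d}$ (or, by peeling off buffers one at a time, $D\,U^{L'}$), not $\Gamma^2$. Concretely: take $L'=2$, $\bS^g_{t,k}=\bS^{l_1}_{t,j_1}=\bS^{l_2}_{t,j_2}=c\bI$ (as happens right after a phase renewal), both upload buffers isotropic and just below threshold (each with determinant ratio $\approx U$), and $D$ close to $1$; then for large $d$ the ratio $\det(\bV_{t,j})/\det(\bS^{l_1}_{t,j_1})$ approaches $U^2$, whereas $\Gamma^2\approx 2U-1<U^2$. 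With $L'$ servers the ratio approaches $U^{L'}$ while $\Gamma^2\approx 1+L'(U-1)$, so the violation can be arbitrarily large.

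The repair—and what the paper actually does in \Cref{lemma:delay}—is to bound the Rayleigh quotient instead of the determinant ratio: $P_t=\max_{\btheta\neq 0}\sqrt{\btheta^{\top}\bV_{t,j}\btheta/\btheta^{\top}\bS^{l_t}_{t,j_t}\btheta}\le\Gamma$. Quadratic forms, unlike determinants, are additive over the decomposition $\bV_{t,j}=\bS^{g}_{t,k_t}+\sum_{k}\Delta\bS^{l_k}_{t,j_k}$, so each buffer's contribution can be isolated and \emph{then} bounded by a determinant ratio via $\sup_{\bx\neq 0}\frac{\bx^{\top}\bA\bx}{\bx^{\top}\bB\bx}\le\frac{\det(\bA)}{\det(\bB)}$ \citep[Lemma 12]{abbasi2011improved}, using \Cref{eq:upload_condition}, \Cref{eq:download_condition} and $\bS^{l_k}_{t,j_k}\preceq\bS^{g}_{t,k_t}$; the terms then sum to exactly $D+(U-1)+(L'-1)D(U-1)=\Gamma^2$. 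This slots directly into your version of the argument, because $\sup_{\bx\neq 0}\norm{\bx}^2_{(\bS^{l_t}_{t,j_t})^{-1}}/\norm{\bx}^2_{\bV_{t,j}^{-1}}=\lambda_{\max}\bigl((\bS^{l_t}_{t,j_t})^{-1/2}\bV_{t,j}(\bS^{l_t}_{t,j_t})^{-1/2}\bigr)=P_t^2$, so $\norm{\bx_t}_{(\bS^{l_t}_{t,j_t})^{-1}}\le\Gamma\norm{\bx_t}_{\bV_{t,j}^{-1}}$ follows; everything downstream in your proposal (potential lemma, $\sum_j\sqrt{T_j}\le\sqrt{mT}$, substitution of $\rho_{\min},\rho_{\max},\kappa,\nu$ and the choice of $\alpha$) coincides with the paper and goes through unchanged.
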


\begin{proof}
We consider the case when all true clusters are correctly detected and partitioned both locally and globally.
Suppose the user belong to global cluster $j$ whose underlying preference vectors are $\btheta^*$ comes to the server at time slots $\{t_1, ..., t_{T_j}\}$.
For this true cluster $j$, users belong to $j$ lies in $L'$ servers to form local clusters be $\{(\ell_1,j_1), ..., (\ell_{L'}, j_{L'})\}$.
With a little abuse of notation, we denote the perturbation at time $\tau$ $\sum_{p=1}^{L'}\tilde{\bH}^{\ell_p}_{\tau, j_p}$ and $\sum_{p=1}^{L'}\bar{\bh}^{\ell_p}_{\tau, j_p}$ by $\tilde{\bH}_{\tau}$ and $\bar{\bh}_{\tau}$, respectively.
At time $t=t_k, k \in [T_j]$, user $i_{t}$ in server $\ell_{t}$ and local cluster $j_{t}$ arrives, with candidate feature vector sets $\bD_t=(\bx_{t, 1}, ..., \bx_{t,K})$.
With a little bit abuse of the notation, let $\hat{\btheta}_{t,i_t}\triangleq \hat{\btheta}^{\ell_t}_{t,j_t}$
Recall that $(\tilde{\btheta}_{t},\bx_t)=\argmax_{(\btheta,\bx) \in \cC_t \times \bD_t}\prdct{\btheta, \bx}$ is the action selected by our algorithm and $\cC_t=\{\btheta\in \R^d: \norm{ \hat{\btheta}_{t,i_t}-\btheta}_{\bS^{\ell_t}_{t,j_t}} \le \beta^{\ell_t}_{t,j_t}\}$ as defined in \cref{lemma:cdp_beta}.
Denote the full-synchronized gram matrix from all servers for cluster $j$ without any perturbation as $\bV_{t,j}=\sum_{\tau=t_k, k \in [T_j]}\bx_{\tau} \bx_{\tau}^{\top} + \tilde{\bH}_{t}$ (same for $b_{t,j}$ and $T_{t,j}$).
Let $\bx^{*}=\argmax_{\bx \in \bD_t}\prdct{\btheta^{*},\bx_{t}}$.
The instantaneous regret $r_{t,j}$ for cluster $j$ can be written as:

\begin{align}
r_{t,j}&=\prdct{\btheta^{*},\bx^*}-\prdct{\btheta^{*},\bx_t}\notag\\
&\le \prdct{\tilde{\btheta}_t, \bx_t} - \prdct{\btheta^{*}, \bx_t} \label{eq:apdx_alg_select}\\
&=\prdct{\tilde{\btheta}_t-\hat{\btheta}_{t,i_t}, \bx_t} + \prdct{\hat{\btheta}_{t,i_t}-\btheta^*, \bx_t}\notag\\
&\le \norm{\tilde{\btheta}_t-\hat{\btheta}_{t,i_t}}_{\bV_{t,j}} \norm{\bx_t}_{\bV_{t,j}^{-1}} + \norm{\hat{\btheta}_{t,i_t}-\btheta^*}_{\bV_{t,j}} \norm{\bx_t}_{\bV_{t,j}^{-1}} \label{eq:apdx:cauchy_1}\\
&=  \norm{\tilde{\btheta}_t-\hat{\btheta}_{t,i_t}}_{\bS^{l_t}_{t,j_t}} \norm{\bx_t}_{V^{-1}_{t}} \frac{\norm{\tilde{\btheta}_t-\hat{\btheta}_{t,i_t}}_{\bV_{t,j}}}{\norm{\tilde{\btheta}_t-\hat{\btheta}_{t,i_t}}_{\bS^{l_t}_{t,j_t}}} + \norm{\hat{\btheta}_{t,i_t}-\btheta^*}_{\bS^{l_t}_{t,j_t}} \norm{\bx_t}_{\bV_{t,j}^{-1}} \frac{\norm{\hat{\btheta}_{t,i_t}-\btheta^*}_{\bV_{t,j}} }{\norm{\hat{\btheta}_{t,i_t}-\btheta^*}_{\bS^{l_t}_{t,j_t}}}\notag\\
&\le 2P_t\beta_{t, j_t}^{l_t} \norm{\bx_t}_{\bV_{t,j}^{-1}}\le 2\Gamma\beta_{t, j_t}^{l_t} \norm{\bx_t}_{\bV_{t,j}^{-1}},\label{eq:apdx_instant_regret}
\end{align}
where \Cref{eq:apdx_alg_select} is because the definition of the selected item $\bx_t$, \Cref{eq:apdx:cauchy_1} is by Cauchy Schwarz inequality and \Cref{eq:apdx_instant_regret} is by \Cref{lemma:delay}.

By summing over all instant regrets, we have
\begin{align}\label{eq:apdx_total_regret}
    R_j(T_j)&\le \sum_{t \in [T_j]} r_{t,j}\\
    &\le 2\Gamma \beta^{\ell_t}_{t,j_t} \sum_{t \in [T_j]}\norm{\bx_t}_{\bV_{t,j}^{-1}}\notag\\
    &\le 2\Gamma \beta^{\ell}_j(T, L, \alpha/(mL)) \sqrt{T_j\sum_{t \in [T]}\norm{\bx_t}^2_{\bV_{t,j}^{-1}}}\label{apdx_eq:cauchy_sw2}\\
    &\le 2\Gamma \beta^{\ell}_j(T, L, \alpha/(mL)) \sqrt{T_j}\sqrt{d\log (1+ \frac{T}{d\rho_{\min}})}\label{eq:apdx_potential}\\
    &\le (\sigma_{0}\sqrt{2\log(\frac{8mL}{\alpha})+d\log(\frac{\rho_{\max}}{\rho_{\min}}+\frac{T_{j}}{d\rho_{\min}})}+\sqrt{L\rho_{\max}}+\kappa\sqrt{L})(\sqrt{d\log(1+\frac{T}{d\rho_{\min}})})\Gamma\sqrt{T_j}\label{eq:apdx_lemma4}
\end{align}
where \Cref{apdx_eq:cauchy_sw2} is due to the Cauchy-Schwarz inequality, \Cref{eq:apdx_potential} is by \citep[Lemma 22]{shariff2018differentially} and \Cref{eq:apdx_lemma4} is by \cref{lemma:cdp_beta}.

\end{proof}

The following lemma bounds the ratio caused by using the delayed information $\bS_{t,j_t}^{\ell_t}$ instead of the fully-synchronized information $\bV_{t,j}$.
\begin{lemma}\label{lemma:delay}
Let $P_{t} = \max_{\btheta \neq 0}\sqrt{\frac{\btheta^{\top}\bV_{t,j}\btheta}{\btheta^{\top}\bS^{l_t}_{t, j_t}\btheta}}$, then
\begin{equation}
    P_{t} \le\Gamma\triangleq\sqrt{D(1+(L'-1)(U-1))+ U-1}
\end{equation}
\end{lemma}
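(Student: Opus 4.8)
The plan is to upgrade the generalized-eigenvalue bound $P_t\le\Gamma$ into a Loewner-order statement $\bV_{t,j}\preceq\Gamma^2\bS^{l_t}_{t,j_t}$, since by definition $P_t^2=\max_{\btheta\neq 0}(\btheta^\top\bV_{t,j}\btheta)/(\btheta^\top\bS^{l_t}_{t,j_t}\btheta)$ is exactly the largest generalized eigenvalue of the pair $(\bV_{t,j},\bS^{l_t}_{t,j_t})$. The workhorse will be the elementary determinant-to-PSD lemma: if $A\succ 0$, $B\succeq 0$ and $\det(A+B)\le c\,\det(A)$, then writing the eigenvalues of $A^{-1/2}BA^{-1/2}$ as $\mu_1,\dots,\mu_d\ge 0$ we have $1+\max_i\mu_i\le\prod_i(1+\mu_i)=\det(A+B)/\det(A)\le c$, hence $A^{-1/2}BA^{-1/2}\preceq(c-1)\bI$, i.e. $B\preceq(c-1)A$.

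First I would write the synchronized matrix as the active server's local copy plus everything that has not yet reached it. Using the bookkeeping of \Cref{alg:check_upload} and \Cref{alg:check_download}, the content of $\bS^{l_t}_{t,j_t}$ is the active server's uploaded data together with all data it has downloaded, so the information missing relative to $\bV_{t,j}$ splits into three buffers: the active server's own upload buffer $\Delta\bS^{l_t}_{t,j_t}$, its download buffer $\Delta\bS^{-l_t}_{t,j_t}$ (data already at the global server but not yet pulled down), and the still-unshared upload buffers $\Delta\bS^{\ell_p}_{t,j_p}$ of the other $L'-1$ local clusters in the same global cluster. The paddings $2\rho\bI+\bH$ and $3\rho\bI+\bH-\bar{\bH}$ inserted by the protocol are all $\succeq\rho\bI\succ 0$ (since $\norm{\bH}_{op},\norm{\bar{\bH}}_{op}\le\rho$), which keeps every buffer PSD and every matrix above $\rho\bI$; tracking these perturbation terms is what makes the relation $\bV_{t,j}\preceq\bS^{l_t}_{t,j_t}+\Delta\bS^{l_t}_{t,j_t}+\Delta\bS^{-l_t}_{t,j_t}+\sum_{\ell_p\neq l_t}\Delta\bS^{\ell_p}_{t,j_p}$ hold as a Loewner inequality.

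Next I would bound each buffer against the active local matrix using the lemma. The upload condition \cref{eq:upload_condition} not having triggered at $l_t$ gives $\det(\bS^{l_t}_{t,j_t}+\Delta\bS^{l_t}_{t,j_t})\le U\det(\bS^{l_t}_{t,j_t})$, so $\Delta\bS^{l_t}_{t,j_t}\preceq(U-1)\bS^{l_t}_{t,j_t}$. For the download term, $\bS^g_{t,k_t}=\bS^{l_t}_{t,j_t}+\Delta\bS^{-l_t}_{t,j_t}$ and the download condition \cref{eq:download_condition} gives $\det(\bS^g_{t,k_t})\le D\det(\bS^{l_t}_{t,j_t})$, hence $\Delta\bS^{-l_t}_{t,j_t}\preceq(D-1)\bS^{l_t}_{t,j_t}$ and consequently $\bS^g_{t,k_t}\preceq D\,\bS^{l_t}_{t,j_t}$. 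For each remote buffer the upload condition at $\ell_p$ yields $\Delta\bS^{\ell_p}_{t,j_p}\preceq(U-1)\bS^{\ell_p}_{t,j_p}$, and since every local copy's data is contained in the global one, $\bS^{\ell_p}_{t,j_p}\preceq\bS^g_{t,k_t}\preceq D\,\bS^{l_t}_{t,j_t}$, so $\Delta\bS^{\ell_p}_{t,j_p}\preceq(U-1)D\,\bS^{l_t}_{t,j_t}$.

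Finally I would add the four pieces to obtain $\bV_{t,j}\preceq\big[1+(U-1)+(D-1)+(L'-1)(U-1)D\big]\bS^{l_t}_{t,j_t}=\big[D(1+(L'-1)(U-1))+U-1\big]\bS^{l_t}_{t,j_t}=\Gamma^2\bS^{l_t}_{t,j_t}$, which is exactly $P_t\le\Gamma$. The main obstacle I anticipate is the perturbation bookkeeping underlying the decomposition and the two containments $\bS^g_{t,k_t}=\bS^{l_t}_{t,j_t}+\Delta\bS^{-l_t}_{t,j_t}$ and $\bS^{\ell_p}_{t,j_p}\preceq\bS^g_{t,k_t}$: the several $\rho\bI$ paddings together with the $\bH,\bar{\bH}$ matrices enter $\bV_{t,j}$ (through $\tilde{\bH}_t$) and the local and global copies differently at phase resets and at each upload, so confirming that the buffers exactly absorb the gap as a Loewner inequality (rather than a naive equality of the data terms) is the delicate part; once that is in place the rest is just three applications of the determinant lemma.
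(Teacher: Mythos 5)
Your proposal is correct and takes essentially the same route as the paper's proof: the identical decomposition of $\bV_{t,j}$ into the active local copy, its own upload buffer, its download buffer (equivalently, the global matrix $\bS^g_{t,k_t}$), and the remote upload buffers, with each piece controlled by the upload/download determinant conditions through the determinant-to-quadratic-form lemma (\citep[Lemma 12]{abbasi2011improved} in the paper, which you re-derive inline via the eigenvalues of $\bA^{-1/2}\bB\bA^{-1/2}$). Recasting the argument in Loewner order rather than quadratic-form ratios is only a repackaging; your chain $\Delta\bS^{\ell_p}_{t,j_p}\preceq (U-1)\bS^{\ell_p}_{t,j_p}\preceq (U-1)\bS^g_{t,k_t}\preceq (U-1)D\,\bS^{l_t}_{t,j_t}$ and the resulting constant $D(1+(L'-1)(U-1))+U-1$ coincide exactly with the paper's computation.
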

\begin{proof}
Without loss of generality, let the local cluster ids are $(l_1, j_1),..., (l_{L'}, j_{L'})$ with $(l_t,j_t)=(l_1, j_1)$ and the global gram matrix is $\bS^{g}_{t,k_t}$, we have
\begin{align}
    \frac{\btheta^{\top} \bV_{t,j} \btheta}{\btheta^{\top} \bS^{l_1}_{t, j_1} \btheta } &= \frac{\btheta^{\top} \bS^{g}_{t, j_t}\btheta + \sum_{k=1}^{L'} \btheta^{\top}\Delta \bS^{l_k}_{j_k,t} \btheta}{\btheta^{\top} \bS^{l_1}_{t, j_1} \btheta}\label{apdx_eq:fully_delay_relation}\\
    &=  \frac{\btheta^{\top} \bS^{g}_{t, k_t}\btheta}{\btheta^{\top} \bS^{l_1}_{t, j_1}\btheta}  + \frac{\btheta^{\top} \Delta \bS^{l_1}_{t,j_1} \btheta}{\btheta^{\top}\bS^{l_1}_{t, j_1}\btheta} + \sum_{k=2}^{L'}\frac{\btheta^{\top} \Delta \bS^{l_k}_{t,j_k} \btheta}{\btheta^{\top} \bS^{l_k}_{t, j_k}\btheta} \frac{\btheta^{\top} \bS^{l_k}_{t,j_k} \btheta}{\btheta^{\top} \bS^{l_1}_{t, j_1}\btheta}\notag\\
    &\le \frac{\btheta^{\top} \bS^{g}_{t, k_t}\btheta}{\btheta^{\top} \bS^{l_1}_{t, j_1}\btheta} + \frac{\btheta^{\top} \Delta \bS^{l_1}_{t,j_1} \btheta}{\btheta^{\top}\bS^{l_1}_{t, j_1}\btheta} + \sum_{k=2}^{L'}\frac{\btheta^{\top} \Delta \bS^{l_k}_{t,j_k} \btheta}{\btheta^{\top} \bS^{l_k}_{t, j_k}\btheta} \frac{\btheta^{\top} \bS^{l_k}_{t,j_k} \btheta}{\btheta^{\top} \bS^{g}_{t, k_t}\btheta} \label{eq:apdx_pt0}\\
    &\le \frac{\det (\bS^{g}_{t, k_t}) }{\det (\bS^{l_1}_{t, j_1})} + (\frac{\det (\bS^{l_1}_{t, j_1} + \Delta \bS^{l_1}_{t,j_1}) }{\det (\bS^{l_1}_{t, j_1})}-1) + \sum_{k=2}^{L'}(\frac{\det (\bS^{l_k}_{t, j_k} + \Delta \bS^{l_k}_{t,j_k}) }{\det (\bS^{l_k}_{t, j_k})}-1) \frac{\det(\bS^{l_k}_{t, j_k})}{\det(\bS^{g}_{t, k_t})}\label{eq:apdx_pt1}\\
    &\le D + (U-1) + (L'-1)D (U-1)\label{eq:apdx_pt2},
\end{align}

where \Cref{apdx_eq:fully_delay_relation} is due to $\bV_{t,j}=\bS_{t,j_t}^{g}+\sum_{k=1}^{L'} \btheta^{\top}\Delta \bS^{l_k}_{j_k,t} \btheta$,
\Cref{eq:apdx_pt0} is because $\bS_{t,j_1}^{\ell_k}\preccurlyeq \bS_{t,k_t}^g $,
 \Cref{eq:apdx_pt1} is due to the fact that $\sup_{x\neq 0}\frac{\bx^{\top}\bA\bx}{\bx^{\top}\bB\bx} \le \frac{\det(\bA)}{\det(\bB)}$ for any positive semi-definite matrices $\bA, \bB, \bC$ s.t. $\bA=\bB+\bC$ \citep[Lemma 12]{abbasi2011improved} and \Cref{eq:apdx_pt2} is because of the condition of upload \Cref{eq:upload_condition} and download \Cref{eq:download_condition}.

\end{proof}

The following lemma states the confidence interval for each local cluster $(\ell, j)$, which is used by line~\ref{line:recommend} in \cref{alg:main}.
\BetaCi*

\begin{proof}
Let $\tau$ be the time slot when the last upload (or download) event happens before time $t$ and sequence $\{t_1, ..., t_{T^{\ell}_{\tau,j}}\}$ be the time slots when user $i_{t_i}$ that belongs to cluster $j^*$ appears.
Denote the $L'$ clusters that belongs to cluster $j^*$ to be $\{(\ell_1,j_1), ..., (\ell_{L'}, j_{L'})\}$.
With a little abuse of notation, we denote $\sum_{p=1}^{L'}\tilde{\bH}^{\ell_p}_{\tau, j_p}$ and $\sum_{p=1}^{L'}\bar{\bh}^{\ell_p}_{\tau, j_p}$ by $\tilde{\bH}_{\tau}$ and $\bar{\bh}_{\tau}$, respectively.
Recall that by definition, $\bS^{\ell}_{t,j}=\tilde{\bH}_{\tau} + \sum_{k=1}^{T^{\ell}_{t,j}}\bx_{t_k} \bx_{t_k}^{\top}, u^{\ell}_{t,j}=\bar{\bh}_{\tau} + \sum_{k=1}^{T^{\ell}_{\tau,j}}\bx_{t_k} y_{t_k}$.
Then we have 
\begin{align*}
    \btheta^{*}-\hat{\btheta}^{\ell}_{t,j}&=\btheta^{*}-(\bS^{\ell}_{t,j})^{-1}( \sum_{k=1}^{T^{\ell}_{\tau,j}}\bx_{t_k} y_{t_k} + \bar{\bh}_{\tau})\\
    &=\btheta^{*}-(\bS^{\ell}_{t,j})^{-1}( (\sum_{k=1}^{T^{\ell}_{\tau,j}}\bx_{t_k} \bx_{t_k}^{\top})\btheta^* + \sum_{k=1}^{T^{\ell}_{\tau,j}}\bx_{t_k} \eta_{t_k} + \bar{\bh}_{\tau})\\
    &=\btheta^{*}-(\bS^{\ell}_{t,j})^{-1}(\bS^{\ell}_{t,j} \btheta^* - \tilde{\bH}_{\tau} \btheta^*  + \sum_{k=1}^{T^{\ell}_{\tau,j}}\bx_{t_k} \eta_{t_k} + \bar{\bh}_{\tau})\\
    &=(\bS^{\ell}_{t,j})^{-1}( \tilde{\bH}_{\tau} \btheta^*  - \sum_{k=1}^{T^{\ell}_{\tau,j}}\bx_{t_k} \eta_{t_k} - \bar{\bh}_{\tau})
\end{align*}

Then we multiply the both sides by $(\bS^{\ell}_{t,j})^{1/2}$,
\begin{align}
    \norm{\btheta^{*}-\hat{\btheta}^{\ell}_{t,j}}_{\bS^{\ell}_{t,j}} &= \norm{\tilde{\bH}_{\tau} \btheta^*  - \sum_{k=1}^{T^{\ell}_{\tau,j}}\bx_{t_k} \eta_{t_k} - \bar{\bh}_{\tau}}_{(\bS^{\ell}_{t,j})^{-1}}\notag\\
    &\le \norm{\tilde{\bH}_{\tau} \btheta^*}_{(\bS^{\ell}_{t,j})^{-1}} + \norm{\sum_{k=1}^{T^{\ell}_{\tau,j}}\bx_{t_k} \eta_{t_k}}_{(\bS^{\ell}_{t,j})^{-1}} +  \norm{\bar{\bh}_{\tau}}_{(\bS^{\ell}_{t,j})^{-1}}\notag\\
    &\le \norm{\tilde{\bH}_{\tau} \btheta^*}_{\tilde{\bH}_{\tau}^{-1}} + \norm{\sum_{k=1}^{T^{\ell}_{\tau,j}}\bx_{t_k} \eta_{t_k}}_{\left(\sum_{k=1}^{T^{\ell}_{\tau,j}}\bx_{t_k} \bx_{t_k}^{\top}+ L' \rho_{\min}I\right)^{-1} } + \norm{\bar{\bh}_{\tau}}_{\tilde{\bH}_{\tau}^{-1}}\label{eq:apdx_beta_1}\\
    &\le  \sqrt{L'\rho_{\max}} + \sigma_{0}\sqrt{2\log(\frac{2}{\alpha})+d\log(\frac{\rho_{\max}}{\rho_{\min}}+\frac{T^{\ell}_{t,j}}{dL'\rho_{\min}})}  +\sqrt{L'}\kappa\label{eq:apdx_beta_2}
\end{align}
where \Cref{eq:apdx_beta_1} uses $\bS^{\ell}_{t,j}=\tilde{\bH}_{\tau}+\sum_{k=1}^{T^{\ell}_{\tau,j}}\bx_{t_k} \bx_{t_k}^{\top}$ so we have $\tilde{\bH}_{\tau} \preceq \bS^{\ell}_{t,j}$ and under event $B_0(\alpha)$ we have  $  \sum_{k=1}^{T^{\ell}_{\tau,j}}\bx_{t_k} \bx_{t_k}^{\top} + L' \rho_{\min}I \preceq \bS^{\ell}_{t,j}$, \Cref{eq:apdx_beta_2} uses the \Cref{def:accurate} and the self-normalized bound in \citep[Theorem 2]{abbasi2011improved}.

\end{proof}

\subsection{Putting All Together}\label{apdx_sec:put_together}
\mainRegret*

\begin{proof}
Since $\norm{x}\le 1, \norm{\btheta}\le 1$, by \cref{cdp_to} the regret before all global clusters are correctly identified is $2T_0(\alpha) \cdot 1$, with probability at least $1-6\alpha$. After $2T_0(\alpha)$, by \cref{apdx_eq:prop1} the regret is upper bounded by $\tilde{O}\Big(dL\sqrt{mT\frac{\log(1/\delta)}{\varepsilon}}\log^{1.5} T \Big)$, with probability $1-2\alpha$. Finally, it suffices to set $\alpha = 1/(8T)$ and by union bounds, the above theorem holds with probability at least $1-1/T$.
\end{proof}

\section{Communication cost analysis}\label{sec:communication_cost}
In this section, we show how to upper bound the number of communication rounds before and after the cluster structures are detected correctly.

\begin{proposition}\label{communication cost}
Under the CDP setting, the total communication cost satisfies:
\begin{equation}
    C(T) \le mL\left(\log T + \frac{d\log(\frac{\rho_{\max}}{\rho_{\min}}+\frac{T}{d\rho_{\min}})}{ \log(\min\{U,D\})} +\frac{d\log(\frac{\rho_{\max}}{\rho_{\min}}+\frac{2T_0}{d\rho_{\min}})\log(2T_0)}{ \log(\min\{U,D\})}\right)
\end{equation}
\end{proposition}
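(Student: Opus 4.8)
The plan is to decompose the total communication cost $C(T)$ into three contributions and bound each separately: (i) the once-per-phase uploads used for global cluster detection in \Cref{alg:cluster_detect}; (ii) the asynchronous upload/download events triggered \emph{after} the clusters have been correctly and permanently partitioned (after round $2T_0$); and (iii) the asynchronous events triggered \emph{before} round $2T_0$, when the cluster structure may still change at every phase boundary. For (i), each phase begins at $t=2^s-1$, so there are at most $\lceil\log_2 T\rceil$ phases, and in each phase every one of the (at most) $mL$ local clusters performs exactly one detection upload in line~\ref{line:detect_cluster}; this contributes at most $mL\log T$, the first term.

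The heart of the argument is a determinant-potential bound for the asynchronous protocol. Fix a local cluster $(\ell,j)$ within a stretch of phases where the global structure $P_s$ does not change. I claim that every communication event involving $(\ell,j)$ multiplies $\det(\bS^{\ell}_{t,j})$ by at least $\min\{U,D\}$. For an upload, \Cref{eq:upload_condition} gives $\det(\bS^{\ell}_{t,j}+\Delta\bS^{\ell}_{t,j})\ge U\det(\bS^{\ell}_{t,j})$, and since the upload sets $\bS^{\ell}_{t,j}\mathrel{{+}{=}}\Delta\bS^{\ell}_{t,j}$ (line~\ref{line:upload_update_local}), the new determinant is at least $U$ times the old one. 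For a download, the buffer bookkeeping in \Cref{alg:check_upload,alg:check_download} yields $\bS^{\ell}_{t,j}+\Delta\bS^{-\ell}_{t,j}=\bS^{g}_{t,k_t}$ at the moment of download (the local copy plus all not-yet-downloaded uploads from the other members of the global cluster reconstructs the global statistic), so after the update $\bS^{\ell}_{t,j}\mathrel{{+}{=}}\Delta\bS^{-\ell}_{t,j}$ the new determinant equals $\det(\bS^{g}_{t,k_t})$, which by \Cref{eq:download_condition} is at least $D$ times the old one. Since both events add PSD matrices, $\log\det(\bS^{\ell}_{t,j})$ is nondecreasing and each event increases it by at least $\log\min\{U,D\}$.

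Next I bound the admissible total growth of $\log\det(\bS^{\ell}_{t,j})$ within one fixed structure. At the start the local matrix equals the merged global statistic, whose spectrum is at least $\rho_{\min}$ by \Cref{def:accurate}, so $\det(\bS^{\ell}_{\text{init}})\ge\rho_{\min}^{d}$. At the end it consists of the perturbation (operator norm at most $\rho_{\max}$) plus at most $T$ rank-one terms $\bx_\tau\bx_\tau^{\top}$ with $\norm{\bx_\tau}\le 1$, so $\mathrm{trace}(\bS^{\ell}_{\text{final}})\le d\rho_{\max}+T$ and the determinant–trace inequality gives $\det(\bS^{\ell}_{\text{final}})\le(\rho_{\max}+T/d)^{d}$. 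Hence the number of asynchronous events for $(\ell,j)$ is at most $d\log\!\big(\tfrac{\rho_{\max}}{\rho_{\min}}+\tfrac{T}{d\rho_{\min}}\big)/\log(\min\{U,D\})$; summing over all $mL$ local clusters gives the second term. For part (iii), before round $2T_0$ the structure may be reset at each of the at most $\log(2T_0)$ phase boundaries, and each reset restarts the potential argument with horizon $2T_0$ in place of $T$; multiplying the per-structure, per-cluster bound $d\log\!\big(\tfrac{\rho_{\max}}{\rho_{\min}}+\tfrac{2T_0}{d\rho_{\min}}\big)/\log(\min\{U,D\})$ by $\log(2T_0)$ phases and $mL$ clusters yields the third term. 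Adding the three contributions proves the claim.

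The step I expect to be the main obstacle is the download identity $\bS^{\ell}_{t,j}+\Delta\bS^{-\ell}_{t,j}=\bS^{g}_{t,k_t}$: it requires carefully tracking the upload and download buffers through lines~\ref{line:upload_update_obs}--\ref{line:upload_clear} of \Cref{alg:check_upload} to confirm that the local copy augmented by its download buffer exactly reconstructs the global statistic, so that each download is genuinely charged a factor $D$ in the determinant. A secondary care point is matching the perturbation swaps to the spectral bounds $\rho_{\min},\rho_{\max}$ of \Cref{def:accurate} when lower- and upper-bounding $\det(\bS^{\ell}_{t,j})$, since the local and global perturbations need not coincide and the identity must be interpreted up to these $(\varepsilon,\delta)$-dependent spectral ranges.
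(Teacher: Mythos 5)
Your proof is correct and takes essentially the same route as the paper's: the same three-way decomposition (per-phase detection uploads giving $mL\log T$, asynchronous events after $2T_0$, and asynchronous events before $2T_0$ restarted at each of the $\log(2T_0)$ phase boundaries), and the same determinant-potential argument in which every upload/download multiplies $\det(\bS^{\ell}_{t,j})$ by at least $\min\{U,D\}$ while the total growth of $\log\det$ is bounded by $d\log\bigl(\tfrac{\rho_{\max}}{\rho_{\min}}+\tfrac{T}{d\rho_{\min}}\bigr)$. The only difference is presentational: you justify the per-download determinant gain via the buffer identity $\bS^{\ell}_{t,j}+\Delta\bS^{-\ell}_{t,j}=\bS^{g}_{t,k_t}$ and the determinant upper bound via the trace/AM--GM inequality, where the paper asserts the former directly from the event definitions and cites \citep[Lemma 22]{shariff2018differentially} for the latter.
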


\begin{proof}
We consider two cases, when $t > 2T_0$ and $t\le 2T_0$.

\textbf{Case 1: $t > 2T_0$.}

Recall that when $t > 2 T_0$, the cluster structure are correct and there will be no changing of clusters afterwards.
Fix any local cluster $j$ at local server $\ell$, suppose it shares information with total $L'$ clusters (including itself).
Let $\{t_{1}, t_{2}, ...t_{n'}\}$ be the sequence of time when upload event or download event happen and $\{\bS^{\ell}_{t_1, j}, \bS^{\ell}_{t_2, j}, ...,\bS^{\ell}_{t_n',j}\}$ be the corresponding sequence of local Gram matrix after each event happens. Similar to the proof of 

Then we have:
\begin{equation}
    \log(\frac{\det (\bS^{\ell}_{t_n',j})}{\det(\bS^{\ell}_{t_1, j}) }) = \log(\frac{\det(\bS^{\ell}_{t_2, j})}{\det(\bS^{\ell}_{t_1, j})}) + \log(\frac{\det(\bS^{\ell}_{t_3, j})}{\det(\bS^{\ell}_{t_2, j})}) + ... + \log(\frac{\det(\bS^{\ell}_{t_{n'}, j})}{\det(\bS^{\ell}_{t_{n'-1}, j})}) \le \log(\frac{\det (\bV_{T})}{\det (L'\rho_{0, \min}I)})
\end{equation}

On one hand, by the definition of the upload and download event, we have $\log(\frac{\det(\bS^{\ell}_{t_{k+1}, j})}{\det(\bS^{\ell}_{t_k, j})}) \ge \log(\min\{U,D\})$ for any $k \in [n'-1]$.
On the other hand, by \citep[Lemma 22]{shariff2018differentially}, $\log(\frac{\det (\bV_{T,j^*})}{\det (L'\rho_{\min}I)}) \le d\log((L'd\rho_{\max}+T)/d)-d\log(L'\rho_{\min})=d\log(\frac{\rho_{\max}}{\rho_{\min}}+\frac{T}{dL'\rho_{\min}})\le d\log(\frac{\rho_{\max}}{\rho_{\min}}+\frac{T}{d\rho_{\min}})$, where the last inequality uses the fact that $L'\ge 1$.
So $n' \le \frac{d\log(\frac{\rho_{\max}}{\rho_{\min}}+\frac{T}{d\rho_{\min}})}{ \log(\min\{U,D\})}$.
Since we at most have $mL$ local clusters, the total communication is $ \frac{mLd\log(\frac{\rho_{\max}}{\rho_{\min}}+\frac{T}{d\rho_{\min}})}{ \log(\min\{U,D\})}$.

\textbf{Case 2: $t < 2T_0$.}
Equivalently, we are in phase $s \le \lceil \log T_0 \rceil$, then each phase we will communicate $\frac{mLd\log(\frac{\rho_{\max}}{\rho_{\min}}+\frac{2T_0}{d\rho_{\min}})}{ \log(\min\{U,D\})}$ using the similar argument.
Then the total communication is $\frac{mLd\log(\frac{\rho_{\max}}{\rho_{\min}}+\frac{2T_0}{d\rho_{\min}})\log(2T_0)}{ \log(\min\{U,D\})}$.

For the communication during the beginning of each phase in order to detect the cluster structure, we know there will be at most $mL \log T$ communications, which concludes the lemma.

\end{proof}

\section{Privacy Guarantee}\label{sec:privacy_proof}
To guarantee the algorithm preserves the CDP, we use a tree based algorithm where each tree node preserves $(\varepsilon/\sqrt{8\nu \ln(2/\delta)}, \delta/2)$-DP. For each local cluster $(\ell,j)$, the local privatizer PVT($\ell,j$) is $(\varepsilon, \delta)$-DP and together they preserve $(\varepsilon, \delta, L, m)$-CDP as in \cref{def:cdp}. Also note that our protocol only uploads the protected clustered data rather than each user's individual data, which satisfies the privacy requirements in \cref{sec: problem setting}.

\section{Experiment Settings and Supplemental Experiments}\label{sec:Supplemental_Experiments}
\subsection{Experiment Settings}\label{sec:exp_setting}
\textbf{Synthetic Dataset.}
We consider a setting of $n=40$ users with $m = 4$ global clusters and $L = 5$ local servers. The weight vectors and item vectors are generated randomly with dimension $d = 10$. For each global cluster $j\in[m]$, we fix a weight vector $\btheta_j$ and we guarantee that all weight vectors are orthogonal and the difference gap $\gamma$ between them is $\sqrt{2}$. We stipulate that the default privacy budget is $(\epsilon,\delta)=(1,0.1)$ and communication upload/download rate $U=D=1.01$. In each round, a random user comes and the algorithm selects one of $K=10$ items to recommend to the user. After we receive the feedback and compare it with the best action based on the true weight vectors, we evaluate the performance of the algorithm by calculating the cumulative regret.

\textbf{MovieLens Dataset.} We randomly draw $10^3$ movies and $10^3$ users as preparation for the experiment. Similar to \citet{li2018online}, we generate the weight vectors on the basis of the $10^3$ users and the movies they rated. We randomly select $40$ users from these $10^3$ users for the experiment. In this experiment, we assume that there are $n=40$ users with $m = 4$ global clusters and $L = 5$ local servers and the weight vectors and item vectors are generated randomly with dimension $d = 10$. The users are selected randomly from these $10^3$ users for the experiment. Besides, we stipulate that the default privacy budget is $(\epsilon,\delta)=(1,0.1)$ and communication upload/download rate $U=D=1.01$.
\subsection{Parameter Study}\label{sec:exp_pstudy}
We conduct the parameter study on a synthetic dataset. Our goal in this section is to numerically investigate the influence of parameters' change on the performance of our CDP-FCLUB-DC algorithm. 
\begin{figure}[H]
 \centering
 \begin{subfigure}[b]{0.435\textwidth}
  \centering
  \includegraphics[width=\textwidth]{./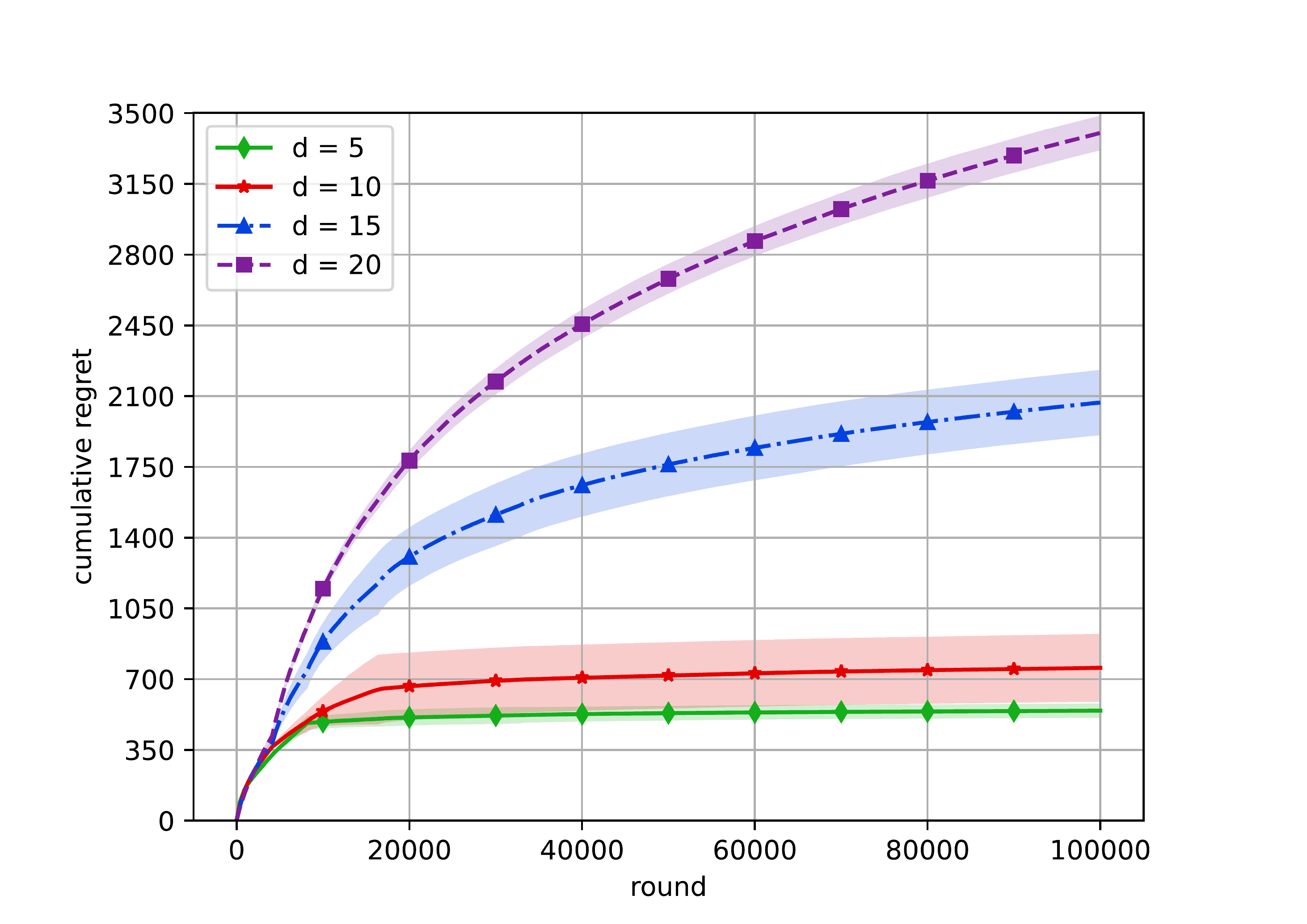}
  \caption{Vary Dimension $d$ and Fix Other Variables }
  \label{fig: d}
 \end{subfigure}
 \hfill
 \begin{subfigure}[b]{0.435\textwidth}
  \centering
  \includegraphics[width=\textwidth]{./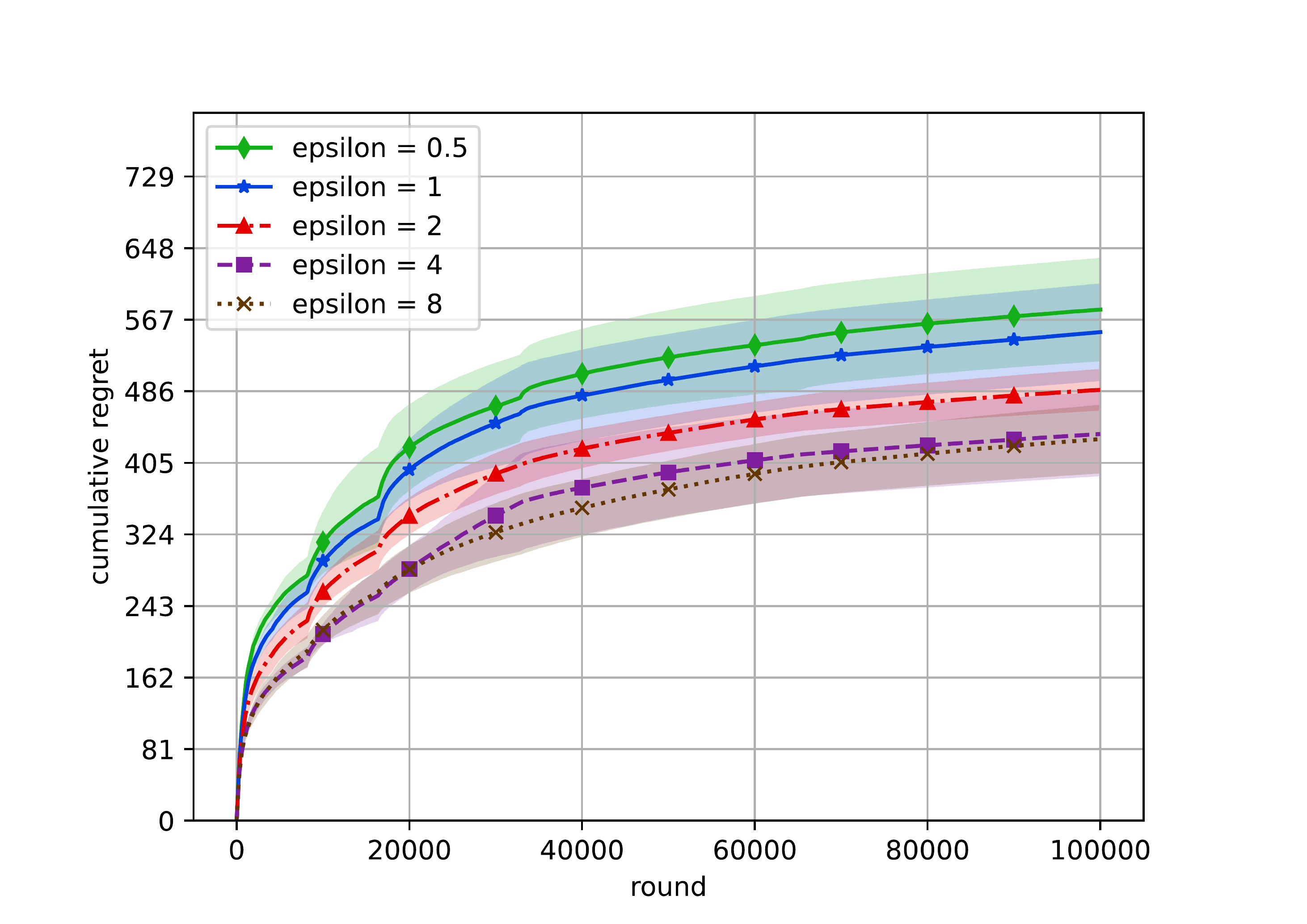}
  \caption{Vary Privacy Budget $\epsilon$ and Fix Other Variables}
  \label{fig: epsilon}
 \end{subfigure}
 \begin{subfigure}[b]{0.435\textwidth}
  \centering
  \includegraphics[width=\textwidth]{./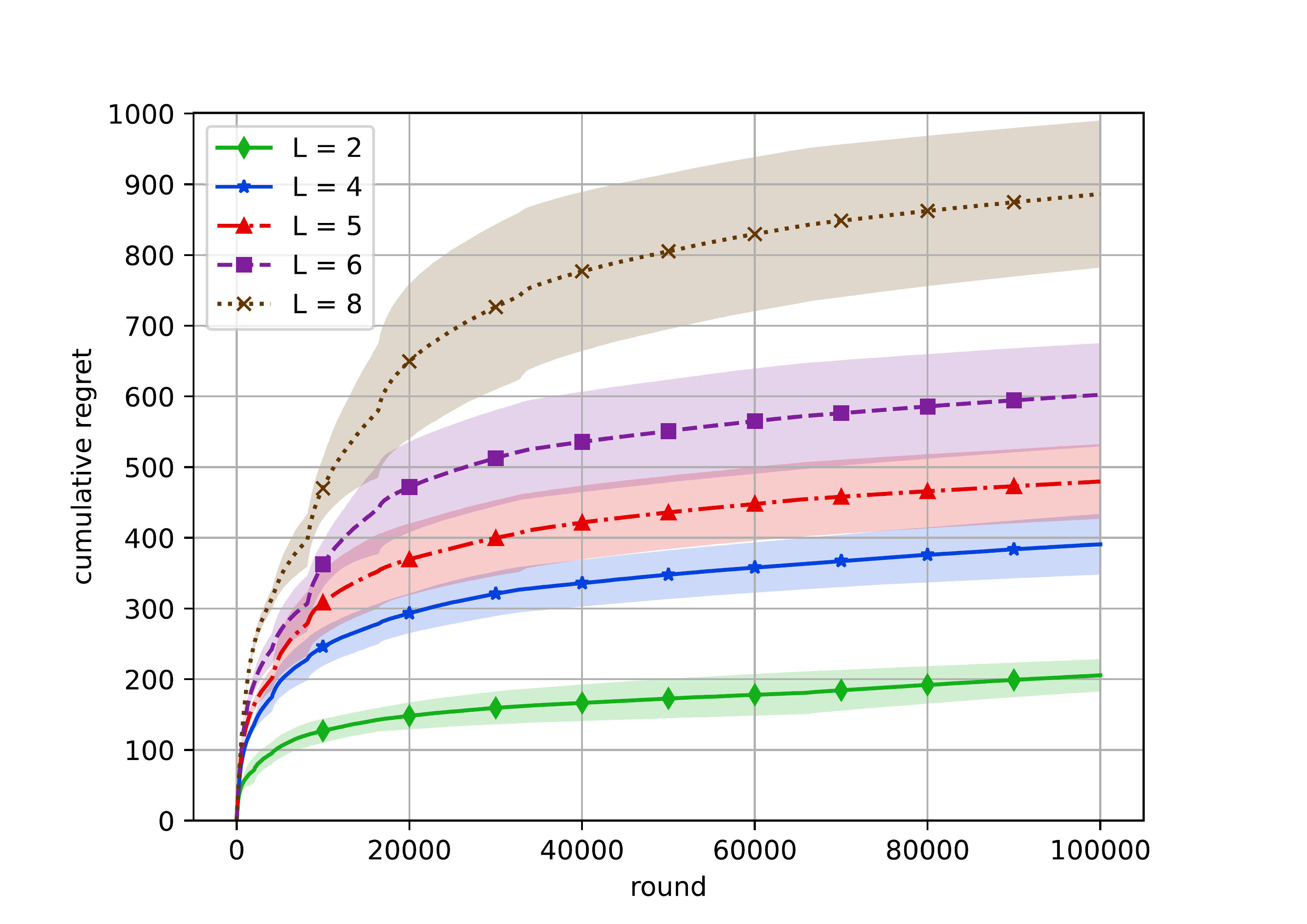}
  \caption{Vary the Number of Local Servers $L$ and Fix Other Variables}
  \label{fig:L}
 \end{subfigure}
 \hfill
 \begin{subfigure}[b]{0.435\textwidth}
  \centering
  \includegraphics[width=\textwidth]{./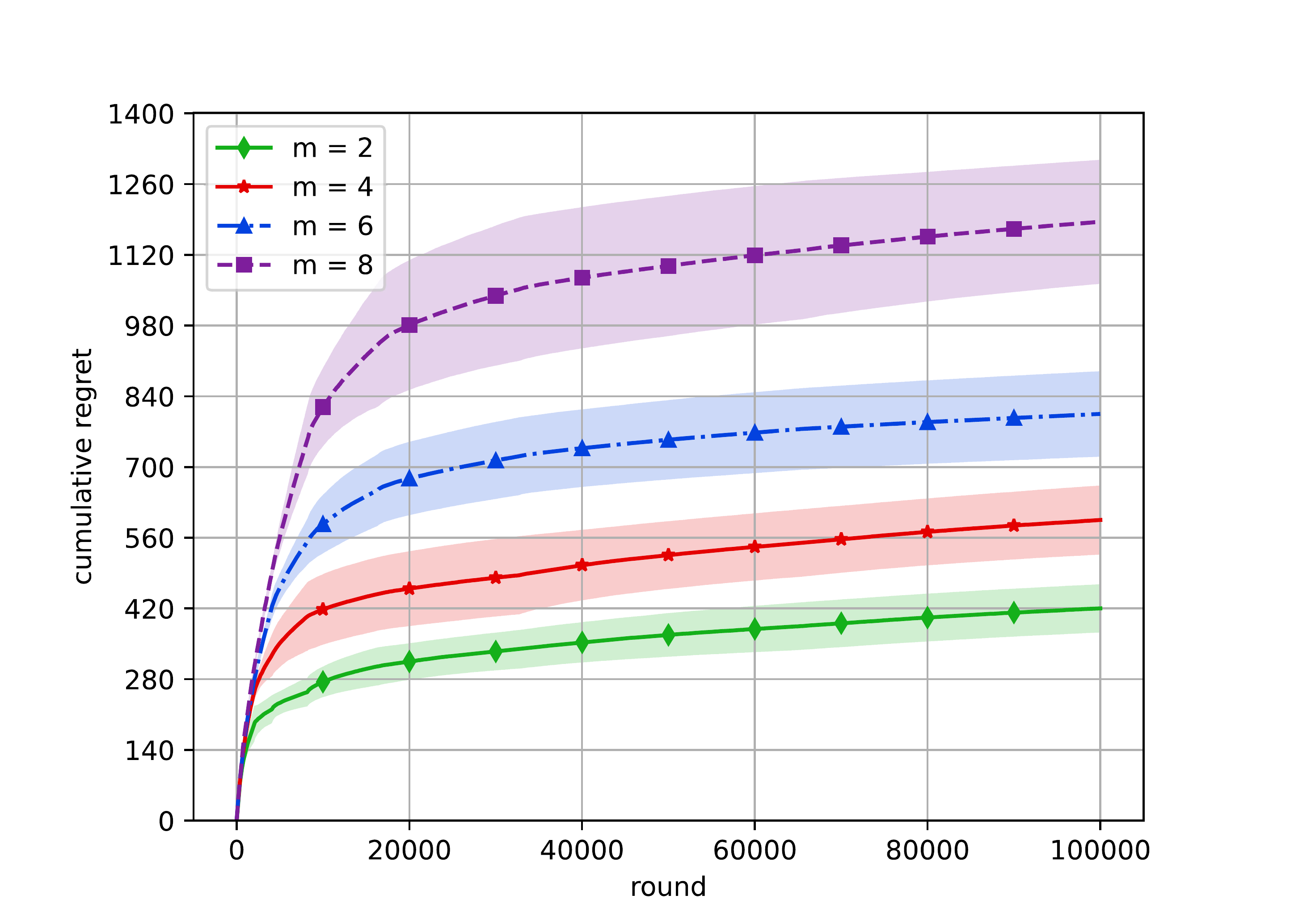}
  \caption{Vary the Number of Global Cluster $m$ and Fix Other Variables}
  \label{fig: m}
 \end{subfigure}
  \begin{subfigure}[b]{0.435\textwidth}
  \centering
  \includegraphics[width=\textwidth]{./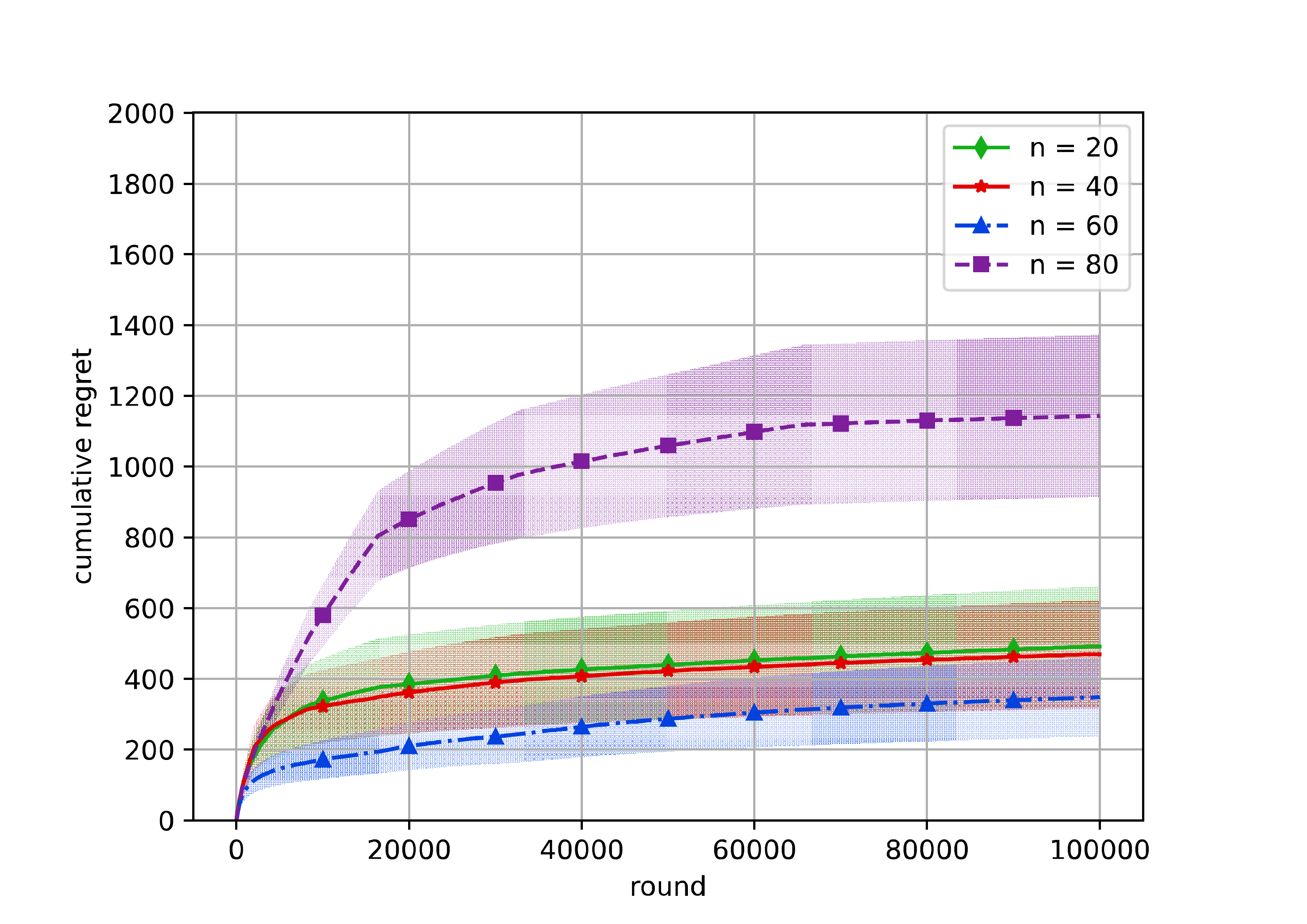}
  \caption{Vary the Number of users $n$ and Fix Other Variables}
  \label{fig: n}
 \end{subfigure}
 \caption{Parameter Study on the Synthetic Dataset}\label{fig:compare1}
\end{figure}
In Figure \ref{fig: d}, we vary the dimension $d$ from 5 to 20 and fix other parameters. It can be seen that with the increase of dimension $d$, the convergence speed of regret will slow down. In Figure \ref{fig: epsilon}, we change the privacy budget $\epsilon$ from 0.5 to 8. It is obvious that when $\epsilon$ is small, the cumulative regret is better because at this time the privacy will have less impact on the recommendation. In Figure \ref{fig:L} and Figure \ref{fig: m}, we vary the number of local server $L$ from 2 to 8 and the number of global cluster $m$ from 2 to 8 respectively. The results show that the value of $L$ and $m$ are positively correlated with the cumulative regret, which is consistent with our conclusion in Theorem \ref{thm:mainRegret}. For the number of users $n$, Figure \ref{fig: n} shows that the empirical results deviate from \Cref{thm:mainRegret} since the regret should be larger when there are more users. As we can see that the shaded area almost overlapped for $n=20,40,60$, we conjecture the derivation comes from the randomness of our algorithm and these curves should behave normally when we conduct more independent experiments.

\subsection{communication cost}\label{sec:exp_communication_cost}
\begin{figure}[H]  
\centering
\begin{subfigure}[b]{0.465\textwidth}
\centering
\includegraphics[width=\textwidth]{./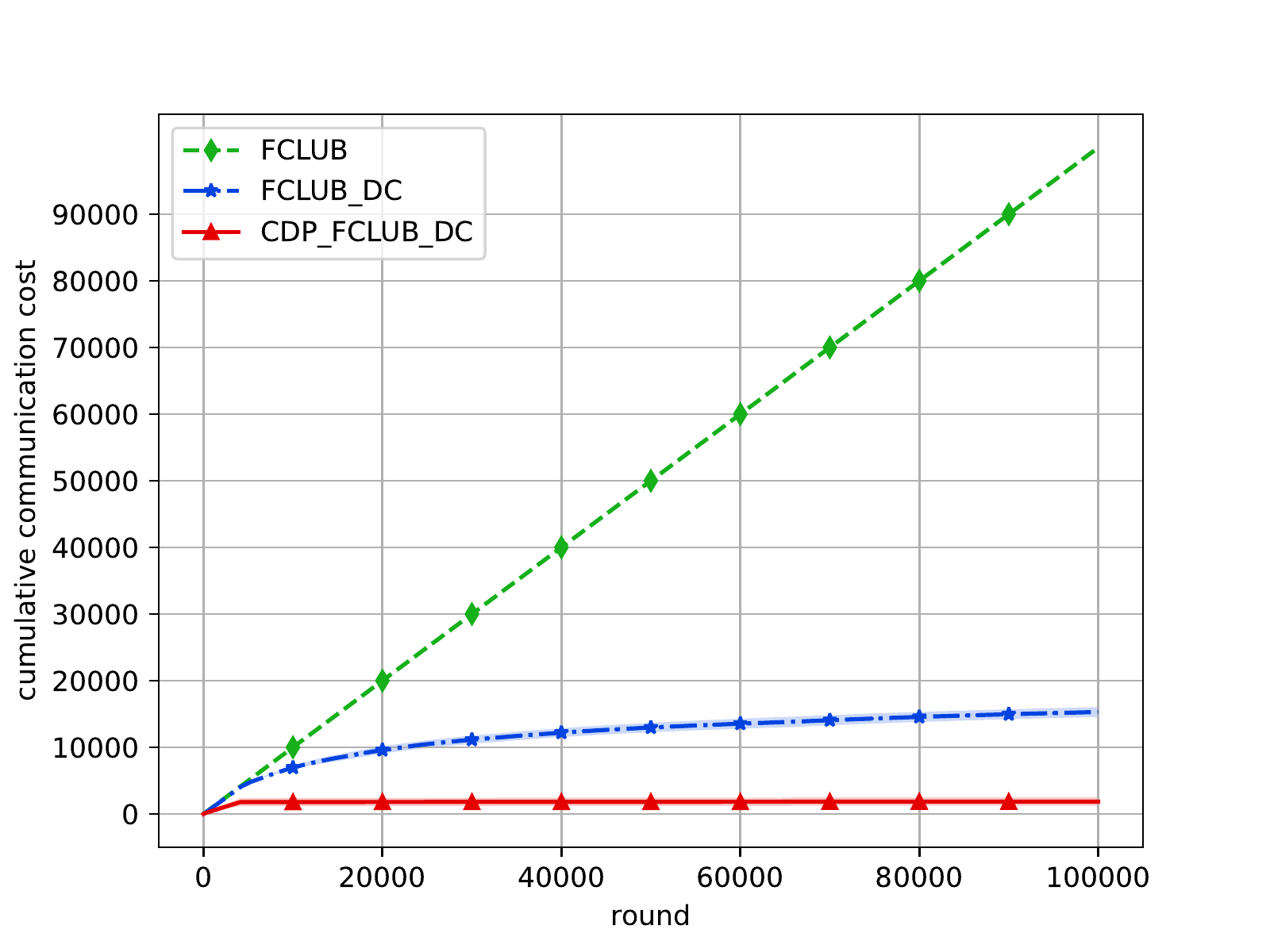}
\caption{Communication Cost for the Synthetic Dataset}
\end{subfigure}
\begin{subfigure}[b]{0.465\textwidth}
\centering
\includegraphics[width=\textwidth]{./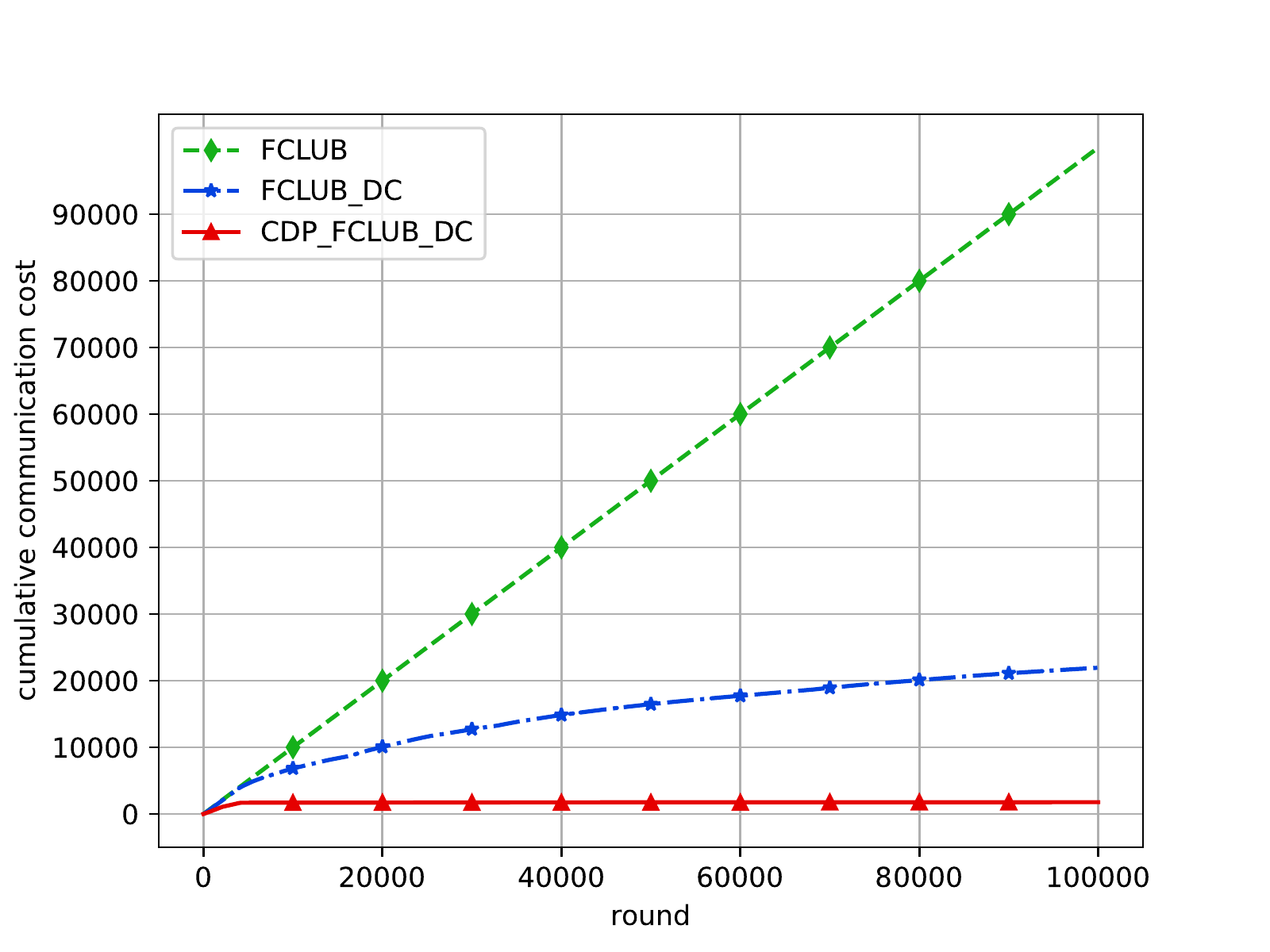}
\caption{Communication Cost for the MovieLens Dataset}
\end{subfigure}
\caption{Comparative Experiments on the Communication Costs}
\label{fig:communication costs}
\end{figure}
Communication cost is one of the most important factors we consider when designing the algorithm. In Figure \ref{fig:communication costs}, we compare the communication cost of FCLUB using full communication with that of FCLUB-DC and CDP-FCLUB-DC using the delayed communication. The results clearly indicate that our asynchronous communication protocol can reduce the communication cost effectively.
\subsection{Running Time}\label{sec:exp_running_time}
\begin{table}[!h]
\centering
\caption{Comparison of the Running Time}
\begin{tabular}{c|cccccccc}
\hline
          & CDP-FCLUB-DC\;& FCLUB-DC\;& FCLUB\;& SCLUB\;& CLUB\;& LinUCB\;& Homo-DC\;& Homo\\  
\hline          
run time (ms)      & 1.707
\;  & 1.814  \; & 58.870
\;  & 3.805 \; \;  & 0.779
\; & 0.647   \; & 1.277
\; & 29.265
\\
run time (ms)  &1.667\; &1.892\; &83.492\; &3.855 \; &0.776 \; &0.654 \; &1.301 \; &35.645\\ 
\hline
\end{tabular}
\label{tab:runtime}
\end{table}
We also compare the average running time (ms) of each round between our algorithm CDP-FCLUB-DC and baselines on synthetic dataset and MovieLens dataset (the first line is synthetic dataset and the second line is MovieLens dataset). It can be seen from Table \ref{tab:runtime} that due to the use of delay communication, our algorithm has a great improvement in communication cost compared with FCLUB and our run time cost is even lower than SCLUB. However, because of the existence of communication, our run time cost is still higher than CLUB and LinUCB.

\end{document}